\documentclass[twoside]{article}

\usepackage[accepted]{aistats2026}
\usepackage{times}  
\usepackage{helvet}  
\usepackage{courier}  
\usepackage[hyphens]{url}  
\usepackage{graphicx} 
\usepackage{natbib}  
\usepackage{caption} 
\usepackage{algorithm}
\usepackage{algorithmic}

\usepackage{booktabs} 
\usepackage{tikz} 

\newcommand{\removed}[1]{}
\usepackage{times}
\usepackage{soul}
\usepackage{url}
\usepackage{hyperref}
\usepackage[utf8]{inputenc}
\usepackage{graphicx}
\usepackage{amsmath}
\usepackage{amsthm}
\usepackage{booktabs}
\usepackage{algorithm}
\usepackage{algorithmic}
\usepackage{stackengine}
\def\defeq{\mathrel{\ensurestackMath{\stackon[1pt]{=}{\scriptscriptstyle\Delta}}}}

\usepackage{algorithm}
\usepackage{algorithmic}

\usepackage{times}

\usepackage{url}

\usepackage{amsmath}

\usepackage{amssymb}
\usepackage{mathtools}
\usepackage{amsthm}

\usepackage{algorithmic}

\usepackage{lscape}
\usepackage[capitalize,noabbrev]{cleveref}

\theoremstyle{plain}

\usepackage[textsize=tiny]{todonotes}
\usepackage{multirow}

\usepackage{ascmac}
\usepackage{float}
\usepackage{perpage}
\MakeSorted{figure}
\MakeSorted{table}

\usepackage{url}
\usepackage{natbib}
\usepackage{chapterbib}

\usepackage{color}
\usepackage{tikz}
\tikzset{%
mynode/.style={circle,minimum width=.5ex, fill=none,draw}, 
myfillnode/.style={circle,minimum width=.5ex, fill=lightgray,draw}, 
}

\usepackage{pgfplots}
\pgfplotsset{compat=newest}

\usepackage{amssymb}
\usepackage{natbib}

\newcommand{\indep}{\perp \!\!\! \perp}
\usepackage{amsmath}               
\usepackage{lscape}
\usepackage{algorithm}
\usepackage{color}
\usepackage{tikz}
\usepackage{amsmath,amsthm}
\newtheorem{theorem}{Theorem}

\newtheorem{assumption}{Assumption}
\newtheorem{lemma}{Lemma}
\newtheorem{proposition}{Proposition}

\usepackage{multirow}
\usepackage{comment}
\usepackage{here}
\allowdisplaybreaks[4]
\usepackage{bbm}
\usepackage{caption}
\usepackage{bbding}
\usepackage{arydshln}
\usepackage{afterpage}

\usepackage{enumitem}

\usepackage{mathrsfs}

\usepackage{soul}

\usepackage{float}

\usepackage{amsmath}
\usepackage{amsthm}
\usepackage{amssymb}
\usepackage{mathtools}

\usepackage{xcolor}

\begin{document}

%
\runningtitle{Bounds and Identification of Joint POs/OVs Probabilities under Monotonicity Assumptions}

%

\runningauthor{Naoya Hashimoyo, Yuta Kawakami, Jin Tian}

\twocolumn[

\aistatstitle{Bounds and Identification of Joint Probabilities of Potential Outcomes and Observed Variables under Monotonicity Assumptions}

\aistatsauthor{ Naoya Hashimoto\NoHyper\footnotemark\endNoHyper \And Yuta Kawakami\NoHyper\footnotemark[\value{footnote}]\endNoHyper \And  Jin Tian }

\aistatsaddress{Mohamed bin Zayed University of Artificial Intelligence}]

\footnotetext{Equal contribution.}

\begin{abstract}

Evaluating  joint probabilities of potential outcomes and observed variables, and their linear combinations, is a fundamental challenge in causal inference. This paper addresses the bounding and identification of these probabilities in settings with discrete treatment and discrete ordinal outcome. We propose new families of monotonicity assumptions  and formulate the bounding problem as a linear programming problem. We further introduce a new monotonicity assumption specifically to achieve identification. 
Finally, we present numerical experiments to validate our methods and demonstrate their application using real-world datasets.
\end{abstract}


\section{INTRODUCTION}

\begin{table*}[tb]
    \centering
    \caption{Summary of monotonicity assumptions (MAs) in causal inference. The symbol ($\checkmark$) indicates that the objective quantity is identified under the MA.
    $\lambda({\cal P})$ represents an arbitrary linear combination of the joint POs/OVs probabilities.}
    \label{tab:my_label1}
    \vspace{-0.2cm}
    \scalebox{0.9}{
    \begin{tabular}{c|ccccc}
    \hline
        Research &  Treatment Type &  Outcome Type & MA  &  Objective & Identification \\
        \hline\hline
        \citet{Pearl1999}; \citet{Tian2000} & Binary &  Binary &   $\mathbb{P}(Y_0=1,Y_1=0)=0$ & PoC &  $\checkmark$ \\
        \citet{Mueller2023} & Binary &  Binary & $0\leq \mathbb{P}(Y_0=1,Y_1=0)\leq \epsilon$ & PoC &   \\
        \citet{Zhang2025} & Binary &  Ordinal & $Y_0 \leq Y_1$ & PN &   \\
        \citet{Zhang2025} & Binary &  Ordinal & $0\leq Y_1 - Y_0 \leq 1$ & PN & $\checkmark$  \\
        \citet{Manski1997} & Discrete &  General & $Y_0\leq Y_1 \leq \dots \leq Y_{d_X-1}$ & ACE &   \\
        \hline
        This paper & Discrete &  Ordinal & Assumptions~\ref{weak_mono-bi} $\sim$  \ref{assum_mono_with_prob_s} & $\lambda({\cal P})$  &   \\
        This paper & Discrete &  Ordinal & Assumption \ref{MITE_D} & $\lambda({\cal P})$  &  $\checkmark$ \\
         \hline
    \end{tabular}
    }
\end{table*}

One of the fundamental challenges in causal inference is the evaluation of joint probabilities of potential outcomes (POs) and observed variables (OVs) and their linear combinations.
Let the treatment ($X$) take values in $\{0, 1, \ldots, d_X-1\}$ and the outcome ($Y$) take values in $\{0, 1, \ldots, d_Y-1\}$, and let $Y_x$ denote the PO under treatment $x$. 
The joint POs/OVs probabilities $\mathbb{P}(Y_0=y_0,Y_1=y_1,\dots ,Y_{d_X-1}=y_{d_X-1},X=x,Y=y)$
 and their linear combinations 
encompass various causal quantities of practical interest, including, among others,   
(i) the probabilities of causation (PoC) \citep{Pearl1999,Tian2000,ALi2024}, which measure the necessity and sufficiency of the treatment;
(ii) the posterior causal effects \citep{Lu2022}, which measure causal effects given evidence on post-treatment variables; and
(iii) the moments of causal effects \citep{Kawakami2025_moments}, which capture the shape of the distribution of causal effects.

However, the joint POs/OVs probabilities 
are not identifiable from observational data without additional assumptions; while bounds can be derived, they are often too wide to be informative.
Their evaluation positions the problem at the highest third 
layer of Pearl’s causal hierarchy \citep{Pearl2018,Bareinboim2022}.
To address this issue, various monotonicity assumptions (MAs) on POs have been actively studied.

\citet{Pearl1999} and \citet{Tian2000} 
provided an identification theorem for PoC under the MA that no subjects satisfy $(Y_0=1,Y_1=0)$ (combined with the exogeneity assumption)  when the treatment and outcome are binary.
This MA is a natural and reasonable assumption in a wide range of empirical studies \citep{Hernan2024}.
In the instrumental variable (IV) literature, this MA is often invoked to describe the relationship between the binary IV ($Z$) and treatment ($X$) for the identification of local average treatment effects (LATE) \citep{Imbens1994,Angrist1995}. 
Recently, \citet{Mueller2023,Mueller2025} 
introduced a probability-limited MA, 
termed $\epsilon$-limited harm, 
that allows subjects with $(Y_0=1,Y_1=0)$ with probability at most $\epsilon$, under which 
they provided closed-form bounds for 
PoC.

For binary treatment and discrete ordinal outcome, \citet{Zhang2025} studied the bounds for the probability of necessity (PN) under an extended MA for discrete outcome, $Y_0\leq Y_1$, and further proposed a new increment-limited  
MA $0 \leq Y_1-Y_0 \leq 1$, termed the monotonic incremental treatment effect (MITE), for identifying PN. 
For discrete treatment settings, \citet{Manski1997} has introduced ``Monotone Treatment Response'' (MTR) assumption ($Y_0\leq Y_1 \leq \dots \leq Y_{d_X-1}$) to bound the average causal effect (ACE) $\mathbb{E}[Y_i-Y_j]$.

In this paper, we introduce new families of MAs 
in settings with discrete treatment and ordinal outcome.
Our work unifies the existing MAs 
into a single general MA, which accommodates multiple probabilistic and increment-limited relationships.
Table \ref{tab:my_label1} summarizes the MAs in the literature. 

Under our MAs, we address the problem of bounding the joint POs/OVs probabilities and their linear combinations.
We formulate the bounding problem using a linear programming (LP) approach \citep{Balke1995,Balke1997,Tian2000,Duarte2024},  incorporating new linear constraints implied by our new MAs.
We discuss the estimation of the bounds from finite-sample data and show numerical experiments to validate our method.

We further provide an MA assumption for identifying joint POs/OVs probabilities (and their linear combinations) from observational or experimental data. We show numerical experiments to validate their estimation from finite-sample data.

Finally, we demonstrate the application of our methods on real-world datasets.

\section{MOTIVATING EXAMPLE}
\label{sec2}

\label{sec: motivation}

We motivate our study using a real-world example.

\textbf{Dataset.} 
We consider an open-access dataset on student performance in two Portuguese schools, available at (\url{https://archive.ics.uci.edu/dataset/320/student+performance}). Students in secondary education are tested annually over three years, yielding three total assessments.
The observational dataset includes attributes related to demographics, social factors, school-related features, and student academic performance.
The sample size is 649, with no missing values. 

\textbf{Variables.}
We define study hours as the treatment variable $X$, where $X=0,1,2,3$ denotes studying less than 2 hours,  2 to 5 hours, 5 to 10 hours, or more than 10 hours weekly, respectively.
We consider the Portuguese language score in the final period  as the outcome variable ($Y$), and categorize it as $Y=0$ if the score is between 0 and 8, $Y=1$ if the score is between 9 and 15, and $Y=2$ if the score is between 16 and 20, corresponding to low, medium, and high score categories, respectively.

To assess the causal effect of study hours on student performance, existing work typically computes ACEs, such as $\mathbb{E}[Y_1-Y_0]$, $\mathbb{E}[Y_2-Y_1]$.
However, ACEs capture only one aspect of the causal relationship.
Researchers are often interested in more  nuanced causal questions that necessitate the use of the joint POs/OVs probabilities. 
For example, consider the following two causal questions:
\vspace{-0.1cm}
\begin{center}
``{\it 
For a student who studied more than 10 hours and obtained a  score of more than 15 in reality, would their score be less than 16 had they studied less than 10 hours?}"
\end{center}
\vspace{-0.1cm}
This is a situation in which studying more than 10 hours is necessary to obtain a score above 15.
\vspace{-0.1cm}
\begin{center}
``{\it 
For a student who studied less than 2 hours and obtained a  score of no more than 8 in reality, would their score be more than 8 had they studied at least 2 hours?}"
\end{center}
\vspace{-0.1cm}
This is a situation in which studying at least 2 hours is sufficient to obtain a score above 8.

Probabilities representing such two situations can be formally expressed in terms of the joint POs/OVs probabilities as follows, respectively:
\begin{equation}\label{eq-question}
\begin{gathered}
\mathbb{P}(Y_0\leq 1,Y_1\leq 1,Y_2\leq 1 | X=3,Y=2),\\
\mathbb{P}(Y_1\geq 1,Y_2\geq 1,Y_3\geq 1 | X=0,Y=0).
\end{gathered}
\end{equation}

However, counterfactual probabilities in \eqref{eq-question} are not identifiable from observational data. In this paper, we show how to derive bounds for these probabilities using linear programming. We illustrate that the bounds can be tightened by incorporating plausible MAs based on domain knowledge. For example, one might assume that ``a student would obtain a higher score had they studied 2 to 5 hours weekly rather than less than 2 hours" ($Y_0\leq Y_1$) or ``99.5\% of the students would have obtained a higher score had they studied more than 10 hours weekly rather than 5 to 10 hours" ($\mathbb{P}(Y_2\leq Y_3)\geq 0.995$). Further, we show that the probabilities in \eqref{eq-question} are identifiable if we impose stronger MAs $0\leq Y_i-Y_j\leq 1$ for all $i>j$.

\section{NOTATIONS AND BACKGROUND}

We represent each variable with a capital letter $(X)$ and its realized value with a small letter $(x)$.
Let $\mathbb{I}(x)$ be an indicator function that takes $1$ if $x$ is true and $0$ if $x$ is false.
The symbol $\land$ represents logical conjunction.

{\bf Structural Causal Models.} 
We use the language of Structural Causal Models (SCM) as our basic semantic and inferential framework \citep{Pearl09}.
An SCM ${\cal M}$ is a tuple $\left<{\boldsymbol U},{\boldsymbol V}, {\cal F}, \mathbb{P}_{\boldsymbol U} \right>$, where ${\boldsymbol U}$ is a set of exogenous (unobserved) variables following joint probabilities $\mathbb{P}_{\boldsymbol U}$, and ${\boldsymbol V}$ is a set of endogenous (observable) variables whose values are determined by structural functions ${\cal F}=\{f_{V_i}\}_{V_i \in {\boldsymbol V}}$ such that $v_i:= f_{V_i}({\mathbf{pa}}_{V_i},{\boldsymbol u}_{V_i})$ where ${\mathbf{PA}}_{V_i} \subseteq {\boldsymbol V}$ and $U_{V_i} \subseteq {\boldsymbol U}$. 
An atomic intervention of setting a set of endogenous variables $X$ to constants $x$, denoted by $do(x)$, replaces the original equations of $X$ by $X :=x$ and induces a \textit{sub-model}  ${\cal M}_x$.
We denote the potential outcome~(PO) $Y$ under intervention $do({x})$ by $Y_{{x}}({\boldsymbol u})$, which is the solution of $Y$ with ${\boldsymbol U}={\boldsymbol u}$ in the sub-model ${\cal M}_{x}$.

In this paper, we consider a discrete treatment variable $X$ and an \emph{ordinal} outcome variable $Y$. $Y$ must be ordinal since monotonicity assumptions presuppose an ordering among the POs. By contrast, $X$ need not be ordinal.

{\bf Assumptions in Previous Works.}
The exogeneity  is a commonly used assumption stated as follows:
\begin{assumption}[Exogeneity]
\label{exo}
$Y_x \indep X$ for any $x$.
\end{assumption}
Assumption \ref{exo} implies $\mathbb{P}(Y_x)=\mathbb{P}(Y|X=x)$ for any $x$.

Let $X$ and $Y$ be discrete random variables with supports $\{0,1,\dots,d_X-1\}$ and $\{0,1,\dots,d_Y-1\}$, respectively.
We list the MAs used in previous works.
\begin{assumption}[Monotonicity for Binary Treatment and Outcome]\citep{Pearl1999,Tian2000}
\label{monobin_s}
$\mathbb{P}(Y_0=1,Y_1=0)=0$.
\end{assumption}
\begin{assumption}[Monotonicity for Discrete Outcome]\citep{Zhang2025}
\label{mono_dc}
$Y_0 \leq Y_1$, almost surely (\text{a.s.}).
\end{assumption}
\begin{assumption}[Monotonic Incremental Treatment Effect (MITE)]\citep{Zhang2025}
\label{mono_it}
$0 \leq Y_1 - Y_0 \leq 1, \text{ a.s.}$
\end{assumption}
\begin{assumption}[$\epsilon$-limited harm]\citep{Mueller2023}
\label{monobin_w}
$0\leq \mathbb{P}(Y_0=1,Y_1=0)\leq \epsilon$.
\end{assumption}
\begin{assumption}[Monotone Treatment Response (MTR)]\citep{Manski1997}
\label{monoAS}
$Y_0\leq Y_1 \leq \dots \leq Y_{d_X-1}, \text{ a.s.}$
\end{assumption}
We note that Assumptions \ref{monobin_s} and \ref{monobin_w} apply to binary treatment and binary outcome, while Assumptions \ref{mono_dc} and \ref{mono_it} apply to binary treatment. 
A detailed discussion of these MAs is provided in Appendix \ref{app_mono}.

\section{MONOTONICITY ASSUMPTIONS FOR DISCRETE TREATMENT AND OUTCOME}

\label{sec:mono_assumptions}

We introduce new families of MAs by extending the existing MAs to discrete treatment and discrete outcome settings and generalizing them to accommodate more complex PO relationships.

{\bf MAs for Binary Treatment and Discrete Outcome.}
We introduce a new assumption for binary treatment that captures both probability-limited and increment-limited forms of monotonicity.
\begin{assumption}[Probability-limited and Increment-limited Monotonicity for Binary Treatment]
\label{weak_mono-bi}
There exist known constants ${\underline{D}}$, ${\overline{D}}$, $L$, and $U$,
such that the following condition holds:
\begin{equation}
L\leq \mathbb{P}\left({\underline{D}} \leq Y_1 - Y_0 \leq {\overline{D}} \right)\leq U.
\end{equation}
\end{assumption}
These values ${\underline{D}}$, ${\overline{D}}$, $L$, and $U$ are pre-specified by the researchers as prior knowledge. 
This MA generalizes 
Assumption \ref{mono_it} by allowing both the increment condition and the probability condition simultaneously.
It reduces to Assumption \ref{mono_dc} when $L=U=1$, ${\underline{D}}=0$, ${\overline{D}}=\infty$, and equals Assumption \ref{mono_it} when $L=U=1$, ${\underline{D}}=0$, ${\overline{D}}=1$.

{\bf MAs for Discrete Treatment and Outcome.}
Next, we present MAs for  discrete treatment and discrete outcome.

We introduce a relaxation of 
Assumption \ref{monoAS} by allowing for probability-limited restrictions.
\begin{assumption}[Probability-limited MTR]
\label{monoLB}
There exist known constants  $L$ and $U$, such that
\begin{equation}
L \leq \mathbb{P}(Y_0\leq Y_1 \leq \dots \leq Y_{d_X-1})\leq U.
\end{equation}
\end{assumption}
This condition implies that the POs for each subject increase monotonically with the treatment level  with probability in $[L,U]$.
It coincides with Assumption \ref{monoAS} when $L=U=1$.

We introduce a relaxation of the MITE assumption (Assumption \ref{mono_it})  that allows for increment-limited restrictions in the case of discrete treatments.
\begin{assumption}[Increment-limited Monotonicity for Discrete Treatment]
\label{monomatAS}
There exist known constants  ${\underline{D}}_{st}$ and ${\overline{D}}_{st}$ for $s,t=0,\dots,d_X-1$,
such that the following condition holds:
\begin{equation}
\bigwedge_{s,t=0,\dots,d_X-1;s>t}({\underline{D}}_{st}\leq Y_s- Y_t\leq {\overline{D}}_{st}), \text{ a.s.}
\end{equation}
\end{assumption}
This condition requires that the difference between any two potential outcomes $Y_s$ and $Y_t$ lies within the interval $[{\underline{D}}_{st},{\overline{D}}_{st}]$ a.s. whenever $s>t$. 
Note that we can avoid imposing a restrictive condition on $Y_s - Y_t$ by setting $\underline{D}_{st}=-\infty$ and $\overline{D}_{st}=\infty$.

We relax Assumption~\ref{monomatAS} with probability-limited restrictions.
\begin{assumption}[Probability-limited and Increment-limited Monotonicity with a Single Term]
\label{monomatASS}
There exist known constants  $L$, $U$, and ${\underline{D}}_{st}$ and ${\overline{D}}_{st}$ for $s,t=0,\dots,d_X-1$, 
such that the following condition holds:
\begin{equation}
L \leq \mathbb{P}\left(\bigwedge_{s,t=0,\dots,d_X-1;s>t}({\underline{D}}_{st}\leq Y_s- Y_t\leq {\overline{D}}_{st})\right)\leq U.
\end{equation}
\end{assumption}
The above condition bounds the probability that 
the difference between any two potential outcomes $Y_s$ and $Y_t$ with $s>t$ lies within the interval $[{\underline{D}}_{st},{\overline{D}}_{st}]$ by $[L,U]$.
Assumption \ref{monomatAS} is the special case with $L=U=1$.

Researchers may impose multiple probability-limited and increment-limited MAs simultaneously, e.g.,
$0.5 \leq \mathbb{P}(0\leq Y_1-Y_0\leq 1)$, $0.5 \leq \mathbb{P}(0\leq Y_2-Y_1\leq 1)$, and $0.5 \leq \mathbb{P}(0\leq Y_2-Y_0\leq 1)$, each in the form of Assumption \ref{monomatASS}.  
For convenience, we introduce the following single MA to represent multiple simultaneous requirements in the form of Assumption \ref{monomatASS}:

\begin{assumption}[Probability-limited and Increment-limited Monotonicity with Multiple Terms]
\label{assum_mono_with_prob_s}
There exist known constants  ${\underline{D}}_{st}^{(w)}$ and ${\overline{D}}_{st}^{(w)}$ for $s,t=0,\dots,d_X-1$ and $w=1,\dots,W$, and $L^w$ and $U^w$ for $w=1,\dots,W$,
such that the following condition holds, for $w=1,\dots,W$:

{\vspace{-0.5cm}\small
\begin{equation}
 L^w  \leq \mathbb{P}\left(\bigwedge_{s,t=0,\dots,d_X-1;s>t}({\underline{D}}_{st}^{(w)}\leq Y_s- Y_t\leq {\overline{D}}_{st}^{(w)})\right) \leq U^w.  
\end{equation}
}
\end{assumption}
In Assumption~\ref{assum_mono_with_prob_s}, $W$ represents the number of simultaneous monotonicity conditions imposed.

{\bf Relationships among MAs.}
\begin{proposition}
\label{prop1}
We have the following  relationships among the MAs presented thus far:
(1) Assumptions~\ref{monobin_s} through~\ref{monomatASS} are all special cases of Assumption~\ref{assum_mono_with_prob_s}. 
(2) Assumption \ref{mono_it} is  a special case of Assumptions \ref{weak_mono-bi}, \ref{monomatAS}, and \ref{monomatASS}. 
(3) Assumption \ref{monobin_w} is  a special case of Assumptions \ref{weak_mono-bi}, \ref{monoLB}, and \ref{monomatASS}. 
(4) Assumption \ref{monoAS} is  a special case of Assumptions \ref{monoLB}, \ref{monomatAS}, and \ref{monomatASS}.
\end{proposition}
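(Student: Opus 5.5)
The proof is a sequence of explicit parameter instantiations. For each claimed inclusion, I will choose constants ($d_X$, $W$, $L$, $U$, $\underline{D}_{st}$, $\overline{D}_{st}$) that make the general condition coincide, as an event-level or probability-level statement, with the special one. Two conventions are needed. First, an unrestricted pair $(s,t)$ is encoded by $\underline{D}_{st}=-\infty$ and $\overline{D}_{st}=\infty$, so its conjunct is the sure event. Second, ``almost surely'' is encoded by $L=U=1$. Since $Y$ takes values in $\{0,\dots,d_Y-1\}$, I may equivalently use the finite values $\pm(d_Y-1)$ in place of $\pm\infty$ if finite constants are preferred. A trivially true probability constraint is obtained with $L=0$, $U=1$.

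I will prove parts (2)--(4) first and then obtain part (1) by showing that Assumptions~\ref{weak_mono-bi}, \ref{monoLB}, \ref{monomatAS} and \ref{monomatASS} are all special cases of Assumption~\ref{assum_mono_with_prob_s}.

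\textbf{Part (2).} Assumption~\ref{mono_it} is Assumption~\ref{weak_mono-bi} with $\underline{D}=0$, $\overline{D}=1$, $L=U=1$. It is also Assumption~\ref{monomatAS} with $d_X=2$ and $(\underline{D}_{10},\overline{D}_{10})=(0,1)$. Adding $L=U=1$ to the latter gives Assumption~\ref{monomatASS}.

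\textbf{Part (3).} With binary $Y$, the event $\{Y_0=1,Y_1=0\}$ is the complement of $\{0\le Y_1-Y_0\le 1\}$. Hence $0\le \mathbb{P}(Y_0=1,Y_1=0)\le\epsilon$ is equivalent to $1-\epsilon\le \mathbb{P}(0\le Y_1-Y_0\le 1)\le 1$. This is Assumption~\ref{weak_mono-bi} with $\underline{D}=0$, $\overline{D}=1$, $L=1-\epsilon$, $U=1$, and Assumption~\ref{monomatASS} with $d_X=2$ and the same constants. For Assumption~\ref{monoLB} with $d_X=2$, note that $\{Y_0\le Y_1\}$ equals the same event when $Y$ is binary, so $L=1-\epsilon$, $U=1$ works.

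\textbf{Part (4).} The chain $Y_0\le\cdots\le Y_{d_X-1}$ is equivalent to $\bigwedge_{s>t}\{Y_s-Y_t\ge 0\}$. By transitivity it is also equivalent to requiring only consecutive pairs to be ordered. So set $\underline{D}_{st}=0$ and $\overline{D}_{st}=\infty$ for all $s>t$. Together with a.s., this gives Assumption~\ref{monomatAS}, and with $L=U=1$ it gives Assumption~\ref{monomatASS}. Assumption~\ref{monoLB} with $L=U=1$ gives MTR directly.

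\textbf{Part (1).} Assumption~\ref{assum_mono_with_prob_s} with $W=1$ is literally Assumption~\ref{monomatASS}. Assumption~\ref{monomatAS} is the $L=U=1$ case of Assumption~\ref{monomatASS}. Assumption~\ref{weak_mono-bi} is Assumption~\ref{monomatASS} with $d_X=2$. For Assumption~\ref{monoLB}, the event $\{Y_0\le\cdots\le Y_{d_X-1}\}$ equals $\bigwedge_{s>t}\{0\le Y_s-Y_t\le\infty\}$, so it is Assumption~\ref{monomatASS} with $\underline{D}_{st}=0$, $\overline{D}_{st}=\infty$ and the same $L$, $U$. Assumptions~\ref{monobin_s}, \ref{mono_dc}, \ref{mono_it}, \ref{monobin_w} and \ref{monoAS} then follow through parts (2)--(4) and the reductions stated after Assumption~\ref{weak_mono-bi}, by transitivity of ``special case''. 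Assumption~\ref{monobin_s} is the $\epsilon=0$ case of Assumption~\ref{monobin_w}. Assumption~\ref{mono_dc} is Assumption~\ref{weak_mono-bi} with $\underline{D}=0$, $\overline{D}=\infty$, $L=U=1$.

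\textbf{Main obstacle.} The only non-bookkeeping step is justifying the event identities, since the general assumption is stated through differences $Y_s-Y_t$ while the special ones use orderings or point events. There are two such identities. The first is the complement identity in part (3), which uses that $Y$ is binary there. The second is the equivalence of the MTR chain with the conjunction of pairwise nonnegative differences in parts (1) and (4). I will verify both by direct case checking on the finite support of $Y$. I will also state explicitly the convention that infinite bounds, or their finite surrogates $\pm(d_Y-1)$, are admissible ``known constants''.
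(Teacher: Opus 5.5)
Your proposal is correct and follows essentially the same route as the paper's proof: explicit parameter instantiations for parts (2)--(4), then part (1) by transitivity, using the reductions of Assumptions~\ref{weak_mono-bi}, \ref{monoLB}, \ref{monomatAS} to Assumption~\ref{monomatASS}, which is Assumption~\ref{assum_mono_with_prob_s} with $W=1$. Your constants are in fact the correct ones at several points where the paper's written proof slips: it uses $L=-\infty,U=\infty$ where $L=U=1$ is needed, $L=0,U=\epsilon$ where $L=1-\epsilon,U=1$ is needed, and $\underline{D}_{st}=-\infty$ where $\underline{D}_{st}=0$ is needed for MTR; it also derives Assumption~\ref{mono_dc} from the stronger Assumption~\ref{mono_it} instead of from Assumption~\ref{weak_mono-bi} with $\overline{D}=\infty$ as you do.
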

Assumption~\ref{assum_mono_with_prob_s} encompasses existing MAs in the literature and represents the most general class of monotonicity among all the MAs considered in this paper.

\section{BOUNDING WITH MONOTONICITY ASSUMPTIONS}

\label{sec5}

In this section, we formulate the bounding problem for joint POs/OVs probabilities given observed distributions as a linear programming (LP), incorporating additional MAs. 

We denote $[d]\coloneqq \{0,\dots ,d-1\}$,
$Y_{[d_X]}=(Y_0,\dots ,Y_{d_X-1})$, and  $y_{[d_X]}=(y_0,\dots ,y_{d_X-1})$.

\subsection{Objective of the Study}
\label{subsec:objective_of_study}

We denote the set of the joint POs/OVs probabilities as 
\begin{equation}
\begin{aligned}
    &{\cal P}\defeq\{\mathbb{P}(Y_{[d_X]}=y_{[d_X]},X=x,Y=y):  
    \\
    & \hspace{10mm}y_{[d_X]}\in [d_Y]^{d_X},x\in [d_X],y\in [d_Y]\} .
\end{aligned}
\end{equation}

The objective of this study is a linear combination of the joint POs/OVs probabilities, expressed as
$\lambda({\cal P})$,
where $\lambda$ is an arbitrary linear function mapping the parameters ${\cal P}$ to $\mathbb{R}$.
$\lambda({\cal P})$ encompasses a wide range of causal quantities studied in the literature, including the PoC families in \citep{ALi2024}, the moments of causal effect \citep{Kawakami2025_moments}, and the posterior causal effects \citep{Lu2022}. A detailed illustration is provided in Appendix \ref{app AB}.

\subsection{Bounding Problem Setup}
\label{subsec:setup_body}
We set up the bounding problem by LP.  
We denote the following parameters
\begin{equation}\label{eq-para}
p_{y_{[d_X]},x}
=\mathbb{P}(Y_{[d_X]}=y_{[d_X]},X=x).
\end{equation}
We denote the set of parameters as $\mathfrak{p}\defeq\{p_{y_{[d_X]},x}:y_{[d_X]}\in [d_Y]^{d_X},x\in [d_X]\}$.  
The total number of parameters is $d_Y^{d_X}d_X$. 
The number of parameters grows exponentially in $d_X$ and polynomially in $d_Y$. 
Note that we do not include $Y$ in the parameterization of the bounding problem 
since we have
$\mathbb{P}(Y_{[d_X]}=y_{[d_X]},X=x,Y=y)=\mathbb{P}(Y_{[d_X]}=y_{[d_X]},X=x)\mathbb{I}(y_x=y)$
from the counterfactual consistency property \citep{Pearl09}, i.e., 
$X=x \Rightarrow Y_x =Y$.
Then, $\lambda({\cal P})$ can be represented as a linear combination of the parameters in $\mathfrak{p}$  and denote it $\phi(\mathfrak{p})$. 

{\bf Constraints from Observed Distributions.}
The general constraints 
are  
\begin{equation}
\begin{aligned}
\label{eqsum}
&p_{y_{[d_X]},x} \geq 0\text{ for all } y_{[d_X]}\text{ and }x,\\
&\sum_{y_0=0}^{d_Y-1}\dots\sum_{y_{d_X-1}=0}^{d_Y-1}\sum_{x=0}^{d_X-1}p_{y_{[d_X]},x}=1.
\end{aligned}
\end{equation}

Next, we present the constraints imposed by three types of distributions:
(i) experimental distribution only, (ii) observational distribution only, and (iii) both experimental and observational distributions.
First, if we have the experimental distribution $\mathbb{P}(Y_x)$, it imposes 
\begin{equation}
\label{exp.constraints}
\begin{aligned}
    &\sum_{y_0=0}^{d_Y-1}\dots\sum_{y_{k-1}=0}^{d_Y-1}\sum_{y_{k+1}=0}^{d_Y-1}\dots\sum_{y_{d_X-1}=0}^{d_Y-1}\sum_{x=0}^{d_X-1}\\
   &\hspace{10mm} p_{y_0,\dots y_{k-1},j,y_{k+1}\dots ,y_{d_X-1},x}
    =\mathbb{P}(Y_k=j)
\end{aligned}
\end{equation}
for $k\in [d_X]$ and $j\in [d_Y-1]$.
We do not include $j=d_Y-1$ since Eq. \eqref{exp.constraints} for $j=d_Y-1$ can be expressed as a linear combination of Eqs. \eqref{eqsum} and \eqref{exp.constraints}.

Next, if we have the observational distribution $\mathbb{P}(X,Y)$, it imposes 
\begin{equation}
\label{Obs.constraints}
\begin{aligned}
    &\sum_{y_0=0}^{d_Y-1}\dots\sum_{y_{l-1}=0}^{d_Y-1}\sum_{y_{l+1}=0}^{d_Y-1}\cdots \sum_{y_{d_X-1}=0}^{d_Y-1}\\
    & \hspace{10mm}p_{y_0,\dots ,y_{l-1},m,y_{l+1},\dots ,y_{d_X-1}, l}
    =\mathbb{P}(X=l,Y=m)
\end{aligned}
\end{equation}
for $l\in[d_X]$,  $m\in[d_Y]$, $(l,m)\ne (d_X-1,d_Y-1)$. We excluded $(l,m)=(d_X-1,d_Y-1)$ since Eq. \eqref{Obs.constraints} in that case can be expressed as a linear combination of Eqs. \eqref{eqsum} and \eqref{Obs.constraints}. 

Finally, when we have both experimental and observational distributions, they impose the constraints in both Eqs.~\eqref{exp.constraints} and \eqref{Obs.constraints}.

All of the above constraints from distributions are linear in $\mathfrak{p}$.

{\bf Remark.}
{Introducing exogeneity  (Assumption \ref{exo}) can be helpful in deriving tighter bounds. $Y_x \indep X$ imposes the constraints $\mathbb{P}(Y_x=y_x,X=l)=\mathbb{P}(Y_x=y_x)\mathbb{P}(X=l)$ on $p_{y_{[d_X]},x}$, which are linear given either observational or experimental distributions. }

\textbf{Additional Linear Constraints by MAs.}
Next, we present the additional linear constraints imposed by our introduced MAs.

\begin{proposition}
\label{prop2}
The constraints implied by Assumption~\ref{assum_mono_with_prob_s} are given by
\begin{equation}
\begin{aligned}
\label{eq: mono_prop_p}
&L^w\leq  \sum\nolimits_{y_0=0}^{d_Y-1}\dots \sum\nolimits_{y_{d_X-1}=0}^{d_Y-1}\sum\nolimits_{x=0}^{d_X-1} \\
&\hspace{0cm}\left(\prod_{s,t\in [d_X];s>t}\mathbb{I}({\underline{D}}_{st}^{(w)}\leq y_s- y_t\leq {\overline{D}}_{st}^{(w)})\right)p_{y_{[d_X]},x}\leq U^w
\end{aligned}
\end{equation}
for $w=1,\dots,W$.
The above constraints are all linear constraints in ${\cal P}$.
\end{proposition}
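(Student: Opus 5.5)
The argument rewrites the probability appearing in Assumption~\ref{assum_mono_with_prob_s} as a sum of the parameters $p_{y_{[d_X]},x}$ from \eqref{eq-para}, weighted by indicators. The starting point is that, for each fixed $w$, the event
\[
E_w \coloneqq \bigwedge_{s,t\in[d_X];s>t}\bigl({\underline{D}}_{st}^{(w)}\leq Y_s-Y_t\leq {\overline{D}}_{st}^{(w)}\bigr)
\]
depends only on the vector of potential outcomes $Y_{[d_X]}$. Since $Y_{[d_X]}$ takes values in the finite set $[d_Y]^{d_X}$, $E_w$ is a finite disjoint union of atoms $\{Y_{[d_X]}=y_{[d_X]}\}$. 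It includes exactly those atoms whose realized vector satisfies all the pairwise difference constraints, so its indicator is $\prod_{s>t}\mathbb{I}({\underline{D}}_{st}^{(w)}\leq y_s-y_t\leq {\overline{D}}_{st}^{(w)})$. This uses only that a conjunction of events has as indicator the product of the individual indicators.

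The steps are as follows. First, apply the law of total probability over the atoms of $Y_{[d_X]}$:
\[
\mathbb{P}(E_w)=\sum_{y_{[d_X]}\in[d_Y]^{d_X}}\Bigl(\prod_{s>t}\mathbb{I}(\cdot)\Bigr)\mathbb{P}(Y_{[d_X]}=y_{[d_X]}).
\]
Second, marginalize over $X$ by writing $\mathbb{P}(Y_{[d_X]}=y_{[d_X]})=\sum_{x\in[d_X]}p_{y_{[d_X]},x}$; this is valid because $X$ has support $[d_X]$. Substituting gives exactly the middle expression in \eqref{eq: mono_prop_p}, and Assumption~\ref{assum_mono_with_prob_s} then yields the two-sided inequality for each $w$.

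Third, for linearity, observe that the indicator products are constants determined by the index $y_{[d_X]}$ and the known constants ${\underline{D}}^{(w)}_{st},{\overline{D}}^{(w)}_{st}$, so each middle term is a linear form in $\mathfrak{p}$. It is also linear in ${\cal P}$, because $p_{y_{[d_X]},x}=\sum_{y\in[d_Y]}\mathbb{P}(Y_{[d_X]}=y_{[d_X]},X=x,Y=y)$. The resulting inequalities are therefore linear constraints with known bounds $L^w,U^w$.

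No step is a real obstacle; the only care needed is the indicator bookkeeping. One must check that the conjunction over all pairs $s>t$ becomes a product of indicators evaluated at the realized values. One must also handle infinite $\pm\infty$ bounds, under which the corresponding indicator is identically $1$ and the formula is unchanged.
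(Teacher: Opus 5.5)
Your proof is correct and follows essentially the same route as the paper: the conjunction over all pairs $s>t$ becomes a product of indicators, the probability is expanded as a sum over the cells $(y_{[d_X]},x)$ weighted by $p_{y_{[d_X]},x}$, and linearity follows because each indicator product is a constant $0$ or $1$. Your additional remarks are harmless refinements that the paper leaves implicit: the explicit marginalization over $X$, linearity in ${\cal P}$ (not just in $\mathfrak{p}$) via summing over $Y$, and the $\pm\infty$ bounds.
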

The explicit constraints implied by 
{Assumptions \ref{monoAS}-\ref{monomatASS}} are provided in Appendix~\ref{APPsubsec:LP_formulation_nonmatrix}.

{\bf Bounding Problems Under MA.}
Under Assumption \ref{assum_mono_with_prob_s}, the bounds of $\lambda({\cal P})$ are obtained by solving the following constrained optimization problems:
\begin{equation}\label{eq-LP}
\begin{gathered}
\underline{\lambda}({\cal P}) = \min_{\mathfrak{p}} \phi(\mathfrak{p}),\ \ 
\overline{\lambda}({\cal P}) = \max_{\mathfrak{p}} \phi(\mathfrak{p}),\\
\text{subject to the constraints from observed distributions }\\
\text{and Eq.~\eqref{eq: mono_prop_p}} 
\end{gathered}
\end{equation}
The resulting interval
$[\underline{\lambda}({\cal P}),\overline{\lambda}({\cal P})]$
represents the bounds of $\lambda({\cal P})$ under Assumption \ref{assum_mono_with_prob_s}.
These bounds are guaranteed to be sharp under Assumption \ref{assum_mono_with_prob_s}. The bound $[\underline{\lambda}({\cal P}),\overline{\lambda}({\cal P})]$ is said to be 
\emph{sharp}  if there exist  $\underline{\mathfrak{p}},\overline{\mathfrak{p}}\in \mathcal{P}$
that satisfy the imposed constraints and achieve 
$\phi(\underline{\mathfrak{p}})=\underline{\lambda}({\cal P})$ and $\phi(\overline{\mathfrak{p}})=\overline{\lambda}({\cal P})$.  
We provide the explicit formulation of the LPs in Appendix~\ref{Appsubsec:LP_in_matrix_form}.

Therefore, we have the following theorem:
\begin{theorem}
\label{thoe_lp}
{
Given observed distributions (experimental $\mathbb{P}(Y_x)$, observational $\mathbb{P}(X,Y)$, or both),   under Assumption \ref{assum_mono_with_prob_s}, 
solving the  LP problem in \eqref{eq-LP} yields sharp bounds for $\lambda({\cal P})$.}
\end{theorem}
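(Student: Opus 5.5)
The argument shows that the feasible set of the LP in \eqref{eq-LP} is exactly the set of parameter vectors $\mathfrak{p}$ that arise from some joint distribution of $(Y_{[d_X]},X)$ compatible with the given observed distributions and with Assumption~\ref{assum_mono_with_prob_s}. Sharpness then follows from standard LP facts. The proof has two directions (soundness and completeness) plus an attainment step.

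\emph{Soundness.} Take any data-generating model (equivalently, any SCM, or any joint law of $(Y_{[d_X]},X)$) consistent with the observed distributions and Assumption~\ref{assum_mono_with_prob_s}. Its induced $\mathfrak{p}$ satisfies \eqref{eqsum} because it is a probability mass function on the finite set $[d_Y]^{d_X}\times[d_X]$. It satisfies \eqref{exp.constraints} because $\mathbb{P}(Y_k=j)$ is the corresponding marginal of $\mathfrak{p}$. It satisfies \eqref{Obs.constraints} by counterfactual consistency: $X=l$ implies $Y=Y_l$, so $\mathbb{P}(X=l,Y=m)=\sum p_{y_{[d_X]},l}\,\mathbb{I}(y_l=m)$. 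It satisfies \eqref{eq: mono_prop_p} by Proposition~\ref{prop2}: the probability of the event in Assumption~\ref{assum_mono_with_prob_s} equals the sum of $p_{y_{[d_X]},x}$ over the cells where the product of indicators is $1$. The same consistency argument gives $\lambda({\cal P})=\phi(\mathfrak{p})$. Hence every compatible model yields a value of $\lambda({\cal P})$ lying in $[\underline{\lambda}({\cal P}),\overline{\lambda}({\cal P})]$.

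\emph{Completeness.} Conversely, take any feasible $\mathfrak{p}$ and construct a model that realizes it. Let $U=(U_1,U_2)$ be a discrete exogenous variable with law $\mathbb{P}(U_1=y_{[d_X]},U_2=x)=p_{y_{[d_X]},x}$; this is a valid law by \eqref{eqsum}. Define the structural equations $X:=U_2$ and $Y:=f_Y(X,U)=U_{1,X}$, the $X$-th coordinate of $U_1$. Then $Y_x(u)=u_{1,x}$, so the joint law of $(Y_{[d_X]},X)$ is exactly $\mathfrak{p}$.

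Next, check that this model reproduces the data. The equality constraints were imposed only for $j\ne d_Y-1$ and $(l,m)\ne(d_X-1,d_Y-1)$, but the omitted equations follow from \eqref{eqsum} together with normalization of the given distributions. So the model reproduces $\mathbb{P}(Y_x)$ and/or $\mathbb{P}(X,Y)$ exactly. Reading \eqref{eq: mono_prop_p} backwards shows that Assumption~\ref{assum_mono_with_prob_s} holds for every $w$. If exogeneity is also imposed, it is the linear constraint from the Remark and is likewise inherited. The model therefore attains $\lambda({\cal P})=\phi(\mathfrak{p})$.

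\emph{Attainment.} The feasible region is an intersection of finitely many closed half-spaces and hyperplanes inside the probability simplex, so it is a compact polytope. It is nonempty, since the true model is feasible whenever the data are generated under the assumption. A linear function on a nonempty compact polytope attains its minimum and maximum at vertices $\underline{\mathfrak{p}},\overline{\mathfrak{p}}$. By completeness, both are realized by admissible models, which gives sharpness in the sense defined before the theorem.

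The main obstacle is the completeness step. There we must make sure that the canonical response-function SCM imposes no hidden restriction. This requires checking that the dropped redundant equalities are truly implied, and that consistency links $Y$ to $Y_X$ for observational data without further constraints on $\mathfrak{p}$. Beyond that, the argument is bookkeeping.
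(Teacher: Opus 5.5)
Your proposal is correct, and it proves more than the paper's own proof does.

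\textbf{What the paper proves.} The paper's argument is essentially your attainment step alone. The constraints confine $\mathfrak{p}$ to a closed subset of $[0,1]^{d_Xd_Y^{d_X}}$, so the LP attains its minimum and maximum at feasible points. The paper \emph{defines} sharpness as the existence of feasible $\underline{\mathfrak{p}},\overline{\mathfrak{p}}$ achieving these optima, so it stops there. It takes the validity of the interval for granted from how the constraints were derived in Section~\ref{sec5} and Proposition~\ref{prop2}. It never argues that every feasible $\mathfrak{p}$ is the law of some admissible model.

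\textbf{What your extra steps add.} Your soundness and completeness steps fill these two implicit gaps.
\begin{itemize}
\item Soundness shows that the interval contains $\lambda({\cal P})$ for every model compatible with the data and Assumption~\ref{assum_mono_with_prob_s}.
\item Completeness uses the canonical response-function SCM ($X:=U_2$, $Y:=U_{1,X}$), together with your check that the dropped equations for $j=d_Y-1$ and $(l,m)=(d_X-1,d_Y-1)$ follow from normalization. This shows the feasible region coincides exactly with the set of admissible joint laws.
\end{itemize}
This is what makes ``sharp'' mean sharp in the usual causal sense: each endpoint is the value of $\lambda({\cal P})$ under some SCM consistent with the observations and the monotonicity assumptions. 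Under the paper's definition alone, sharpness reduces to ``the LP optimum is attained,'' which holds for any nonempty bounded LP.

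Your nonemptiness remark (the true model is feasible when the assumption holds) also makes explicit what the paper only states in its compatibility Remark. In short, the paper's route buys brevity, while yours supplies the substantive content the theorem is meant to convey.
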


{\bf Remark.}
There may be a compatibility issue in LPs, that is, the constraints implied by the MAs 
conflict with those supported by the observed distributions. 
If the feasible region of the LP—that is, the set of all points satisfying all constraints simultaneously—is empty, then the LP is infeasible.
In such cases, the researcher should withdraw or revise the assumptions.

\subsection{Bounding Joint POs/OVs Probabilities and Their Linear Combinations From Finite Samples}

We use a plug-in estimator to compute the bounds from finite samples. That is, we use the empirical probabilities of $\mathbb{P}(Y_x)$ and $\mathbb{P}(X,Y)$ in solving the LP in \eqref{eq-LP}. The resulting plug-in estimators of the lower and upper bounds of $\lambda({\cal P})$ are denoted by 
$[\hat{\underline{\lambda}}({\cal P}),\hat{\overline{\lambda}}({\cal P})]$. $\hat{\underline{\lambda}}({\cal P})$ and $\hat{\overline{\lambda}}({\cal P})$ are consistent estimators of  $\underline{\lambda}({\cal P})$ and $\overline{\lambda}({\cal P})$. A detailed description is given in Appendix \ref{app_est}.

A discussion of the computational complexity is provided in Appendix~\ref{app_comp}.
The computational complexity of solving our bounding problems is polynomial when using interior-point methods \citep{Karmarkar1984}.
If we use the simplex method to solve LP, the worst-case computational complexity of solving our bounding problems is exponential \citep{RinnooyKan1981}.
However, in practice, the simplex method rarely exhibits its exponential worst-case behavior and is often faster than interior-point algorithms \citep{Vanderbei2001}.

{\bf Numerical Experiments.}
We conduct numerical experiments to illustrate how MAs can tighten the bounds.
All details are stated in Appendix \ref{subsec: numerical experiment of bounding}.
We simulated i.i.d. samples of experimental and observational datasets and computed bounds of the joint POs probability $\mathbb{P}(Y_0=0,Y_1=0,Y_2=1)$, the posterior causal effect $\mathbb{E}[Y_1-Y_0|X=2,Y=2]$, and the second moment of causal effect $\mathbb{E}[(Y_1-Y_0)^2]$ under various MAs.
The results are reported in Tables~\ref{tab:results}-\ref{tab:L1L2L3_2moment_obs} in Appendix \ref{subsec: numerical experiment of bounding}. The results show that our methods can provide informative bounds. 
We observe that introducing stronger assumptions leads to narrower bounds,  
and that the 95\% confidence intervals (CIs) for the estimates become narrower as the sample size increases.

\section{IDENTIFICATION WITH MONOTONICITY ASSUMPTIONS}

\label{sec6}

In this section, we introduce an MA that leads to the identification of the joint POs/OVs probabilities and their linear combinations.

\subsection{Identifying Joint POs/OVs Probabilities and Their Linear Combinations}
We  present an assumption for identifying the joint POs/OVs probabilities, which is a special case of Assumption~\ref{assum_mono_with_prob_s} and
 an extension of MITE (Assumption \ref{mono_it}).

\begin{assumption}[MITE for Discrete Treatment and Outcome]
\label{MITE_D}
For any $s>t$,
\begin{equation}
\label{assump_differs_only1}
    0\leq Y_s-Y_t\leq 1, \text{ a.s.}
\end{equation}
\end{assumption}
This assumption means that raising any treatment level can never make the outcome worse and can only improve it slightly, by at most one level.

Assumption~\ref{MITE_D} is a special case of  Assumption \ref{monomatAS}, where ${\underline{D}}_{st}=0$ and ${\overline{D}}_{st}=1$ for any $s>t$.
When $Y$ is binary, Assumption~\ref{MITE_D} is equivalent to Assumption~\ref{monoAS} and is further  equivalent to $\mathbb{P}(Y_t = 1, Y_s = 0)=0$ for any $s > t$.

{\bf Remark.}
Similar to the bounding problem, the feasible region defined by the linear constraints from data and the restriction imposed by Assumption~\ref{MITE_D} must be non-empty.

{\bf Experimental Data.}
We discuss the identification of the joint POs probabilities, i.e., $\mathbb{P}(Y_0,\dots,Y_{d_X-1})$,  and their linear combinations, using experimental data.
Because the experimental data does not contain $(X,Y)$, we focus solely on $\mathbb{P}(Y_0,\dots,Y_{d_X-1})$.

We denote the set of the joint probabilities of POs as 
\begin{equation}
{\cal P}'\defeq\hspace{0cm}\{\mathbb{P}(Y_{[d_X]}=y_{[d_X]})|y_{[d_X]}\in [d_Y]^{d_X}\}.
\end{equation}
The objective is a linear combination of the joint POs probabilities, expressed as $\lambda'({\cal P}')$,
where $\lambda'$ is an arbitrary linear function mapping the parameters $\cal P'$ to $\mathbb{R}$.

We have the following identification theorem.
\begin{theorem}[Identifiability under MA]
\label{thm: identification_of_joint_PO}
Under Assumption \ref{MITE_D}, 
given experimental distributions $\mathbb{P}(Y_x)$,
$\mathbb{P}(Y_0,\dots ,Y_{d_X-1})$ is identifiable as
\begin{equation}
\begin{aligned}
&\mathbb{P}(Y_0=\cdots =Y_{d_X-1}=y_0) \\
&= \sum\nolimits_{j=0}^{y_0} \mathbb{P}(Y_{d_X-1}=j)-\sum\nolimits_{j=0}^{y_0 -1}\mathbb{P}(Y_0=j)
\end{aligned}
\end{equation}
for $y_0\in [d_Y]$, and 
\begin{equation}
\begin{aligned}
&\mathbb{P}(Y_0=\cdots =Y_k=y_0,\ Y_{k+1}=\cdots =Y_{d_X-1}=y_0 +1)\\
&=\sum\nolimits_{j=0}^{y_0} \mathbb{P}(Y_k=j)-\sum\nolimits_{j=0}^{y_0} \mathbb{P}(Y_{k+1}=j)
\end{aligned}
\end{equation}
for $k\in [d_X-1]$ and $y_0\in[d_Y-1]$; otherwise,
\begin{equation}
\mathbb{P}(Y_0=y_0,\dots ,Y_{d_X-1}=y_{d_X-1})=0.
\label{eq: jointPO_others}
\end{equation}
Then, any $\lambda'({\cal P}')$ is identifiable.
\end{theorem}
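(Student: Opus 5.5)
Under Assumption~\ref{MITE_D}, the plan is to show that the joint vector $Y_{[d_X]}$ is supported on a small, explicitly described set of ``staircase'' profiles. On that set the joint law is pinned down by the marginals $\mathbb{P}(Y_x)$ alone.

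\textbf{Step 1 (support).} We have $0\le Y_s-Y_t\le 1$ a.s.\ for all $s>t$. Taking $s=d_X-1$ and $t=0$ gives $Y_{d_X-1}-Y_0\in\{0,1\}$, and monotonicity in the index gives $Y_0\le Y_1\le\dots\le Y_{d_X-1}$. Hence almost surely either all POs equal a common value $y_0$, or there is a unique switch index $k\in[d_X-1]$ with $Y_0=\dots=Y_k=y_0$ and $Y_{k+1}=\dots=Y_{d_X-1}=y_0+1$, where necessarily $y_0\le d_Y-2$. Every other profile $y_{[d_X]}$ violates the assumption for some pair $(s,t)$, so its probability is zero; this gives \eqref{eq: jointPO_others}.

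\textbf{Step 2 (cumulative identities).} For each $k\in[d_X]$ and each $y_0$, consider the event $\{Y_k\le y_0\}$. It is the disjoint union of the staircase profiles whose $k$-th entry is $\le y_0$. Let $a_{y}$ denote $\mathbb{P}(\text{all}=y)$ and $b_{k',y}$ denote the probability of a switch at $k'$ from level $y$. Then
\[
\mathbb{P}(Y_k\le y_0)=\sum_{y\le y_0}a_y+\sum_{y\le y_0-1}\sum_{k'}b_{k',y}+\sum_{k'\ge k}b_{k',y_0},
\]
where the last term collects switches at level $y_0$ occurring at or after position $k$, so that $Y_k=y_0$. Differencing this expression in $k$ gives
\[
\mathbb{P}(Y_k\le y_0)-\mathbb{P}(Y_{k+1}\le y_0)=b_{k,y_0},
\]
which is the second displayed formula. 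For the first formula, compare $\mathbb{P}(Y_{d_X-1}\le y_0)$ with $\mathbb{P}(Y_0\le y_0-1)$. The event $\{Y_{d_X-1}\le y_0\}$ contains every profile with top value $\le y_0$. The event $\{Y_0\le y_0-1\}$ contains every profile with bottom value $\le y_0-1$. Their difference is exactly the set of profiles with bottom value $\ge y_0$ and top value $\le y_0$, i.e.\ the constant profile at $y_0$. Since the second event is contained in the first, the difference of probabilities equals $a_{y_0}$.

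\textbf{Step 3 (conclusion).} Every entry of $\mathcal{P}'$ is now an explicit linear function of the experimental marginals. Any $\lambda'(\mathcal{P}')$ is linear in these entries, so it is identified by substitution.

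The main obstacle is the bookkeeping in Step 2. I would verify the event inclusions (for instance $\{Y_0\le y_0-1\}\subseteq\{Y_{d_X-1}\le y_0\}$, which holds because $Y_{d_X-1}\le Y_0+1$) before subtracting. This makes the differences clean set differences, so no inclusion--exclusion sums are needed.
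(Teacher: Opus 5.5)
Your proof is correct and follows essentially the same route as the paper. You use the same staircase support and parametrization (your $a_y$ and $b_{k,y}$ are the paper's $\theta_y$ and $\varphi_{y,k}$), and you difference marginals across adjacent treatment levels to isolate the switch masses, exactly as the paper does. Working with $\mathbb{P}(Y_k\le y_0)$ instead of point masses simply absorbs the paper's telescoping over $j$. Your event-inclusion argument for $a_{y_0}$ replaces the paper's algebraic step $\theta_{y_0}=\mathbb{P}(Y_0=y_0)-\sum_{k}\varphi_{y_0,k}$ followed by telescoping in $k$.
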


{\bf Observational Data.}
Next, we discuss the identification of the joint POs/OVs probabilities, i.e., $\mathbb{P}(Y_0,\dots,Y_{d_X-1},X,Y)$, and their linear combinations, using observational data.
This requires exogeneity (Assumption \ref{exo}), i.e. $Y_x \indep X$, which implies $\mathbb{P}(Y_x) = P(Y|x)$.

We have the following:
\begin{theorem}[Identifiability under MA]
\label{thm: identification_jointPO_full}
Under Assumptions \ref{exo} and \ref{MITE_D}, 
given observational distributions $\mathbb{P}(X, Y)$,
$\mathbb{P}(Y_0,\dots ,Y_{d_X-1},X,Y)$ is identifiable as
\begin{equation}
\begin{aligned}
&\mathbb{P}(Y_0=\cdots =Y_{d_X-1}=y_0,X=x,Y=y)=\\
&\left(\sum_{m=0}^{y_0} \mathbb{P}(Y=m|X=d_X-1)-\sum_{m=0}^{y_0 -1}\mathbb{P}(Y=m|X=0)\right)\\
&\hspace{6cm}\times\mathbb{P}(X=x)
\end{aligned}
\end{equation}
for $y_0\in [d_Y]$, $x\in [d_X]$, and $y=y_0$, and
\begin{equation}
\begin{aligned}
&\mathbb{P}(Y_0=\cdots =Y_k=y_0,\\
&\hspace{1cm} Y_{k+1}=\cdots =Y_{d_X-1}=y_0+1,X=x,Y=y)\\
&=\left(\sum_{l=0}^{y_0} \mathbb{P}(Y=l|X=k)-\sum_{l=0}^{y_0} \mathbb{P}(Y=l|X=k+1)\right)\\
&\hspace{6cm}\times\mathbb{P}(X=x)  
\end{aligned}
\end{equation}
for $k\in [d_X-1]$, $y_0\in [d_Y-1]$, $y=y_0$ for $x\leq k$, and  $y=y_0+1$ for $x=k+1,\dots,d_X-1$; 
otherwise,
\begin{equation}
\mathbb{P}(Y_0=y_0,\dots ,Y_{d_X-1}=y_{d_X-1},X=x,Y=y)=0.
\end{equation}
Then, any $\lambda({\cal P})$ is identifiable. 
\end{theorem}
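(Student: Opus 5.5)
The plan is to reduce the statement to Theorem~\ref{thm: identification_of_joint_PO} using exogeneity, and then attach the observed pair $(X,Y)$ through consistency. There are three steps, in order.

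\emph{Step 1: reduce to the experimental case.} Under Assumption~\ref{exo}, $\mathbb{P}(Y_x=y)=\mathbb{P}(Y=y\mid X=x)$ for every $x$. Hence the experimental margins required by Theorem~\ref{thm: identification_of_joint_PO} are identified from $\mathbb{P}(X,Y)$, provided $\mathbb{P}(X=x)>0$ for all $x$, which is needed for the conditionals to be defined. Substituting these margins into Theorem~\ref{thm: identification_of_joint_PO} gives the joint law $\mathbb{P}(Y_0=y_0,\dots,Y_{d_X-1}=y_{d_X-1})$. Under Assumption~\ref{MITE_D} this law is supported only on two kinds of vectors: constant vectors $(y_0,\dots,y_0)$, and ``one-step'' vectors that equal $y_0$ up to index $k$ and $y_0+1$ afterwards. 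The bracketed expressions in the statement are exactly the formulas of Theorem~\ref{thm: identification_of_joint_PO} with $\mathbb{P}(Y_k=j)$ replaced by $\mathbb{P}(Y=j\mid X=k)$.

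\emph{Step 2: attach $X$.} To bring in $X$, I need independence of the whole vector $Y_{[d_X]}$ from $X$, not just of each marginal $Y_x$. The main obstacle is that Assumption~\ref{exo} as stated is only marginal. My first attempt would be to argue that under Assumption~\ref{MITE_D} the vector $Y_{[d_X]}$ is a deterministic function of the marginals, via a quantile coupling. On the support described in Step 1, each $Y_k$ is a nondecreasing function of one latent ``type'', and the types are totally ordered: $(y_0,k)$ precedes $(y_0,k')$ when the switch to $y_0+1$ occurs later. The event $\{Y_k\le j\}$ is a down-set of this chain. If the bracket probabilities are the same conditional on each $X=x$, the type distribution is the same conditional on each $x$, so $Y_{[d_X]}\indep X$. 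Concretely, I would apply the identity of Theorem~\ref{thm: identification_of_joint_PO} conditionally on $X=x$. Assumption~\ref{MITE_D} holds a.s., so it holds conditionally; the conditional margins are $\mathbb{P}(Y_k=j\mid X=x)=\mathbb{P}(Y_k=j)$ by marginal exogeneity. Therefore the conditional joint law equals the unconditional joint law. This conditional use of Theorem~\ref{thm: identification_of_joint_PO} is the key trick, and it gives $\mathbb{P}(Y_{[d_X]}=y_{[d_X]},X=x)=\mathbb{P}(Y_{[d_X]}=y_{[d_X]})\,\mathbb{P}(X=x)$.

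\emph{Step 3: attach $Y$ and conclude.} By consistency, $\mathbb{P}(Y_{[d_X]}=y_{[d_X]},X=x,Y=y)=\mathbb{P}(Y_{[d_X]}=y_{[d_X]},X=x)\,\mathbb{I}(y_x=y)$. For a constant vector this forces $y=y_0$. For a one-step vector switching after index $k$, it forces $y=y_0$ when $x\le k$ and $y=y_0+1$ when $x\ge k+1$. All other configurations have probability zero, either because the PO vector lies outside the support or because $y\neq y_x$. This gives every case of the statement. Finally, $\lambda(\mathcal{P})$ is a linear combination of identified quantities, so it is identified.
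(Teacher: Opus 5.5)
Your proposal is correct and is essentially the paper's own argument. Applying the Theorem~\ref{thm: identification_of_joint_PO} identity conditionally on $X=x$, and then using marginal exogeneity to replace $\mathbb{P}(Y_k=j\mid X=x)$ by $\mathbb{P}(Y=j\mid X=k)$, is exactly Lemmas~\ref{lem: identification_jointPO_cond-pre} and~\ref{lem: identification_jointPO_cond}. Your consistency step is Lemma~\ref{lem: obtain_fulljointPO_from_obsdata}, done without the paper's division by $\mathbb{P}(Y=y\mid X=x)$, and your Step~1 and the quantile-coupling aside are unnecessary.
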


\textbf{Remark.} We have  the following  characterization of identification (when restricted to linear constraints):
\begin{itemize}
    \item Given experimental distributions $\mathbb{P}(Y_x)$, $\mathbb{P}(Y_0,\dots ,Y_{d_X-1})$ is identifiable if and only if we have $d_Y^{d_X}-d_X(d_Y-1)-1$ additional linearly independent constraints.
\item Under Assumption 1, given observational distributions $\mathbb{P}(X, Y)$, $\mathbb{P}(Y_0,\dots ,Y_{d_X-1},X,Y)$ is identifiable if and only if we have $d_X(d_Y^{d_X}-\{d_Y+(d_X-1)(d_Y-1)\})$ additional linearly independent constraints.
\end{itemize}
Assumption~\ref{MITE_D} imposes the {minimal number} of additional linear constraints needed to achieve identification in the sense that it provides exactly the required number of additional linearly independent constraints in both the experimental and observational settings.

\subsection{Estimating Joint POs/OVs Probabilities and Their Linear Combinations}
 We estimate the joint POs/OVs probabilities and their linear combinations from finite samples by plugging the empirical probabilities of $\mathbb{P}(Y_x)$ and $\mathbb{P}(X,Y)$
into the identification results in Theorems \ref{thm: identification_of_joint_PO} and \ref{thm: identification_jointPO_full}. 
{These are consistent estimators. A detailed discussion is given in Appendix \ref{app_est}.}


{\bf Numerical Experiments.}
We conduct numerical experiments to illustrate properties of the estimates under identification assumptions.
All details are stated in Appendix \ref{appC2}.
We simulated experimental and observational datasets and estimated $\mathbb{P}(Y_0=0,Y_1=0,Y_2=1)$, $\mathbb{E}[Y_1-Y_0|X=2,Y=2]$, and  $\mathbb{E}[(Y_1-Y_0)^2]$.
The results are reported in 
Tables \ref{tab:iden_results}-\ref{tab:iden_results_2ndmoment}  in Appendix \ref{appC2}.
We observe that all 95\% CIs of the estimates 
become narrower as the sample size increases, and that the means of the estimators are very close to the ground truth when the sample size exceeds 100.

\section{APPLICATION TO REAL-WORLD DATASETs}

\subsection{Experimental Data}
\label{subsec: experimental data}

{\bf Dataset.}
We use the Project STAR randomized trial \citep{Stock2007}, publicly available at (\url{https://search.r-project.org/CRAN/refmans/AER/html/STAR.html}). 
The experiment was conducted in Tennessee in the late 1980s across 79 schools, and the experiment randomized students into three class-size treatments:
regular classes with one teacher (``regular"; $X=0$), regular classes with a teacher and a teacher's aide (``regular+aide”; $X=1$), and small classes (``small"; $X=2$).
Teachers and students were randomly assigned, and outcomes were standardized math and reading scores. 
After removing missing values, the sample size is 5967.
Previous analyses of STAR by \citet{Stock2007} reported 
that smaller classes have positive effects on academic performance.

\textbf{Analysis.}
We take the sum of third-grade math and reading scores as the outcome $Y$. 
We group the scores into quartiles based on their empirical distribution, $[1009,1180]$, $[1181,1230]$, $[1231,1282]$, and $[1283,1527]$, and label them $0$, $1$, $2$, and $3$, respectively. 
We conduct 100 times bootstrapping \citep{Efron1979} 
to provide the mean and 95\% confidence interval (CI) for the estimator.

{\bf Assumptions.}
We assume the following MAs: 
\begin{enumerate}
\vspace{-0.2cm}
  \setlength\itemsep{3pt}
  \setlength\parskip{0pt}
  \setlength\parsep{0pt}
\item[] (A1). No MAs,
\item[] (A2). $Y_0\leq Y_1$ holds a.s. (Assumption \ref{monomatAS}),
\item[] (A3). $Y_1\leq Y_2$ holds a.s. (Assumption \ref{monomatAS}),
\item[] (A4).  $\mathbb{P}(Y_0\leq Y_1)\geq 0.95$ and $\mathbb{P}(Y_1\leq Y_2)\geq 0.95$ (Assumption \ref{assum_mono_with_prob_s}),
\item[] (A5). $0\leq Y_i-Y_j\leq 1$ holds for all $i>j$ a.s. (Assumption \ref{MITE_D}).
\vspace{-0.2cm}
\end{enumerate}
(A2) states that, for almost all students, the potential score if assigned to a regular class with both a teacher and a teacher's aide is greater than the potential score if assigned to a regular class with only a teacher.
(A3) states that, for almost all students, the potential score if assigned to a small class is greater than the potential score if assigned to a regular class with both a teacher and a teacher's aide.
(A4) combines (A2) and (A3) and permits a 5\% violation of each condition.
(A5) states that, for almost all students, increasing the treatment level never decreases the score level, and can raise it by at most one level.
By Theorem \ref{thm: identification_of_joint_PO}, the joint probabilities of POs are identifiable under Assumption (A5).
These MAs may be regarded as reasonable.

{\bf Results.}
We first estimate the mean POs (directly from the experimental data) 
with 95\% CIs and obtain 
\begin{center}
$\mathbb{E}[Y_0]$: 1228.725 [1225.898,1231.136]\vspace{0.1cm}\\
$\mathbb{E}[Y_1]$: 1228.357 [1225.792,1231.461]\vspace{0.1cm}\\
$\mathbb{E}[Y_2]$: 1244.199 [1240.400,1248.306]
\end{center}
The results show that small classes lead to slightly higher average scores. Then one may be motivated to implement small class sizes.

We next evaluate the probability $\mathbb{P}(Y_0=0,Y_1=0,Y_2\geq 1)$, which denotes the probability of the following  counterfactual scenario: 
\vspace{-0cm}
\begin{center}
``{\it A student would score in the bottom 25\% if placed in a regular class with either a teacher alone or with both a teacher and a teacher’s aide, and would score above the bottom 25\% if enrolled in a small class.}"
\end{center}
\vspace{-0cm}
This is a situation in which being assigned to a small class is necessary and sufficient to obtain a score above the bottom 25\%.

Table~\ref{tab:app1} shows that this probability lies between 0\% and 26.1\% (without making any assumptions), has a higher lower bound 6.4\% under assumption (A2), a tighter upper bound 6.4\% under (A3), a narrower interval of [1.4\%,11.4\%] under (A4), and is point-identified at 6.4\% under (A5). 
The results mean that, for instance, if one believes that assumption (A3) or (A5) is plausible, then only for a small number of subjects  (no more than 6.4\%), assignment to a small class is both necessary and sufficient to achieve a score above the bottom 25\%.
This finding may discourage one from implementing small classes.  
Note that in experimental results such as Table~\ref{tab:app1}, the bootstrap distribution of the estimator may become distorted when the true probability is close to 0 or 1 due to boundary problems.

\begin{table}[!tb]
\centering
\caption{
The means and 95\% CIs of the estimates of $\mathbb{P}(Y_0=0,Y_1=0,Y_2\geq 1)$.
LB stands for ``lower bound" and UB stands for ``upper bound."
}
\label{tab:app1}
\vspace{-0.25cm}
\scalebox{0.9}{
\begin{tabular}{c|cc}
\hline
Assumption & LB & UB\\
\hline
(A1) &  0.000 [0.000,0.000] & 0.261 [0.247,0.274] \\
(A2) &  0.064 [0.042,0.089] & 0.261 [0.247,0.274] \\
(A3) &  0.000 [0.000,0.000] & 0.064 [0.042,0.089] \\
(A4) & 0.014 [0.000,0.039] & 0.114 [0.092,0.139] \\
\hline 
\hline 
Assumption &\multicolumn{2}{c}{Identification}\\
\hline 
(A5) &  \multicolumn{2}{c}{0.064 [0.042,0.089]}  \\
\hline
\end{tabular}
}
\end{table}

\subsection{Observational Data}
\label{subsec: observational data}

{\bf Dataset.}
We use the dataset introduced in Section \ref{sec: motivation}.
\citet{Cortez2008, Helwig2017} focused on predicting students' academic performance from their attributes.
\citet{Kawakami2024} assessed the causal relationship between student performance, study time, and extra paid classes by estimating PoC.
They treat the outcome as a continuous variable and focus only on identifying the PoC.
\citet{Kawakami2025_multmed} further performed mediation analysis of the PoC.

{\bf Analysis.}
The treatment  ($X$) and outcome ($Y$) variables are  defined in Section \ref{sec: motivation}. 
We restrict our analysis to the subpopulation defined by the covariates (sex = F, schoolsup = no, famsup = yes).
{We assume that these covariates cover all the confounders such that the exogeneity (Assumption \ref{exo}) holds in this subpopulation.} 
The sample size of this subpopulation is 219.
We conduct 100 times bootstrapping 
to provide the mean and 95\% CI for the estimator.

{\bf Assumptions.}
We assume the following MAs:
\begin{enumerate}
\vspace{-0.2cm}
  \setlength\itemsep{3pt}
  \setlength\parskip{0pt}
  \setlength\parsep{0pt}
\item[] (B1). No MAs,
\item[] (B2). $Y_0\leq Y_1$ holds a.s. (Assumption \ref{monomatAS}),
\item[] (B3). $Y_2\leq Y_3$ holds a.s. (Assumption \ref{monomatAS}),
\item[] (B4).  $\mathbb{P}(Y_0\leq Y_1)\geq 0.995$ and $\mathbb{P}(Y_2\leq Y_3)\geq 0.995$ (Assumption \ref{assum_mono_with_prob_s}),
\item[] (B5). $0\leq Y_i-Y_j\leq 1$ holds for all $i>j$ a.s. (Assumption \ref{MITE_D}).
\vspace{-0.2cm}
\end{enumerate}
(B2) (resp. (B3)) states that, for almost all students, the potential score if they studied 2 to 5 hours (resp. more than 10 hours) is greater than the potential score if they studied less than 2 hours (resp. 5 to 10 hours).
(B4) combines (B2) and (B3) and permits a 0.5\% violation of each condition.
(B5) states that, for almost all students, increasing the study time alone never decreases the score level and can raise it by at most one level.
By Theorem \ref{thm: identification_jointPO_full}, 
the joint POs/OVs probabilities and their linear combinations are identifiable under (B5) and the exogeneity assumption.
These MAs may be regarded as reasonable.

{\bf Results.}
First, we estimate the POs based on $\mathbb{E}[Y_x]=\mathbb{E}[Y|x]$ by the exogeneity assumption. 
The mean estimates with 95\% CIs are as follows:
\begin{center}
$\mathbb{E}[Y_0]$: 11.154 [10.662,11.663]\vspace{0.1cm}\\
$\mathbb{E}[Y_1]$: 12.457 [11.940,12.959]\vspace{0.1cm}\\
$\mathbb{E}[Y_2]$: 12.979 [12.357,13.708]\vspace{0.1cm}\\
$\mathbb{E}[Y_3]$: 15.732 [14.220,16.909]
\end{center}
These estimates indicate that 
the average score increases as the level of study time increases.  
The results may 
motivate students to study more in order to obtain higher scores.

We then evaluate $\mathbb{P}(Y_0\leq 1,Y_1\leq 1,Y_2\leq 1|X=3,Y=2)$, whose interpretation is stated in Section \ref{sec: motivation}.
This is a situation in which studying more than 10 hours is necessary to obtain a score above 15.

Table \ref{tab:app2_1112} reports the estimates under exogeneity.
(B2), (B3), and (B4) yield tighter bounds than without MAs (B1).
A combination of (B5) and exogeneity yields a point estimate of 57.5\%, indicating that, for 57.5\% of the students, studying more than 10 hours is necessary to obtain a score above 15. 
This may be strong evidence that studying long hours is necessary to obtain a high score for half of the students, and may support the view that students’ study time is not wasted.
We note that this finding is not captured by the average results about $\mathbb{E}[Y_x]$.

We also evaluate $\mathbb{P}(Y_1\geq 1,Y_2\geq 1,Y_3\geq 1|X=0,Y=0)$,
whose interpretation is  stated in Section \ref{sec: motivation}.
This is a situation in which studying at least 2 hours is sufficient to obtain a score above 8.

Table \ref{tab:app2_0111} reports the estimates  under exogeneity.
(B2) and (B4) yield tighter bounds than without MAs (B1), while the assumption (B3) does not tighten the bounds.  
A combination of (B5) and exogeneity yields a point estimate of 25.1\%, indicating that 25.1\% of the students who studied less than 2 hours and obtained a score of no more than 8 would have obtained a score above 8 had they studied more than 2 hours.
That is, assuming (B5) and exogeneity, studying at least 2 hours is sufficient to obtain a score above 8 for a quarter of the students.
This finding is not captured by the average results about $\mathbb{E}[Y_x]$.

{Additionally, we conduct evaluations under each MA without  the exogeneity assumption. All bounds without exogeneity are uninformative, ranging from 0\% to 100\%.}

\begin{table}[!tb]
\centering
\caption{
The means and 95\% CIs of the estimates of $\mathbb{P}(Y_0\leq 1,Y_1\leq 1,Y_2\leq 1|X=3,Y=2)$ under exogeneity.
}
\label{tab:app2_1112}
\vspace{-0.25cm}
\scalebox{0.9}{
\begin{tabular}{c|cc}
\hline
Assumption & LB & UB\\
\hline
\hline
(B1)&  0.203 [0.000,0.489] & 0.997 [0.956,1.000] \\
(B2)& 0.293 [0.000,0.561] & 0.997 [0.956,1.000] \\
(B3)&  0.203 [0.000,0.489] & 0.575 [0.192,0.740] \\
(B4)&  0.207 [0.000,0.489] & 0.783 [0.531,1.000] \\
\hline 
\hline 
Assumption &\multicolumn{2}{c}{Identification}\\
\hline 
(B5)  & \multicolumn{2}{c}{0.575 [0.192,0.740]} \\
\hline
\end{tabular} 
}
\end{table}

\begin{table}[!tb]
\centering
\caption{
The means and 95\% CIs of the estimates of $\mathbb{P}(Y_1\geq 1,Y_2\geq 1,Y_3\geq 1|X=0,Y=0)$ under exogeneity.
}
\label{tab:app2_0111}
\vspace{-0.25cm}
\scalebox{0.9}{
\begin{tabular}{c|cc}
\hline
Assumption & LB & UB\\
\hline
\hline
(B1) & 0.044 [0.000,0.280] & 1.000 [1.000,1.000] \\
(B2) & 0.044 [0.000,0.280] & 0.251 [0.013,0.612] \\
(B3) & 0.044 [0.000,0.280] & 1.000 [1.000,1.000] \\
(B4) & 0.044 [0.000,0.280] & 0.404 [0.165,0.665] \\

\hline 
\hline 
Assumption & \multicolumn{2}{c}{Identification}\\
\hline 
(B5) &  \multicolumn{2}{c}{0.251 [0.013,0.612]} \\
\hline
\end{tabular} 
}
\end{table}

\section{CONCLUSION}

{We provide methods for the bounding and identification of
joint probabilities of potential outcomes and observed variables in settings involving a discrete treatment and an ordinal outcome. We introduce a broad class of flexible monotonicity assumptions, which empower researchers to  integrate nuanced domain knowledge into the analysis. 
Our approach  derives bounds that are provably as tight as the available domain knowledge allows. This bounding methodology is particularly valuable in practice because it yields informative results under plausible assumptions based on prior knowledge.
The tools developed here move beyond standard Average Causal Effects (ACEs), providing a rigorous framework for evaluating a broad class of nuanced causal quantities. This includes families of effects like the Probability of Causation (PoC) and moments of causal effects, which ultimately offer crucial, valuable support for informed decision-making across various applied fields.
}
 
{\citet{Balke1995,Sachs2023} derived closed-form symbolic bounds for causal effects or joint POs probabilities involving binary variables. However, their algorithm becomes computationally infeasible when the domain sizes $d_X$ and $d_Y$ get larger. 
We therefore adopt a numerical approach to solve the bounding problem. 
Our source code is available at 
(\url{https://github.com/Nmath997/AISTATS2026_bounding.git}).}

If additional variables such as covariates or mediators are included, tighter bounds can be obtained, as shown in \citep{Kuroki2011,Dawid2016}. 
Extending our results to incorporate covariates or mediators represents an interesting direction for future work. 
However, such an extension would involve a larger number of parameters and may become computationally difficult to solve.

One limitation of this work is that we do not incorporate constraints implied by causal graphs.  
The problem of bounding counterfactual probabilities in a general causal graph with discrete variables has been studied in \citep{cozman2000credal, Zhang2022_Partial,zaffalon2020structural, zaffalon2024efficient,Duarte2024}, including multi-linear programming for causal bounds in quasi-Markovian models \citep{shridharan2023causal,shridharan2023scalable,Arroyo2025}. Additionally,  \citet{Maiti2025} extended the binary MA to identify counterfactual probabilities in causal graphs. Investigating MAs for bounding counterfactual probabilities in a general causal graph  is a future research direction.

\section*{Acknowledgements}

The authors thank the anonymous reviewers for their time
and thoughtful comments.

\bibliographystyle{plainnat}
\bibliography{citation}

\section*{Checklist}



 \begin{enumerate}

 \item For all models and algorithms presented, check if you include:
 \begin{enumerate}
   \item A clear description of the mathematical setting, assumptions, algorithm, and/or model. [Yes]
   \item An analysis of the properties and complexity (time, space, sample size) of any algorithm. [Yes]
   \item (Optional) Anonymized source code, with specification of all dependencies, including external libraries. [Yes] The source code is available at 
(\url{https://github.com/Nmath997/AISTATS2026_bounding.git}).
 \end{enumerate}

 \item For any theoretical claim, check if you include:
 \begin{enumerate}
   \item Statements of the full set of assumptions of all theoretical results. [Yes] All theorems are presented with their corresponding assumptions.
   \item Complete proofs of all theoretical results. [Yes]
   \item Clear explanations of any assumptions. [Yes]  In Appendix \ref{app_proof}.    
 \end{enumerate}

 \item For all figures and tables that present empirical results, check if you include:
 \begin{enumerate}
   \item The code, data, and instructions needed to reproduce the main experimental results (either in the supplemental material or as a URL). [Yes] 
   \item All the training details (e.g., data splits, hyperparameters, how they were chosen). [Not Applicable]
         \item A clear definition of the specific measure or statistics and error bars (e.g., with respect to the random seed after running experiments multiple times). [Yes] We have provided 95\% confidence intervals for all reported estimates.
         \item A description of the computing infrastructure used. (e.g., type of GPUs, internal cluster, or cloud provider). [Yes] We use a computer with AMD Ryzen 7 8840U 3.30GHz processor and 16GB of RAM. 
 \end{enumerate}

 \item If you are using existing assets (e.g., code, data, models) or curating/releasing new assets, check if you include:
 \begin{enumerate}
   \item Citations of the creator If your work uses existing assets. [Not Applicable]
   \item The license information of the assets, if applicable. [Not Applicable]
   \item New assets either in the supplemental material or as a URL, if applicable. [Not Applicable]
   \item Information about consent from data providers/curators. [Not Applicable]
   \item Discussion of sensible content if applicable, e.g., personally identifiable information or offensive content. [Not Applicable]
 \end{enumerate}

 \item If you used crowdsourcing or conducted research with human subjects, check if you include:
 \begin{enumerate}
   \item The full text of instructions given to participants and screenshots. [Not Applicable]
   \item Descriptions of potential participant risks, with links to Institutional Review Board (IRB) approvals if applicable. [Not Applicable]
   \item The estimated hourly wage paid to participants and the total amount spent on participant compensation. [Not Applicable]
 \end{enumerate}

 \end{enumerate}

\newpage
\appendix
\onecolumn

\clearpage
\appendix
\thispagestyle{empty}

\onecolumn
\aistatstitle{Appendix to ``Bounds and Identification of Joint Probabilities of Potential Outcomes and Observed Variables under Monotonicity Assumptions"}

\section{DETAILS OF MONOTONICITY ASSUMPTIONS IN THE PREVIOUS WORKS}
\label{app_mono}

In this appendix, we provide details of the monotonicity assumptions used in previous works.

({\bf Assumption \ref{monobin_s}}.)
For binary treatment and outcome ($d_X=d_Y=2$), \citet{Pearl1999} and \citet{Tian2000} established bounds for PoC without additional assumptions and provided an identification theorem under the exogeneity assumption, i.e., $Y_x\indep X$ for any $x \in \{0,1\}$, combined with the monotonicity assumption, i.e.,
\begin{equation}
\begin{aligned}
\mathbb{P}(Y_0=1,Y_1=0)=0.
\end{aligned}
\end{equation}
The monotonicity is equivalently stated as
\begin{equation}
\mathbb{P}(Y_0 \leq Y_1)=1,
\end{equation} 
or
\begin{equation}
Y_0 \leq Y_1, \text{ almost surely},
\end{equation} 
where a statement holds almost surely if it is true with probability one.
This condition represents that the POs are monotonically increasing with the treatment level for almost every subject.

The monotonicity assumption is a natural and reasonable assumption in a wide range of empirical studies \citep{Hernan2024}.
In the instrumental variables (IV) literature, the monotonicity assumption is often invoked to describe the relationship between binary IV ($Z$) and treatment ($X$) in the identification of local average treatment effects (LATE) \citep{Imbens1994,Angrist1995}.
They provide an identification theorem for PoC under the proposed monotonicity condition together with the exogeneity assumption.

({\bf Assumptions \ref{mono_dc} and \ref{mono_it}}.)
\citet{Zhang2025} studied PN for a binary treatment and ordinal outcome, i.e., $\mathbb{P}(Y_0<y|X=1,Y=y)$, and derived bounds for PN under
the following monotonicity assumption:
\begin{equation}
Y_0 \leq Y_1, \text{ almost surely}.
\end{equation}
This is a straightforward extension of the monotonicity assumption in \citep{Pearl1999,Tian2000}.
They provided closed-form bounds for PN under this assumption.

This paper also introduced an assumption termed the monotonic incremental treatment effect (MITE) assumption for identifying PN, which states
\begin{equation}
0 \leq Y_1 - Y_0 \leq 1, \text{ almost surely}.
\end{equation}
This assumption is stronger than Assumption \ref{mono_dc}.
This assumption serves as the identification condition for PN.
This assumption states that the treatment $X$ does not decrease the level of $Y$ and increases it by at most one level.
They established an identification lemma for $\mathbb{P}(Y_1, Y_0|X=1)$ given $\mathbb{P}(Y_0|X=1)$ under the above monotonicity condition. 
Then, they established the identification theorem for PN through the lemma.

({\bf Assumption \ref{monobin_w}}.)
The relaxed assumption, i.e., $\epsilon$-limited harm assumption, for binary treatment and outcome by \citet{Mueller2023} states that:
\begin{equation}
\begin{aligned}
0\leq \mathbb{P}(Y_0=1,Y_1=0)\leq \epsilon.
\end{aligned}
\end{equation}
This assumption relaxes the monotonicity assumption in \citep{Pearl1999,Tian2000} by allowing violations in which subjects have POs $Y_0 = 1$ and $Y_1 = 0$ for at most an $\epsilon$ proportion of the population.
This is equivalent to $l\leq \mathbb{P}(Y_0 \leq Y_1)\leq 1$, where $l=1-\epsilon$.
They provided closed-form bounds for PNS with this assumption.
When $\epsilon=0$ or $l=1$, Assumption \ref{monobin_w} reduces to Assumption \ref{monobin_s}.

({\bf Assumption \ref{monoAS}}.)
The above studies have been limited to settings with binary treatment. 
\citet{Manski1997} has introduced the following assumption:
\begin{equation}
Y_0\leq Y_1 \leq \dots \leq Y_{d_X-1}, \text{ almost surely}.
\end{equation}
This condition implies that the POs are monotonically increasing with the treatment level for almost every subject.
This is equivalently stated as $\mathbb{P}(Y_0\leq Y_1 \leq \dots \leq Y_{d_X-1})=1$.
Their target is the average causal effect (ACE) for discrete, continuous, or mixed outcomes.
The joint POs/OVs probabilities are not their target.

\section{CAUSAL QUANTITIES REPRESENTED BY LINEAR COMBINATIONS OF JOINT POs/OVs PROBABILITIES}

\label{app AB}
In this appendix, we show more examples that can be expressed as linear combinations of $\mathcal{P}$ (Section \ref{subsec:objective_of_study}).

{\bf PoC for discrete treatment and outcome.}
Recently, \citet{ALi2024} have introduced new families of PoC for non-binary discrete treatment—including the probability of preservation, the probability of replacement, the probability of substitution, the probability of necessity, and the probability of necessity and sufficiency—extending the classical PoC framework to nonbinary and discrete treatments and outcomes, using the joint probabilities of POs and OVs $\mathbb{P}(Y_0=y_0,Y_1=y_1,\dots ,Y_{d_X-1}=y_{d_X-1},X=x,Y=y)$.

PoC with a single hypothetical term for non-binary treatment and outcome is given as 

\hspace{0.2cm}(i) The probability of preservation $\mathbb{P}(Y_j=y,Y=y)$,

\hspace{0.2cm}(ii) The probability of replacement $\mathbb{P}(Y_j=y_j,Y=y)$ ($y_j \ne y$),

\hspace{0.2cm}(iii) The probability of substitute $\mathbb{P}(Y_j=y_j,X=x)$ ($x \ne j$), and

\hspace{0.2cm}(iv) The probability of necessity $\mathbb{P}(Y_j=y_j,Y=y,X=x)$ ($x \ne j$).

PoC with multiple hypothetical terms for non-binary treatment and outcome are given as

\hspace{0.2cm}(i) The probability of necessity and sufficiency $\mathbb{P}(Y_0=y_0,Y_1=y_1,\dots ,Y_{d_X-1}=y_{d_X-1})$,

\hspace{0.2cm}(ii) The probability of substitute $\mathbb{P}(Y_0=y_0,Y_1=y_1,\dots ,Y_{d_X-1}=y_{d_X-1},X=x)$, 

\hspace{0.2cm}(iii) The probability of replacement $\mathbb{P}(Y_0=y_0,Y_1=y_1,\dots ,Y_{d_X-1}=y_{d_X-1},Y=y)$, and 

\hspace{0.2cm}(iv) The probability of necessity $\mathbb{P}(Y_0=y_0,Y_1=y_1,\dots ,Y_{d_X-1}=y_{d_X-1},X=x,Y=y)$.

They are given as marginals of the joint POs/OVs probabilities.
\citet{ALi2024} provided closed forms of their bounds without additional assumptions, and the bounds for PoC with a single hypothetical term are sharp.

{\bf Joint expectation of a function  over POs and OVs.}
The joint expectation of a function  over POs and OVs can be expressed as linear combinations of ${\cal P}$:
\begin{equation}
\begin{aligned}
&\mathbb{E}[g(Y_0,\dots,Y_{d_X-1},X,Y)]\\
&=\sum_{y_0=0}^{d_Y-1}\dots\sum_{y_{d_X-1}=0}^{d_Y-1}\sum_{x=0}^{d_X-1}\sum_{y=0}^{d_Y-1}g(y_0,\dots,y_{d_X-1},x,y)\times\mathbb{P}(Y_0=y_0,Y_1=y_1,\dots ,Y_{d_X-1}=y_{d_X-1},X=x,Y=y).
\end{aligned}
\end{equation}
For example, the moments of causal effects \citep{Kawakami2025_moments}, such as
\begin{equation}
\begin{aligned}
&\mathbb{E}[(Y_1-Y_0)^m]\\
&=\sum_{y_0=0}^{d_Y-1}\dots\sum_{y_{d_X-1}=0}^{d_Y-1}\sum_{x=0}^{d_X-1}\sum_{y=0}^{d_Y-1}(y_1-y_0)^m\times\mathbb{P}(Y_0=y_0,Y_1=y_1,\dots ,Y_{d_X-1}=y_{d_X-1},X=x,Y=y),    
\end{aligned}
\end{equation}
and the product moments of causal effects \citep{Kawakami2025_moments}, such as 
\begin{equation}
\begin{aligned}
&\mathbb{E}[(Y_1-Y_0)(Y_2-Y_1)]\\ 
&=\sum_{y_0=0}^{d_Y-1}\dots\sum_{y_{d_X-1}=0}^{d_Y-1}\sum_{x=0}^{d_X-1}\sum_{y=0}^{d_Y-1}(y_1-y_0)(y_2-y_1)\times\mathbb{P}(Y_0=y_0,Y_1=y_1,\dots ,Y_{d_X-1}=y_{d_X-1},X=x,Y=y).
\end{aligned}  
\end{equation}
can be expressed as linear combinations of ${\cal P}$.

When $\mathbb{P}(X=l,Y=m)$ is given,
the conditional probabilities given $X$ and $Y$ can be expressed as the following linear relationship:
\begin{equation}
\begin{aligned}
&\mathbb{P}(Y_{[d_X]}=y_{[d_X]}|X=l,Y=m)=\frac{\mathbb{P}(Y_{[d_X]}=y_{[d_X]},X=l,Y=m)}{\mathbb{P}(X=l,Y=m)},
\end{aligned}
\end{equation}
and this quantity is also bounded if the joint POs/OVs probabilities are bounded, e.g., PN \citep{Zhang2025}.

Furthermore, when $\mathbb{P}(X=l,Y=m)$ is given,
the joint expectation of a function $\lambda$ over POs, conditional on $X$ and $Y$, i.e., $\mathbb{E}[g(Y_0,\dots,Y_{d_X-1},X,Y)|X=l,Y=m]$, can be expressed as the following linear relationship:
\begin{equation}
\begin{aligned}
&\mathbb{E}[g(Y_0,\dots,Y_{d_X-1},X,Y)|X=l,Y=m]\\
&=\sum_{y_0=0}^{d_Y-1}\dots\sum_{y_{d_X-1}=0}^{d_Y-1}g(y_0,\dots,y_{d_X-1},l,m)\times
\hspace{0mm}\frac{\mathbb{P}(Y_0=y_0,Y_1=y_1,\dots ,Y_{d_X-1}=y_{d_X-1},X=l,Y=m)}{\mathbb{P}(X=l,Y=m)}.
\end{aligned}
\end{equation}
The causal effect given $(X=l,Y=m)$,
\begin{equation}
\begin{aligned}
&\mathbb{E}[Y_x-Y_{x'}|X=l,Y=m]\\
&=\sum_{y_0=0}^{d_Y-1}\dots\sum_{y_{d_X-1}=0}^{d_Y-1}(y_x-y_{x'})\times
\hspace{0mm}\frac{\mathbb{P}(Y_0=y_0,Y_1=y_1,\dots ,Y_{d_X-1}=y_{d_X-1},X=l,Y=m)}{\mathbb{P}(X=l,Y=m)}.
\end{aligned}
\end{equation}
is a member of the class of posterior causal effects, a concept introduced by \citet{Lu2022}, who define such effects as expectations conditional on evidence observed for a post-treatment variable.
It evaluates the counterfactual average causal effect of changing the intervention from $X = x'$ to $X = x$ for subjects who, in reality, had $Y = m$ under $X = l$.

\section{PROOFS }
\label{app_proof}

\label{appB}

In this section, we provide the proofs of the propositions and theorems stated in the main body of the paper.

{\bf Proposition \ref{prop1}.}
{\it
We have the following  relationships among the MAs presented thus far:

(1) Assumptions~\ref{monobin_s} through~\ref{monomatASS} are all special cases of Assumption~\ref{assum_mono_with_prob_s}. 

(2) Assumption \ref{mono_it} is  a special case of Assumptions \ref{weak_mono-bi}, \ref{monomatAS}, and \ref{monomatASS}. 

(3) Assumption \ref{monobin_w} is  a special case of Assumptions \ref{weak_mono-bi}, \ref{monoLB}, and \ref{monomatASS}. 

(4) Assumption \ref{monoAS} is  a special case of Assumptions \ref{monoLB}, \ref{monomatAS}, and \ref{monomatASS}.
}

\begin{proof}
First, we recall that Assumptions \ref{monobin_s} and \ref{mono_dc} are special cases of Assumption \ref{mono_it} (Appendix \ref{app_mono}), and that  Assumption \ref{monomatASS} is a special case of Assumption \ref{assum_mono_with_prob_s} (Section \ref{sec:mono_assumptions}).
We also remark that Assumptions \ref{weak_mono-bi}, \ref{monoLB}, \ref{monomatAS} are special cases of Assumption \ref{monomatASS}.
Indeed, Assumption \ref{monomatASS} reduces to Assumption \ref{weak_mono-bi} (resp.\ref{monoLB}, \ref{monomatAS}) when $d_X=2$ and  $\underline{D}_{st}=\underline{D}$ (resp. when $\underline{D}_{st}=-\infty$, $\overline{D}_{st}=\infty$ for all $s>t$ with $s=t+1$, when $L=U=1$).

Then we can prove each assertion of the proposition as follows:

(2): We obtain Assumption \ref{mono_it} from Assumption \ref{weak_mono-bi} (resp. \ref{monomatAS}) by putting $\underline{D}=0,\overline{D}=1,L=-\infty,U=\infty$ (resp. by setting $d_X=2$ and putting $\underline{D}_{10}=0,\overline{D}_{10}=1$). Since these are special cases of Assumption \ref{monomatASS}, the assertion (2) holds.

(3): We obtain Assumption \ref{monobin_w} from Assumption \ref{weak_mono-bi} (resp. \ref{monoLB}) by setting  $d_Y=2$ and putting $\underline{D}=0,\overline{D}=1,L=0,U=\epsilon$ (resp. by setting $d_X=d_Y=2$ and putting $L=1-\epsilon, U=1$). Since these are special cases of Assumption \ref{monomatASS}, the assertion (3) holds.

(4): We obtain Assumption \ref{monoAS} from Assumption \ref{monoLB} (resp. \ref{monomatAS}) by putting $L=-\infty,U=\infty$ (resp. by putting $\underline{D}_{st}=-\infty$, $\overline{D}_{st}=\infty$ for all $s>t$ with $s=t+1$). Since these are special cases of Assumption \ref{monomatASS}, the assertion (4) holds.

Now we also have the assertion (1) from (2)--(4) and remarks in the beginning of this proof.
\end{proof}

{\bf Proposition \ref{prop2}.}
{\it
The constraints implied by Assumption~\ref{assum_mono_with_prob_s} are given by
\begin{equation}
\begin{aligned}
&L^w\leq  \sum\nolimits_{y_0=0}^{d_Y-1}\dots \sum\nolimits_{y_{d_X-1}=0}^{d_Y-1}\sum\nolimits_{x=0}^{d_X-1}\left(\prod_{s,t\in [d_X];s>t}\mathbb{I}({\underline{D}}_{st}^{(w)}\leq y_s- y_t\leq {\overline{D}}_{st}^{(w)})\right)p_{y_{[d_X]},x}\leq U^w
\end{aligned}
\end{equation}
for $w=1,\dots,W$.
The above constraints are all linear constraints in ${\cal P}$.
}

{\bf Remark.} $[d_X]$ represents the set $\{0,1,\dots ,d_X-1\}$.

\begin{proof}
We have the following relationships:
\begin{equation}
\begin{aligned}
&\bigwedge_{s,t\in [d_X];s>t}({\underline{D}}_{st}^{(w)}\leq y_s- y_t\leq {\overline{D}}_{st}^{(w)})\\
&\iff {\underline{D}}_{st}^{(w)}\leq y_s- y_t\leq {\overline{D}}_{st}^{(w)} \text{ holds for any } s,t(s>t)\\
& \iff \mathbb{I}({\underline{D}}_{st}^{(w)}\leq y_s- y_t\leq {\overline{D}}_{st}^{(w)})=1 \text{ for any } s,t(s>t)\\
& \iff \prod_{s>t}\mathbb{I}({\underline{D}}_{st}^{(w)}\leq y_s- y_t\leq {\overline{D}}_{st}^{(w)})=1.
\end{aligned}
\end{equation}
Then, the following relationship holds:
\begin{equation}
\begin{aligned}
&\mathbb{P}\left(\bigwedge_{s,t\in [d_X];s>t}({\underline{D}}_{st}^{(w)}\leq y_s- y_t\leq {\overline{D}}_{st}^{(w)})\right)\\
&=\mathbb{P}\left(\prod_{s>t}\mathbb{I}({\underline{D}}_{st}^{(w)}\leq y_s- y_t\leq {\overline{D}}_{st}^{(w)})=1\right)\\
&=\sum_{y_0=0}^{d_Y-1}\dots \sum_{y_{d_X-1}=0}^{d_Y-1}\sum_{x=0}^{d_X-1}\left(\prod_{s>t}\mathbb{I}({\underline{D}}_{st}^{(w)}\leq y_s- y_t\leq {\overline{D}}_{st}^{(w)})\right)\times p_{y_{[d_X]},x}.
\end{aligned}
\end{equation}
They are all linear constraints in ${\cal P}$ since $\prod_{s>t}\mathbb{I}({\underline{D}}_{st}^{(w)}\leq y_s- y_t\leq {\overline{D}}_{st}^{(w)})$ is a constant $0$ or $1$.
Then, we have Proposition \ref{prop2}.
\end{proof}

{\bf Theorem \ref{thoe_lp}.}
{\it
{Given data (experimental, observational, or both),   under Assumption \ref{assum_mono_with_prob_s}, 
solving the  LP problem in \eqref{eq-LP} yields sharp bounds for $\lambda({\cal P})$.}
}

\begin{proof}
In general, optima obtained by solving LP problems are global optima. In addition, since constraints from data and Eq. \eqref{eq: mono_prop_p} restricts $\mathfrak{p}$ to a closed subset of $[0,1]^{d_Y^{d_X}{d_X}}$, LP \eqref{eq-LP} has the minimum $\underline{\lambda}(\mathcal{P})$ and the maximum $\overline{\lambda}(\mathcal{P})$. 
Then $[\underline{\lambda}(\mathcal{P}),\overline{\lambda}(\mathcal{P})]$ obtained by solving \eqref{eq: mono_prop_p} is a bound of $\lambda(\mathcal{P})$, and it is sharp since $\underline{\lambda}(\mathcal{P})$ and $\overline{\lambda}(\mathcal{P})$ are achieved by some $\mathfrak{p}$ which satisfies constraints from data and Eq. \eqref{eq: mono_prop_p}.
\end{proof}

{\bf Theorem \ref{thm: identification_of_joint_PO}.}
{\it
Under Assumption \ref{MITE_D}, 
given experimental distributions $\mathbb{P}(Y_x)$,
$\mathbb{P}(Y_0,\dots ,Y_{d_X-1})$ is identifiable as
\begin{equation}
\begin{aligned}
&\mathbb{P}(Y_0=\cdots =Y_{d_X-1}=y_0)= \sum\nolimits_{j=0}^{y_0} \mathbb{P}(Y_{d_X-1}=j)-\sum\nolimits_{j=0}^{y_0 -1}\mathbb{P}(Y_0=j)
\end{aligned}
\end{equation}
for $y_0\in [d_Y]$, and 
\begin{equation}
\begin{aligned}
&\mathbb{P}(Y_0=\cdots =Y_k=y_0,\ Y_{k+1}=\cdots =Y_{d_X-1}=y_0 +1)=\sum\nolimits_{j=0}^{y_0} \mathbb{P}(Y_k=j)-\sum\nolimits_{j=0}^{y_0} \mathbb{P}(Y_{k+1}=j)
\end{aligned}
\end{equation}
for $k\in [d_X-1]$ and $y_0\in[d_Y-1]$; otherwise,
\begin{equation}
\label{eq: jointPO_others_appendix}
\mathbb{P}(Y_0=y_0,\dots ,Y_{d_X-1}=y_{d_X-1})=0.
\end{equation}
Then, any $\lambda'({\cal P}')$ is identifiable.
}

\begin{proof}
Eq.~\eqref{eq: jointPO_others_appendix} follows from Assumption \ref{MITE_D}.
Denote
\begin{equation}
\begin{aligned}
&\theta _{y_0}  =\mathbb{P}(Y_0=\cdots =Y_{d_X-1}=y_0),
\end{aligned}
\end{equation} 
for $y_0\in[d_Y]$, and
\begin{equation}
\begin{aligned}
&\varphi_{y_0,k}  =\mathbb{P}(Y_0=\cdots =Y_k=y_0,Y_{k+1}=\cdots =Y_{d_X-1}=y_0 +1)
\end{aligned}
\end{equation}
for $y_0\in[d_Y-1], k\in[d_X-1]$.
Then, we have
\begin{equation}
\mathbb{P}(Y_i=j)=\theta _j +\sum_{k=0}^{i-1}\varphi_{j-1,k}+\sum_{k=i}^{d_X-2}\varphi_{j,k},
\end{equation}
for $i\in[d_X]$ and $j\in[d_Y]$, where $\varphi_{-1,k}=\varphi_{d_Y-1,k}=0$. This implies
\begin{equation}
    \mathbb{P}(Y_{i+1}=j)-\mathbb{P}(Y_i=j)=\varphi_{j-1,i}-\varphi_{j,i}
\end{equation}
for $i\in[d_X-1]$. Then, for each $k\in[d_X-1]$, we can identify $\varphi_{y_0,k}$ as 
\begin{equation}
    \varphi_{y_0,k}=\sum_{j=0}^{y_0}(\varphi_{j,k}-\varphi_{j-1,k})=\sum_{j=0}^{y_0} (\mathbb{P}(Y_k=j)-\mathbb{P}(Y_{k+1}=j)).
\end{equation}
$\theta _{y_0}$ are identified as
\begin{equation}
\begin{aligned}
    \theta _{y_0}
    & =\mathbb{P}(Y_0=y_0)-\sum_{k=0}^{d_X-2}\varphi_{y_0 ,k}\\
    & =\mathbb{P}(Y_0=y_0)-\sum_{l=0}^{y_0}\sum_{k=0}^{d_X-2}\left( \mathbb{P}(Y_k=l)- \mathbb{P}(Y_{k+1}=l)\right)\\
    & = \mathbb{P}(Y_0=y_0)-\sum_{l=0}^{y_0} (\mathbb{P}(Y_0=l)-\mathbb{P}(Y_{d_X-1}=l))\\
    & = \sum_{l=0}^{y_0} \mathbb{P}(Y_{d_X-1}=l)-\sum_{l=0}^{y_0 -1}\mathbb{P}(Y_0=l).
\end{aligned}
\end{equation}
for $y_0\in [d_Y]$. Then, we have Theorem \ref{thm: identification_of_joint_PO}.
\end{proof}

We have the following three lemmas for the proof of Theorem \ref{thm: identification_jointPO_full}. 

\begin{lemma}
\label{lem: obtain_fulljointPO_from_obsdata}
$\mathbb{P}(Y_0,\dots,Y_{d_X-1},X,Y)$ can be represented by
$\mathbb{P}(Y_0,\dots,Y_{d_X-1}|X)$ and $\mathbb{P}(X,Y)$ as
\begin{equation}
\begin{aligned}
&\mathbb{P}(Y_0,\dots,Y_{x-1},Y_x=y_x,Y_{x+1},\dots,Y_{d_X-1},X=x,Y=y)\\
&=\frac{\mathbb{P}(Y_0,\dots,Y_{x-1},Y_x=y_x,Y_{x+1},\dots,Y_{d_X-1}|X=x)}{\mathbb{P}(Y=y|X=x)}\times\mathbb{I}(y_x=y)\mathbb{P}(X=x,Y=y).
\end{aligned}
\end{equation}
\end{lemma}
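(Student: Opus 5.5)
The plan is to factor the joint law by the chain rule and then use counterfactual consistency to remove $Y$ from the conditioning event. Fix $x$, $y$ and values $y_0,\dots,y_{d_X-1}$, and write $\mathbf{E}$ for the event $\{Y_0=y_0,\dots,Y_{d_X-1}=y_{d_X-1}\}$. Then
\begin{equation*}
\mathbb{P}(\mathbf{E},X=x,Y=y)=\mathbb{P}(\mathbf{E},Y=y\mid X=x)\,\mathbb{P}(X=x).
\end{equation*}

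By consistency, $X=x$ implies $Y=Y_x$. Conditional on $X=x$, the event $\{\mathbf{E},Y=y\}$ therefore coincides with $\{\mathbf{E},Y_x=y\}$. If $y_x\neq y$ this event is empty. If $y_x=y$ it equals $\mathbf{E}$, because $\mathbf{E}$ already fixes $Y_x=y_x$. This gives
\begin{equation*}
\mathbb{P}(\mathbf{E},Y=y\mid X=x)=\mathbb{P}(\mathbf{E}\mid X=x)\,\mathbb{I}(y_x=y).
\end{equation*}
This is the same identity the paper already uses when it parameterizes $\mathfrak{p}$ without $Y$.

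The last step rewrites $\mathbb{P}(X=x)$ as $\mathbb{P}(X=x,Y=y)/\mathbb{P}(Y=y\mid X=x)$, which yields the stated formula. Strictly, this step needs $\mathbb{P}(Y=y\mid X=x)>0$, and that requirement is the only delicate point. When $y_x\neq y$, both sides are $0$ thanks to the indicator. When $y_x=y$ but $\mathbb{P}(Y=y\mid X=x)=0$, consistency gives $\mathbb{P}(Y_x=y\mid X=x)=0$, so $\mathbb{P}(\mathbf{E}\mid X=x)=0$ and the left side also vanishes. In that case I will read the right side as $0$ by the convention $0/0\cdot 0=0$, or else restrict the lemma to the support of $(X,Y)$. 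With that convention the expression is well defined throughout, and the lemma follows.
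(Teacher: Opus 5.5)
Your proof is correct and follows essentially the paper's argument: the chain rule, then counterfactual consistency to replace $Y$ by $Y_x$ and produce $\mathbb{I}(y_x=y)$, then the rewriting $\mathbb{P}(X=x)=\mathbb{P}(X=x,Y=y)/\mathbb{P}(Y=y\mid X=x)$. The paper merely conditions on $(X=x,Y=y)$ first and applies consistency last. Your explicit handling of the case $\mathbb{P}(Y=y\mid X=x)=0$ is a small improvement, since the paper's chain of equalities silently assumes $\mathbb{P}(X=x,Y=y)>0$.
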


\begin{proof}
The joint probabilities
$\mathbb{P}(Y_0,\dots,Y_{d_X-1},X,Y)$ can be represented by
$\mathbb{P}(Y_0,\dots,Y_{d_X-1}|X=x)$ and $\mathbb{P}(X,Y)$ as follows:
\begin{equation}
\begin{aligned}
&\mathbb{P}(Y_0,\dots,Y_{x-1},Y_x=y_x,Y_{x+1},\dots,Y_{d_X-1},X=x,Y=y)\\
&=\mathbb{P}(Y_0,\dots,Y_{x-1},Y_x=y_x,Y_{x+1},\dots,Y_{d_X-1}|X=x,Y=y)\times\mathbb{P}(X=x,Y=y)\\
&=\frac{\mathbb{P}(Y_0,\dots,Y_{x-1},Y_x=y_x,Y_{x+1},\dots,Y_{d_X-1},Y=y|X=x)}{\mathbb{P}(Y=y|X=x)}\times\mathbb{P}(X=x,Y=y)\\
&=\frac{\mathbb{P}(Y_0,\dots,Y_{x-1},Y_x=y_x,Y_{x+1},\dots,Y_{d_X-1},Y_x=y|X=x)}{\mathbb{P}(Y=y|X=x)}\times\mathbb{P}(X=x,Y=y)\\
&=\frac{\mathbb{P}(Y_0,\dots,Y_{x-1},Y_x=y_x,Y_{x+1},\dots,Y_{d_X-1}|X=x)}{\mathbb{P}(Y=y|X=x)}\times\mathbb{I}(y_x=y)\mathbb{P}(X=x,Y=y).
\end{aligned}
\end{equation}
Then, we have Lemma \ref{lem: obtain_fulljointPO_from_obsdata}.
\end{proof}

\begin{lemma}
\label{lem: identification_jointPO_cond-pre}
Under Assumptions \ref{MITE_D}, 
with observational data,
$\mathbb{P}(Y_0,\dots ,Y_{d_X-1}|X=x)$ is given as 
\begin{equation}
\begin{aligned}
&\mathbb{P}(Y_0=\cdots =Y_{d_X-1}=y_0|X=x)= \sum_{l=0}^{y_0} \mathbb{P}(Y_{d_X-1}=l|X=x)-\sum_{l=0}^{y_0 -1}\mathbb{P}(Y_0=l|X=x)
\end{aligned}
\end{equation}
for $y_0\in[d_Y]$ and $x\in[d_X]$, and 
\begin{equation}
\begin{aligned}
&\mathbb{P}(Y_0=\cdots =Y_k=y_0,Y_{k+1}=\cdots =Y_{d_X-1}=y_0 +1|X=x)=\sum_{j=0}^{y_0} (\mathbb{P}(Y_k=j|X=x)-\mathbb{P}(Y_{k+1}=j|X=x))
\end{aligned}
\end{equation}
for $k\in[d_X-1]$, $y_0\in[d_Y-1]$, and $x\in[d_X]$; otherwise,
\begin{equation}
\label{eq: cond_jointPO_others-pre}
\mathbb{P}(Y_0=y_0,\dots ,Y_{d_X-1}=y_{d_X-1}|X=x)=0.
\end{equation}
\end{lemma}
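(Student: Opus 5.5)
The plan is to rerun the argument of Theorem~\ref{thm: identification_of_joint_PO} with every probability conditioned on $\{X=x\}$ for a fixed $x\in[d_X]$ with $\mathbb{P}(X=x)>0$. Assumption~\ref{MITE_D} holds almost surely, so it also holds almost surely under the conditional law $\mathbb{P}(\cdot\mid X=x)$. The first task is to describe the support.

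Take a vector $y_{[d_X]}$ with $0\le y_s-y_t\le 1$ for all $s>t$. Then it is nondecreasing in the index, and $y_{d_X-1}-y_0\le 1$. So either every coordinate equals some $y_0$, or there is a unique $k\in[d_X-1]$ with $y_0=\cdots=y_k=y_0$ and $y_{k+1}=\cdots=y_{d_X-1}=y_0+1$, where $y_0\in[d_Y-1]$. Every other vector violates the assumption, so it has conditional probability zero. This gives Eq.~\eqref{eq: cond_jointPO_others-pre}.

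Next I introduce the conditional unknowns
\[
\theta_{y_0}^{x}=\mathbb{P}(Y_0=\cdots=Y_{d_X-1}=y_0\mid X=x),
\]
\[
\varphi_{y_0,k}^{x}=\mathbb{P}(Y_0=\cdots=Y_k=y_0,\,Y_{k+1}=\cdots=Y_{d_X-1}=y_0+1\mid X=x).
\]
Marginalizing over the support gives, for $i\in[d_X]$ and $j\in[d_Y]$, the conditional analogue of the decomposition in the proof of Theorem~\ref{thm: identification_of_joint_PO}:
\[
\mathbb{P}(Y_i=j\mid X=x)=\theta^x_j+\sum_{k=0}^{i-1}\varphi^x_{j-1,k}+\sum_{k=i}^{d_X-2}\varphi^x_{j,k},
\]
with the convention $\varphi^x_{-1,k}=\varphi^x_{d_Y-1,k}=0$.

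Differencing consecutive indices gives $\mathbb{P}(Y_{i+1}=j\mid x)-\mathbb{P}(Y_i=j\mid x)=\varphi^x_{j-1,i}-\varphi^x_{j,i}$. Telescoping over $j=0,\dots,y_0$ yields the stated formula for $\varphi^x_{y_0,k}$. Substituting back into the $i=0$ equation, $\theta^x_{y_0}=\mathbb{P}(Y_0=y_0\mid x)-\sum_k\varphi^x_{y_0,k}$. The double sum telescopes in $k$ and gives $\sum_{l\le y_0}\mathbb{P}(Y_{d_X-1}=l\mid x)-\sum_{l\le y_0-1}\mathbb{P}(Y_0=l\mid x)$, exactly as in the unconditional case.

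The only step needing care is the support characterization, and within it the claim that the patterns are exhausted by "constant" or "a single unit jump at one position." This uses both monotonicity and the bound $y_{d_X-1}-y_0\le1$. Everything else is a verbatim relabeling of the proof of Theorem~\ref{thm: identification_of_joint_PO}.

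The lemma only asserts the representation in terms of the conditional marginals $\mathbb{P}(Y_k\mid X=x)$, so exogeneity is not needed here. It enters later, to replace $\mathbb{P}(Y_k=j\mid X=x)$ by $\mathbb{P}(Y=j\mid X=k)$ in Theorem~\ref{thm: identification_jointPO_full}.
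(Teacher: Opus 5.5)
Your proposal is correct and follows the paper's proof essentially line for line. It uses the same $\theta^x,\varphi^x$ parametrization, the same decomposition of $\mathbb{P}(Y_i=j\mid X=x)$, differencing in $i$, and telescoping first in $j$ and then in $k$; your explicit support argument (nondecreasing with total increase at most one) and the caveat $\mathbb{P}(X=x)>0$ only spell out what the paper leaves implicit.
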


\begin{proof}
    
Eq.~\eqref{eq: cond_jointPO_others-pre} holds from Assumption \ref{MITE_D}.
Denote
\begin{equation}
\begin{aligned}
\theta^x _{y_0} & =\mathbb{P}(Y_0=\cdots =Y_{d_X-1}=y_0|X=x),
\end{aligned}
\end{equation}
\begin{equation}
\begin{aligned}
\varphi^x_{y_0,k} & =\mathbb{P}(Y_0=\cdots =Y_k=y_0,Y_{k+1}=\cdots =Y_{d_X-1}=y_0 +1|X=x)
\end{aligned}
\end{equation}
for $k\in[d_X-1]$, $y_0\in[d_Y-1]$, and $x\in[d_X]$.
Then we have
\begin{equation}
    \mathbb{P}(Y_i=j|X=x)=\theta^x _j +\sum_{k=0}^{i-1}\varphi^x_{j-1,k}+\sum_{k=i}^{d_X-2}\varphi^x_{j,k}
\end{equation}
for $i\in [d_X]$ and $j\in[d_Y]$, where $\varphi^x_{-1,k}=\varphi^x_{d_Y-1,k}=0$. This implies
\begin{equation}
    \mathbb{P}(Y_{i+1}=j|X=x)-\mathbb{P}(Y_i=j|X=x)=\varphi^x_{j-1,i}-\varphi^x_{j,i}
\end{equation}
for $i\in[d_X-1]$. Then, for each $k\in[d_X-1]$, we can rewrite $\varphi^x_{y_0,k}$ as 
\begin{equation}
    \varphi^x_{y_0 ,k}=\sum_{j=0}^{y_0}(\varphi^x_{j,k}-\varphi^x_{j-1,k})=\sum_{j=0}^{y_0} (\mathbb{P}(Y_k=j|X=x)-\mathbb{P}(Y_{k+1}=j|X=x)).
\end{equation}
$\theta^x _{y_0}$ are rewritten as
\begin{equation}
\begin{aligned}
    \theta^x _{y_0}
    & =\mathbb{P}(Y_0=y_0|X=x)-\sum_{k=0}^{d_X-2}\varphi^x_{y_0 ,k}\\
    & =\mathbb{P}(Y_0=y_0|X=x)-\sum_{l=0}^{y_0}\sum_{k=0}^{d_X-2}\left( \mathbb{P}(Y_k=l|X=x)- \mathbb{P}(Y_{k+1}=l|X=x)\right)\\
    & = \mathbb{P}(Y_0=y_0|X=x)-\sum_{l=0}^{y_0} (\mathbb{P}(Y_0=l|X=x)-\mathbb{P}(Y_{d_X-1}=l|X=x))\\
    & = \sum_{l=0}^{y_0} \mathbb{P}(Y_{d_X-1}=l|X=x)-\sum_{l=0}^{y_0 -1}\mathbb{P}(Y_0=l|X=x)
\end{aligned} 
\end{equation}
for $y_0\in[d_Y]$.
\end{proof}

\begin{lemma}
\label{lem: identification_jointPO_cond}
Under Assumptions \ref{exo} and \ref{MITE_D}, 
with observational data,
$\mathbb{P}(Y_0,\dots ,Y_{d_X-1}|X=x)$ is identifiable as 
\begin{equation}
\begin{aligned}
&\mathbb{P}(Y_0=\cdots =Y_{d_X-1}=y_0|X=x)= \sum_{m=0}^{y_0} \mathbb{P}(Y=m|X=d_X-1)-\sum_{m=0}^{y_0 -1}\mathbb{P}(Y=m|X=0)
\end{aligned}
\end{equation}
for $y_0\in[d_Y]$ and $x\in[d_X]$, and 
\begin{equation}
\begin{aligned}
&\mathbb{P}(Y_0=\cdots =Y_k=y_0,Y_{k+1}=\cdots =Y_{d_X-1}=y_0 +1|X=x)=\sum_{m=0}^{y_0} \mathbb{P}(Y=m|X=k)-\sum_{m=0}^{y_0} \mathbb{P}(Y=m|X=k+1)
\end{aligned}
\end{equation}
for $k\in[d_X-1]$, $y_0\in[d_Y-1]$, and $x\in[d_X]$; otherwise,
\begin{equation}
\label{eq: cond_jointPO_others}
\mathbb{P}(Y_0=y_0,\dots ,Y_{d_X-1}=y_{d_X-1}|X=x)=0.
\end{equation}
\end{lemma}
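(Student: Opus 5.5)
The plan is to combine Lemma~\ref{lem: identification_jointPO_cond-pre} with exogeneity (Assumption~\ref{exo}). Lemma~\ref{lem: identification_jointPO_cond-pre} already expresses every entry of $\mathbb{P}(Y_0,\dots,Y_{d_X-1}|X=x)$ under Assumption~\ref{MITE_D}. The zero cases, Eq.~\eqref{eq: cond_jointPO_others}, are exactly the zero cases of that lemma, Eq.~\eqref{eq: cond_jointPO_others-pre}. So the whole task is to make the nonzero formulas free of counterfactual marginals, i.e.\ to replace each $\mathbb{P}(Y_i=j|X=x)$ by a purely observational quantity.

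The first step is to note that Assumption~\ref{exo} gives $Y_i\indep X$ for every $i\in[d_X]$. Hence
\begin{equation*}
\mathbb{P}(Y_i=j|X=x)=\mathbb{P}(Y_i=j)=\mathbb{P}(Y_i=j|X=i).
\end{equation*}
By counterfactual consistency ($X=i\Rightarrow Y_i=Y$), the right-hand side equals $\mathbb{P}(Y=j|X=i)$. This requires $\mathbb{P}(X=i)>0$, which I take as implicit in the observational setting (the identification formulas condition on every level of $X$).

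Next, substitute this identity into the two nonzero formulas of Lemma~\ref{lem: identification_jointPO_cond-pre}. For the constant profile $Y_0=\cdots=Y_{d_X-1}=y_0$, I use $i=d_X-1$ in the first sum and $i=0$ in the second sum. This gives
\begin{equation*}
\sum_{m=0}^{y_0}\mathbb{P}(Y=m|X=d_X-1)-\sum_{m=0}^{y_0-1}\mathbb{P}(Y=m|X=0).
\end{equation*}
For the one-step profile, I use $i=k$ and $i=k+1$ term by term, which gives
\begin{equation*}
\sum_{m=0}^{y_0}\mathbb{P}(Y=m|X=k)-\sum_{m=0}^{y_0}\mathbb{P}(Y=m|X=k+1).
\end{equation*}
Both expressions are independent of $x$, as the statement asserts.

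There is no real obstacle; the only care needed is in justifying the conditional-to-marginal substitution. Exogeneity is stated per marginal, $Y_x\indep X$, not jointly for the vector $(Y_0,\dots,Y_{d_X-1})$. Marginal independence suffices here, because the lemma's formulas involve only one-dimensional marginals $\mathbb{P}(Y_i=j|X=x)$. The lemma itself was derived under Assumption~\ref{MITE_D} alone, within each stratum $X=x$, so it remains valid conditional on $X=x$ without any joint-independence requirement.
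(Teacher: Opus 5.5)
Your proposal is correct and follows the paper's proof: the paper also uses exogeneity together with consistency to write $\mathbb{P}(Y_j=l|X=x)=\mathbb{P}(Y_j=l|X=j)=\mathbb{P}(Y=l|X=j)$, and then substitutes this into Lemma~\ref{lem: identification_jointPO_cond-pre}. Your two added remarks are sound refinements that the paper leaves implicit: positivity of $\mathbb{P}(X=i)$ is needed, and marginal (rather than joint) independence suffices.
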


{\bf Remark.} This lemma reduces to Lemma 1 in \citep{Zhang2025} when $d_X=2$ and $X=1$.

\begin{proof}
Assumption \ref{exo} implies
    \begin{equation}
        \mathbb{P}(Y_j=l|X=x)=\mathbb{P}(Y_j=l|X=j)=\mathbb{P}(Y=l|X=j)
    \end{equation}
    for arbitrary $j\in[d_X],\ x\in[d_X],\ l\in[d_Y]$. Hence, from Lemma \ref{lem: identification_jointPO_cond-pre}, we have
    \begin{equation}
        \theta_{y_0} ^x = \sum_{l=0}^{y_0} \mathbb{P}(Y=l|X=d_X-1)-\sum_{l=0}^{y_0 -1}\mathbb{P}(Y=l|X=0)
    \end{equation}
    and
    \begin{equation}
        \varphi^x_{y_0,k}=\sum_{l=0}^{y_0} \mathbb{P}(Y=l|X=k)-\sum_{l=0}^{y_0} \mathbb{P}(Y=l|X=k+1).
    \end{equation}
     Then, we have Lemma \ref{lem: identification_jointPO_cond}.
\end{proof}

{\bf Theorem \ref{thm: identification_jointPO_full}.}
{\it
Under Assumptions \ref{exo} and \ref{MITE_D}, 
given observational distributions $\mathbb{P}(X, Y)$,
$\mathbb{P}(Y_0,\dots ,Y_{d_X-1},X,Y)$ is identifiable as
\begin{equation}
\begin{aligned}
&\mathbb{P}(Y_0=\cdots =Y_{d_X-1}=y_0,X=x,Y=y)=\left(\sum_{m=0}^{y_0} \mathbb{P}(Y=m|X=d_X-1)-\sum_{m=0}^{y_0 -1}\mathbb{P}(Y=m|X=0)\right)\\
&\hspace{6cm}\times\mathbb{P}(X=x)
\end{aligned}
\end{equation}
for $y_0\in [d_Y]$, $x\in [d_X]$, and $y=y_0$, and
\begin{equation}
\begin{aligned}
&\mathbb{P}(Y_0=\cdots =Y_k=y_0,Y_{k+1}=\cdots =Y_{d_X-1}=y_0+1,X=x,Y=y)\\
&=\left(\sum_{l=0}^{y_0} \mathbb{P}(Y=l|X=k)-\sum_{l=0}^{y_0} \mathbb{P}(Y=l|X=k+1)\right)\mathbb{P}(X=x)  
\end{aligned}
\end{equation}
for $k\in [d_X-1]$, $y_0\in [d_Y-1]$, $y=y_0$ for $x\leq k$, and  $y=y_0+1$ for $x=k+1,\dots,d_X-1$; 
otherwise,
\begin{equation}
\mathbb{P}(Y_0=y_0,\dots ,Y_{d_X-1}=y_{d_X-1},X=x,Y=y)=0.
\end{equation}
Then, any $\lambda({\cal P})$ is identifiable. 
}

\begin{proof}
    $\mathbb{P}(Y_0,\dots,Y_{d_X-1},X,Y)$ is categorized into the following cases:
    \begin{equation}
        \begin{cases}
        \text{(A): }Y_0=\cdots =Y_{d_X-1}=y_0 &\text{ for some } y_0\in[d_Y].\\
        \text{(B): }Y_0=\cdots =Y_k=y_0 ,Y_{k+1}=\cdots =Y_{d_X-1}=y_0+1 & \text{ for some } y_0\in[d_Y-1]\text{ and }k\in[d_X-1]. \\
        \text{(C): }\text{Otherwise}.
    \end{cases}
    \end{equation}
    In case \text{(C)}, we have $\mathbb{P}(Y_0,\dots,Y_{d_X-1},X,Y)=0$. 
    In case \text{(A)}, we have $\mathbb{P}(Y_0=\cdots =Y_{d_X-1}=y_0,X=x,Y=y)=0$ if $y\ne y_0$.
    When $y=y_0$, by Lemmas \ref{lem: obtain_fulljointPO_from_obsdata} and \ref{lem: identification_jointPO_cond} , we have
    \begin{equation}
        \begin{aligned}
            &\quad  \mathbb{P}(Y_0=\cdots =Y_{d_X-1}=y_0,X=x,Y=y)\\
            & = \frac{\mathbb{P}(Y_0=\cdots =Y_{d_X-1}=y_0|X=x)}{\mathbb{P}(Y=y|X=x)}\times\mathbb{I}(y_0=y)\mathbb{P}(X=x,Y=y)\\
            & = \mathbb{P}(Y_0=\cdots =Y_{d_X-1}=y_0|X=x)\mathbb{P}(X=x)\\
            & =  \left(\sum_{l=0}^{y_0} \mathbb{P}(Y=l|X=d_X-1)-\sum_{l=0}^{y_0 -1}\mathbb{P}(Y=l|X=0)\right)\mathbb{P}(X=x).
        \end{aligned} 
    \end{equation}
    In case \text{(B)}, by Lemmas \ref{lem: obtain_fulljointPO_from_obsdata} and \ref{lem: identification_jointPO_cond}, we have 
    \begin{equation}
        \begin{aligned}
            &\quad  \mathbb{P}(Y_0=\cdots =Y_k=y_0,Y_{k+1}=\cdots =Y_{d_X-1}=y_0 +1,X=x,Y=y)\\
            & = \frac{\mathbb{P}(Y_0=\cdots =Y_k=y_0,Y_{k+1}=\cdots =Y_{d_X-1}=y_0 +1|X=x)}{\mathbb{P}(Y=y|X=x)}\times\mathbb{I}(Y_x=y)\mathbb{P}(X=x,Y=y).
        \end{aligned}
    \end{equation}
    If $x\leq k, y=y_0$ or $x\geq k+1,y=y_0+1$, it can be calculated as
    \begin{align}
        &  \mathbb{P}(Y_0=\cdots =Y_k=y_0,Y_{k+1}=\cdots =Y_{d_X-1}=y_0 +1|X=x)\mathbb{P}(X=x)\\
        & = \left(\sum_{l=0}^{y_0} \mathbb{P}(Y=l|X=k)-\sum_{l=0}^{y_0} \mathbb{P}(Y=l|X=k+1)\right)\mathbb{P}(X=x).
    \end{align}
    Otherwise, we have $\mathbb{P}(Y_0=\cdots =Y_k=y_0,Y_{k+1}=\cdots =Y_{d_X-1}=y_0 +1,X=x,Y=y)=0$.
    Then, we have Theorem \ref{thm: identification_jointPO_full}.
\end{proof}

\section{TECHNICAL DETAILS}
\label{appA}

This section presents the technical details that support the method introduced in the main body of the paper.

\subsection{Preliminaries on LPs}

We provide some important properties of LP.

{\bf Formulation of LPs.}
LPs in our paper are formulated as
\begin{equation}
\label{eq:LP_general_nonmatrix}
\begin{gathered}
    \text{maximize / minimize }c_1x_1+\cdots +c_nx_n\\\text{ subject to }
    \begin{cases}
        a_{11}x_1+\cdots +a_{1n}x_{n}= b_1\\
        \vdots\\
        a_{t1}x_1+\cdots +a_{tn}x_{n}= b_{t}        
    \end{cases},\quad
    \begin{cases}
        a_{11}'x_1+\cdots +a_{1n}'x_{n}\leq b_1'\\
        \vdots\\
        a_{t'1}'x_1+\cdots +a_{t'n}'x_{n}\leq b_{t'}'        
    \end{cases},\quad x_1,\dots ,x_n\geq 0
\end{gathered}
\end{equation}
We can represent the same LP in one line by using matrices.
Defining
\begin{equation}
\begin{gathered}
    c=(c_1\,\dots\,c_n)^T,\quad x=(x_1\,\dots \,x_n)^T,\\
    A=\begin{pmatrix}
        a_{11}&\cdots&a_{1n}\\\vdots&\ddots & \vdots  \\ a_{t1}&\cdots &a_{tn}
    \end{pmatrix},
    \quad A'=\begin{pmatrix}
        a_{11}'&\cdots&a_{1n}'\\\vdots&\ddots & \vdots  \\ a_{t'1}'&\cdots &a_{t'n}'
    \end{pmatrix},
    \quad b=\begin{pmatrix}
        b_1\\ \vdots \\ b_t
    \end{pmatrix},
    \quad b=\begin{pmatrix}
        b_1'\\ \vdots \\ b_t'
    \end{pmatrix},
\end{gathered}
\end{equation}
we can represent Eq.~\eqref{eq:LP_general_nonmatrix} by
\begin{equation}
\text{maximize /minimize }c^T x \text{ subject to } Ax = b, A'x\leq b', x \geq 0.
\end{equation}
Note that, for vectors $x=(x_1,\dots x_n)$ and $y=(y_1,\dots ,y_n)$,  $x\leq y$ represents $x_1\leq y_1,\dots ,x_n\leq y_n$.

{{\bf Reformulation of Linear Programming.}}
Some references on LPs used in the proof discuss only equality constraints or only inequality constraints.
Then, we show that mixed constraints can be represented as only equality constraints or only inequality constraints.

We can reformulate the inequality and equality constraints in our bounding problems as equality constraints.
We can rewrite
\begin{equation}
\label{eq: LP(inequality form)}
    \begin{cases}
        a_{11}'x_1+\cdots +a_{1u}'x_{n}\leq b_1'\\
        \vdots\\
        a_{t'1}'x_1+\cdots +a_{tu}'x_{n}\leq b_{t'}'        
    \end{cases},\;x_1,\dots ,x_n\geq 0
\end{equation}
into an equivalent constraint system
\begin{equation}
\label{eq: LP(equality form)}
    \begin{cases}
        a_{11}'x_1+\cdots +a_{1n}'x_{n}+w_1= b_1'\\
        a_{21}'x_1+\cdots +a_{2n}'x_{n}+w_2=b_2'\\
        \vdots\\
        a_{t'1}'x_1+\cdots +a_{tn}'x_{n}+w_t= b_{t'}'
    \end{cases},\;x_1,\dots ,x_n,w_1,\dots ,w_{t'}\geq 0.
\end{equation}
We can also reformulate equality constraints
\begin{equation}
    \begin{cases}
        a_{11}x_1+\cdots +a_{1n}x_{n}= b_1\\
        \vdots\\
        a_{t1}x_1+\cdots +a_{tn}x_{n}= b_{t}        
    \end{cases},\;x_1,\dots ,x_n\geq 0
\end{equation}
into
\begin{equation}
    \begin{cases}
        a_{11}x_1+\cdots +a_{1n}x_{n}\leq b_1\\
        \vdots\\
        a_{t1}x_1+\cdots +a_{tn}x_{n}\leq b_{t}        
    \end{cases},
    \quad\begin{cases}
        -a_{11}x_1-\cdots -a_{1n}x_{n}\leq -b_1\\
        \vdots\\
        -a_{t1}x_1+\cdots -a_{tn}x_{n}\leq -b_{t}        
    \end{cases},\quad x_1,\dots ,x_n\geq 0,
\end{equation}
since we have $x=b\iff x\leq b,\,b\leq x \iff x\leq b,\,-x\leq -b$.

{\bf Dual problem of Linear Programming.}
We use the dual problem of LPs in the proof and provide a brief explanation of LP duality.
The dual problem of the LP
\begin{equation}
\label{eq: primalLP}
    \text{maximize }c^Tx\text{ subject to }Ax=b,A'x\leq b',x\geq0
\end{equation}
is an LP 
\begin{equation}
\label{eq: dualLP}
    \text{minimize }b^Ty+(b')^Tz\text{ subject to }A^Ty+(A')^Tz\geq c, z\geq 0
\end{equation}
where $y$ and $z$ are vectors.
If \eqref{eq: primalLP} is feasible and has an optimum (maximum), \eqref{eq: dualLP} is also feasible, the minimum of \eqref{eq: dualLP} exists and is equal to the maximum of \eqref{eq: primalLP}.

\subsection{Explicit constraints implied by Assumptions \ref{monoAS}-\ref{monomatASS}}
\label{APPsubsec:LP_formulation_nonmatrix}

The constraint by strong non-decreasing monotonicity (Assumption \ref{monoAS}) is given as

\begin{equation}
\sum_{y_0=0}^{d_Y-1}\dots \sum_{y_{d_X-1}=0}^{d_Y-1}\sum_{x=0}^{d_X-1} \\
\mathbb{I}(y_0 \leq y_1 \leq \dots \leq y_{d_X-1})\times p_{y_0,\dots ,y_{d_X-1},x}=1,
\end{equation}

the constraint by weak non-decreasing monotonicity for binary treatment (Assumption \ref{weak_mono-bi}) is given as
\begin{equation}
L\leq  \sum_{y_0=0}^{d_Y-1}\sum_{y_1=0}^{d_Y-1}\sum_{x=0}^{d_X-1} \\
\mathbb{I}(y_0 \leq y_1)\times p_{y_0,y_1,x}\leq U,
\end{equation}

the constraint by weak non-decreasing monotonicity (Assumption \ref{monoLB}) is given as
\begin{equation}
L\leq  \sum_{y_0=0}^{d_Y-1}\dots \sum_{y_{d_X-1}=0}^{d_Y-1}\sum_{x=0}^{d_X-1} \\
\mathbb{I}(y_0 \leq y_1 \leq \dots \leq y_{d_X-1})\times p_{y_0,\dots ,y_{d_X-1},x}\leq U,
\end{equation}
the constraint by single strong monotonicity (Assumption \ref{monomatAS}) is given as
\begin{equation}
 \sum_{y_0=0}^{d_Y-1}\dots \sum_{y_{d_X-1}=0}^{d_Y-1}\sum_{x=0}^{d_X-1} \\
\left( \prod_{s>t}\mathbb{I}({\underline{D}}_{st}\leq y_s- y_t\leq {\overline{D}}_{st})\right)\times p_{y_0,\dots ,y_{d_X-1},x}=1,
\end{equation}
and the constraint by single weak monotonicity (Assumption \ref{monomatASS}) is given as
\begin{equation}
L\leq  \sum_{y_0=0}^{d_Y-1}\dots \sum_{y_{d_X-1}=0}^{d_Y-1}\sum_{x=0}^{d_X-1} \\
\left(\prod_{s>t}\mathbb{I}({\underline{D}}_{st}\leq y_s- y_t\leq {\overline{D}}_{st})\right)\times p_{y_0,\dots ,y_{d_X-1},x}\leq U.
\end{equation}

\subsection{Formulation of Our LPs in Matrix Form}
\label{Appsubsec:LP_in_matrix_form}

Since the implementation of LP is formulated in matrix form, we discuss the formulation of our bounding problems in matrix form.

We denote the tensor of parameters as ${\cal T} \in \mathbb{R}^{d_Y^{d_X}\times d_X}$, where the element of ${\cal T}$ at index $(y_0,\dots, y_{d_X-1},x)$ is given by $p_{y_0,\dots, y_{d_X-1},x}$, i.e.,
\begin{equation}
{\cal T}=\{p_{y_0,\dots, y_{d_X-1},x}\}_{y_0,\dots, y_{d_X-1},x}.
\end{equation}
We denote $\text{vec}({\cal T})$ as the vectorization of tensor ${\cal T}$ in lexicographic order, i.e., $(p_{0,\dots, 0,0,0},\dots,)$.
The length of $\text{vec}(\mathcal{T})$ is $d_Y^{d_X}d_X$.
Let ${\boldsymbol 1}$ be a $d_Y^{d_X}d_X$ vector, whose elements are all $1$.
Let ${\cal C}_{\alpha} \in \mathbb{R}^{d_Y^{d_X}\times d_X}$, where $\alpha$ is a function from $\mathbb{R}^{d_Y^{d_X}\times d_X}$ to $\{0,1\}$, denote the tensor whose element at $(y_0,\dots,y_{d_X-1},x)$ is given by the function $\alpha(y_0,\dots,y_{d_X-1},x)$, i.e.,
\begin{equation}
{\cal C}_{\alpha}=\{\alpha(y_0,\dots,y_{d_X-1},x)\}_{y_0,\dots, y_{d_X-1},x}.
\end{equation}

{\bf (1) Total of Probabilities.}
The constraint
\begin{equation}
\begin{aligned}
&\sum_{y_0=0}^{d_Y-1}\dots\sum_{y_{d_X-1}=0}^{d_Y-1}\sum_{x=0}^{d_X-1}p_{y_0,\dots ,y_{d_X-1},x}=1.
\end{aligned}
\end{equation}
can be represented as
\begin{equation}
{\boldsymbol 1}^T \text{vec}({\cal T})=1.
\end{equation}
This is a single constraint.

{\bf (2) Experimental Data.}
The constraint by experimental data
\begin{equation}
\begin{aligned}
    &\sum_{y_0=0}^{d_Y-1}\dots\sum_{y_{l-1}=0}^{d_Y-1}\sum_{y_{l+1}=0}^{d_Y-1}\dots\sum_{y_{d_X-1}=0}^{d_Y-1}\sum_{x=0}^{d_X-1}p_{y_0,\dots y_{l-1},m,y_{l+1}\dots ,y_{d_X-1},x}
    =\mathbb{P}(Y_l=m)
\end{aligned}
\end{equation}
for $l=0,\dots,d_X-1,m=0,\dots,d_Y-2$.
This can be represented as
\begin{equation}
\text{vec}({\cal C}_{\mathbb{I}(y_l=m)})^T\text{vec}({\cal T})=\mathbb{P}(Y_l=m)
\end{equation}
for $l=0,\dots,d_X-1,m=0,\dots,d_Y-2$.
These are $d_X(d_Y-1)$ constraints.

{\bf (3) Observational Data.}
The constraint by observational data
\begin{equation}
    \sum_{y_0=0}^{d_Y-1}\dots\sum_{y_{l-1}=0}^{d_Y-1}\sum_{y_{l+1}=0}^{d_Y-1}\cdots \sum_{y_{d_X-1}=0}^{d_Y-1}p_{y_0,\dots ,y_{l-1},m,y_{l+1},\dots ,y_{d_X-1}, l}
    = \mathbb{P}(X=l,Y=m)
\end{equation}
for $l=0,\dots,d_X-1,m=0,\dots,d_Y-1$.
This can be represented as
\begin{equation}
\text{vec}({\cal C}_{\mathbb{I}(y_l=m,x=l)})^T\text{vec}({\cal T})=\mathbb{P}(X=l,Y=m)
\end{equation}
for $l=0,\dots,d_X-1,m=0,\dots,d_Y-1$. However, we need to exclude the case $(l,m)=(d_X-1,d_Y-1)$ since we have $\mathbb{P}(X=d_X-1,Y=d_Y-1)=1-\sum_{(l,m)\ne(d_X-1,d_Y-1)}\mathbb{P}(X=l,Y=m)$. 
Therefore, observational data derive $d_Xd_Y-1$ constraints.

{\bf (4) Monotonicity Assumption.}
The constraints derived from Assumption \ref{assum_mono_with_prob_s}, i.e., 
\begin{equation}
    L^w\leq \sum_{y_0=0}^{d_Y-1}\dots \sum_{y_{d_X-1}=0}^{d_Y-1}\sum_{x=0}^{d_X-1} \bigg(\Big(\prod_{s,t=0,\dots,d_X-1;s>t}\mathbb{I}({\underline{D}}_{st}^{(w)}\leq y_s- y_t\leq {\overline{D}}_{st}^{(w)})\Big)\times p_{y_0,\dots ,y_{d_X-1},x}\bigg)\leq U^w
\end{equation}
for $w=1,\dots ,W$. This can be expressed as 
\begin{equation}
\begin{aligned}
     \mathrm{vec}(\mathcal{C}_{\prod_{s>t}\mathbb{I}({\underline{D}}_{st}^{(w)}\leq y_s-y_t\leq {\overline{D}}_{st}^{(w)})})^T\cdot \text{vec}(\mathcal{T}) & \geq L^w\,(w=1,\dots ,W),\\
     \mathrm{vec}(\mathcal{C}_{\prod_{s>t}\mathbb{I}({\underline{D}}_{st}^{(w)}\leq y_s-y_t\leq {\overline{D}}_{st}^{(w)})})^T\cdot \text{vec}(\mathcal{T}) & \leq U^w\,(w=1,\dots ,W).
\end{aligned}
\end{equation}
These are $2W$ inequality constraints.

Let
\begin{equation}
\begin{aligned}
    A' & =\Big(\mathrm{vec}(\mathcal{C}_{\prod_{s>t}\mathbb{I}({\underline{D}}_{st}^{(1)}\leq y_s-y_t\leq {\overline{D}}_{st}^{(1)})})\hspace{2mm}\cdots \hspace{2mm} \mathrm{vec}(\mathcal{C}_{\prod_{s>t}\mathbb{I}({\underline{D}}_{st}^{(W)}\leq y_s-y_t\leq {\overline{D}}_{st}^{(W)})})\\
    & \hspace{10mm} -\mathrm{vec}(\mathcal{C}_{\prod_{s>t}\mathbb{I}({\underline{D}}_{st}^{(1)}\leq y_s-y_t\leq {\overline{D}}_{st}^{(1)})})\hspace{2mm}\cdots \hspace{2mm} -\mathrm{vec}(\mathcal{C}_{\prod_{s>t}\mathbb{I}({\underline{D}}_{st}^{(W)}\leq y_s-y_t\leq {\overline{D}}_{st}^{(W)})})\Big)^T,\\
    b' & =(U^1,\dots ,U^W,-L^1,\dots ,-L^W)^T.
\end{aligned}
\end{equation}
$A'$ is a $(2W)\times(d_Xd_Y^{d_X}$)-matrix.
Then our LP is formulated in a matrix form in the following way with respect to the marginal data we have:

{\emph{Formulation of our LP when using experimental data.}}
Let 
\begin{equation}
\begin{aligned}
    A_{\text{exp}} & =\bigg(\boldsymbol{1}\hspace{2mm}
    \hspace{2mm}\text{vec}({\cal C}_{\mathbb{I}(y_0=0)}) \hspace{2mm} \text{vec}({\cal C}_{\mathbb{I}(y_0=1)})\hspace{2mm} \cdots \hspace{2mm} \text{vec}({\cal C}_{\mathbb{I}(y_0=d_Y-1)})\hspace{2mm} \text{vec}({\cal C}_{\mathbb{I}(y_1=0)}) \cdots \hspace{2mm} \text{vec}({\cal C}_{\mathbb{I}(y_{d_X-1}=d_Y-1)})\bigg)^T,\\
   b & = (1\hspace{2mm} \mathbb{P}(Y_0=0)\hspace{2mm} \mathbb{P}(y_0=1) \hspace{2mm} \cdots \hspace{2mm} \mathbb{P}(y_0=d_Y-1)\hspace{2mm} \mathbb{P}(y_1=0) \cdots \hspace{2mm} \mathbb{P}(y_{d_X-1}=d_Y-1))^T. 
\end{aligned}
\end{equation}

$A_{\text{exp}}$ is a $(d_Xd_Y-d_X+1)\times(d_Xd_Y^{d_X})$-matrix.

Then our LP is formulated as
\begin{equation}
\label{eq: our_LP_matrixform_expdata2}
    \text{maximize / minimize $\lambda({\cal P})$ subject to } A_{\text{exp}}\text{vec}(\mathcal{T})=b, A'\text{vec}(\mathcal{T})\leq b',\text{vec}(\mathcal{T})\geq 0.
\end{equation}

We can reformulate it only using equality constraints or only using inequality constraints. Let $\mathcal{W}=(w_1,\dots ,w_{2W})^T$ be a sequence of slack variables. Then, the problem \eqref{eq: our_LP_matrixform_expdata2} is equivalent to
\begin{equation}
    \text{maximize / minimize $\lambda({\cal P})$ subject to } 
    \begin{bmatrix}
        A_{\text{exp}} & O\\
        A' & E_{2W}
    \end{bmatrix}
    \begin{bmatrix}
        \text{vec}(\mathcal{T}) \\ \mathcal{W}
    \end{bmatrix}=
    \begin{bmatrix}
        b \\b'
    \end{bmatrix},\quad \text{vec}(\mathcal{T})\geq 0,\ \mathcal{W}\geq 0
\end{equation}
or
\begin{equation}
    \text{maximize / minimize $\lambda({\cal P})$ subject to } 
    \begin{bmatrix}
        A_{\text{exp}} \\ -A_{\text{exp}} \\ A'
    \end{bmatrix}
        \text{vec}(\mathcal{T})\ \leq
    \begin{bmatrix}
        b \\ -b \\ b'
    \end{bmatrix},\quad \text{vec}(\mathcal{T})\geq 0.
\end{equation}
where $\mathcal{W}=(w_1,\dots ,w_{2W})$ is a vector consisting of slack variables.

{\emph{Formulation of our LP when using observational data.}}
Let
\begin{equation}
\begin{aligned}
    A_{\text{obs}} &=\bigg(\boldsymbol{1}
   \hspace{2mm}\text{vec}({\cal C}_{\mathbb{I}(X=0,Y=0)}) \hspace{2mm} \text{vec}({\cal C}_{\mathbb{I}(X=0,Y=1)})\hspace{2mm} \cdots \hspace{2mm} \text{vec}({\cal C}_{\mathbb{I}(X=0,Y=d_Y-1)})\\
   & \hspace{10mm} \text{vec}({\cal C}_{\mathbb{I}(X=1,Y=0)}) \cdots \hspace{2mm} \text{vec}({\cal C}_{\mathbb{I}(X={d_X-1},Y=d_Y-1)})\bigg)^T,\\
   b & = \bigg(1\hspace{2mm}  
   \hspace{2mm} \mathbb{P}(X=0,Y=0) \hspace{2mm} \mathbb{P}(X=0,Y=1)\hspace{2mm} \cdots \hspace{2mm} \mathbb{P}(X=0,Y=d_Y-1)\hspace{2mm} \\
   & \hspace{10mm} \mathbb{P}(X=1,Y=0) \cdots \hspace{2mm} \mathbb{P}(X={d_X-1},Y=d_Y-1)\bigg)^T.
\end{aligned}
\end{equation}

$A_\text{obs}$ is a $(d_Xd_Y)\times(d_Xd_Y^{d_X})$-matrix.

Then our LP is formulated as
\begin{equation}
\label{eq: our_LP_matrixform3}
    \text{maximize / minimize $\lambda({\cal P})$ subject to } A_{\text{obs}}\text{vec}(\mathcal{T})=b, A'\text{vec}(\mathcal{T})\leq b',\text{vec}(\mathcal{T})\geq 0.
\end{equation}

We can reformulate it only using equality constraints or only using inequality constraints. 
Let $\mathcal{W}=(w_1,\dots ,w_{2W})^T$ be a sequence of slack variables. Then, the problem \eqref{eq: our_LP_matrixform3} is equivalent to
\begin{equation}
    \text{maximize / minimize $\lambda({\cal P})$ subject to } 
    \begin{bmatrix}
        A_{\text{obs}} & O\\
        A' & E_{2W}
    \end{bmatrix}
    \begin{bmatrix}
        \text{vec}(\mathcal{T}) \\ \mathcal{W}
    \end{bmatrix}=
    \begin{bmatrix}
        b \\b'
    \end{bmatrix},\quad \text{vec}(\mathcal{T})\geq 0,\ \mathcal{W}\geq 0
\end{equation}
or
\begin{equation}
    \text{maximize / minimize $\lambda({\cal P})$ subject to } 
    \begin{bmatrix}
        A_{\text{obs}} \\ -A_{\text{obs}} \\ A'
    \end{bmatrix}
        \text{vec}(\mathcal{T})\ \leq
    \begin{bmatrix}
        b \\ -b \\ b'
    \end{bmatrix},\quad \text{vec}(\mathcal{T})\geq 0.
\end{equation}
where $\mathcal{W}=(w_1,\dots ,w_{2W})$ is a vector consisting of slack variables.

{\emph{Formulation of our LP when using both experimental and observational data.}}
Let 
\begin{equation}
\begin{aligned}
    A_{\text{both}} & =\bigg(\boldsymbol{1} \hspace{2mm}\text{vec}({\cal C}_{\mathbb{I}(y_0=0)}) \hspace{2mm} \text{vec}({\cal C}_{\mathbb{I}(y_0=1)})\hspace{2mm} \cdots \hspace{2mm} \text{vec}({\cal C}_{\mathbb{I}(y_0=d_Y-1)})\hspace{2mm} \text{vec}({\cal C}_{\mathbb{I}(y_1=0)}) \cdots \hspace{2mm} \text{vec}({\cal C}_{\mathbb{I}(y_{d_X-1}=d_Y-1)})\\
   &\hspace{10mm}\text{vec}({\cal C}_{\mathbb{I}(X=0,Y=0)}) \hspace{2mm} \text{vec}({\cal C}_{\mathbb{I}(X=0,Y=1)})\hspace{2mm} \cdots \hspace{2mm} \text{vec}({\cal C}_{\mathbb{I}(X=0,Y=d_Y-1)})\\
   & \hspace{10mm} \text{vec}({\cal C}_{\mathbb{I}(X=1,Y=0)}) \cdots \hspace{2mm} \text{vec}({\cal C}_{\mathbb{I}(X={d_X-1},Y=d_Y-1)})\bigg)^T,\\
   b & = (1\hspace{2mm} \mathbb{P}(Y_0=0)\hspace{2mm} \mathbb{P}(y_0=1) \hspace{2mm} \cdots \hspace{2mm} \mathbb{P}(y_0=d_Y-1)\hspace{2mm} \mathbb{P}(y_1=0) \cdots \hspace{2mm} \mathbb{P}(y_{d_X-1}=d_Y-1)\\
   &\hspace{10mm} \mathbb{P}(X=0,Y=0) \hspace{2mm} \mathbb{P}(X=0,Y=1)\hspace{2mm} \cdots \hspace{2mm} \mathbb{P}(X=0,Y=d_Y-1)\hspace{2mm} \\
   & \hspace{10mm} \mathbb{P}(X=1,Y=0) \cdots \hspace{2mm} \mathbb{P}(X={d_X-1},Y=d_Y-1)\bigg)^T.
\end{aligned}
\end{equation}

$A_\text{both}$ is a $(2d_Xd_Y-d_X)\times(d_Xd_Y^{d_X})$-matrix.

Then our LP is formulated as
\begin{equation}
\label{eq: our_LP_matrixform}
    \text{maximize / minimize $\lambda({\cal P})$ subject to } A_{\text{both}}\text{vec}(\mathcal{T})=b, A'\text{vec}(\mathcal{T})\leq b',\text{vec}(\mathcal{T})\geq 0.
\end{equation}

We can reformulate it only using equality constraints or only using inequality constraints. Let $\mathcal{W}=(w_1,\dots ,w_{2W})^T$ be a sequence of slack variables. Then, the problem \eqref{eq: our_LP_matrixform} is equivalent to
\begin{equation}
    \text{maximize / minimize $\lambda({\cal P})$ subject to } 
    \begin{bmatrix}
        A_{\text{both}} & O\\
        A' & E_{2W}
    \end{bmatrix}
    \begin{bmatrix}
        \text{vec}(\mathcal{T}) \\ \mathcal{W}
    \end{bmatrix}=
    \begin{bmatrix}
        b \\b'
    \end{bmatrix},\quad \text{vec}(\mathcal{T})\geq 0,\ \mathcal{W}\geq 0
\end{equation}
or
\begin{equation}
    \text{maximize / minimize $\lambda({\cal P})$ subject to } 
    \begin{bmatrix}
        A_{\text{both}} \\ -A_{\text{both}} \\ A'
    \end{bmatrix}
        \text{vec}(\mathcal{T})\ \leq
    \begin{bmatrix}
        b \\ -b \\ b'
    \end{bmatrix},\quad \text{vec}(\mathcal{T})\geq 0,
\end{equation}
where $\mathcal{W}=(w_1,\dots ,w_{2W})$ is a vector consisting of slack variables.

\subsection{Details of estimation problem}
\label{app_est}

We provide the details of plug-in estimators. 

{\bf Bounding Joint POs/OVs Probabilities and Their Linear Combinations From Finite Samples}
We consider two types of datasets:
(i) $d_X$ experimental datasets
${\cal D}^{k}=\{Y_{(1)}^k,Y_{(2)}^k,\dots,Y_{({N^k})}^k\}$,
for $k=0,\dots,d_X-1$,
where $N^k$ denotes the sample size of the $k$-th experimental dataset ${\cal D}^{k}$, obtained under the intervention $do(X = k)$.
(ii) one observational dataset
${\cal D}=\{(X_{(1)},Y_{(1)}),(X_{(2)},Y_{(2)}),\dots,(X_{(N)},Y_{(N)})\}$,
where $N$ denotes the sample size for the observational dataset ${\cal D}$.
To bound the joint POs/OVs probabilities, we use the empirical probabilities of $\mathbb{P}(Y_x)$ and $\mathbb{P}(X,Y)$, i.e.,
\begin{equation}
\label{emp1}
\begin{aligned}
\hat{\mathbb{P}}(Y_k=j)={N^k}^{-1}\sum_{i=1}^{N^k}\mathbb{I}(Y_{(i)}^k=j),
\end{aligned}
\end{equation}
\begin{equation}
\label{emp2}
\begin{aligned}
\hat{\mathbb{P}}(X=l,Y=m)={N}^{-1}\sum_{i=1}^{N}\mathbb{I}(X_{(i)}=l,Y_{(i)}=m).
\end{aligned}
\end{equation}

Substituting the empirical probabilities in Eqs.~\eqref{emp1} and \eqref{emp2} into the LP constraints 
yields the plug-in estimators of the lower and upper bounds of $\lambda({\cal P})$, i.e.,
$[\hat{\underline{\lambda}}({\cal P}),\hat{\overline{\lambda}}({\cal P})]$.

Note that even if Assumption~\ref{assum_mono_with_prob_s} holds, sampling errors in the empirical probabilities may render the LPs infeasible.
Throughout, we assume that the LPs constructed from empirical probabilities are feasible.

{\bf Consistency.}
The estimates of the bounds on $\lambda({\cal P})$ are consistent estimators as follows.

\begin{theorem}[Consistency]
\label{thm: convergence}
Under Assumption~\ref{assum_mono_with_prob_s},
if the plug-in LPs are feasible,
$\hat{\underline{\lambda}}({\cal P})$ and $\hat{\overline{\lambda}}({\cal P})$ converge in probability to $\underline{\lambda}({\cal P})$ and $\overline{\lambda}({\cal P})$.
The convergence rate is ${\cal O}_p(\sum_{k=0}^{d_X-1}{N^k}^{-1/2})$ for experimental data, ${\cal O}_p({N}^{-1/2})$ for observational data, and ${\cal O}_p(\sum_{k=0}^{d_X-1}{N^k}^{-1/2}+{N}^{-1/2})$ for combined experimental and observational data.
\end{theorem}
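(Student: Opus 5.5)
The plan is to treat the LP optimal value as a function of the right-hand-side vector and establish its local Lipschitz continuity. Then we combine that with the $\sqrt{N}$-rate of the empirical probabilities.

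Write the LP \eqref{eq-LP} in the matrix form of Appendix~\ref{Appsubsec:LP_in_matrix_form}: maximize $c^T v$ subject to $A v = b$, $A' v \le b'$, $v \ge 0$. Here only $b$ depends on the data, while $A$, $A'$, $b'$ and $c$ are fixed. Define $V(b)$ to be the optimal value. The empirical probabilities in \eqref{emp1}--\eqref{emp2} are sample means of bounded indicators. By the CLT (or Chebyshev), $\|\hat b - b\| = {\cal O}_p(\sum_k {N^k}^{-1/2})$ for experimental data, ${\cal O}_p(N^{-1/2})$ for observational data, and the sum of these for combined data. The experimental datasets are independent across $k$, so the errors simply add.

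The key step is a Lipschitz bound $|V(\hat b) - V(b)| \le C\|\hat b - b\|$, valid whenever both LPs are feasible. I would derive it from the duality stated in the appendix. Since $v \in [0,1]^n$, the feasible set is bounded, so the optimum exists and equals the dual optimum $\min\{b^T y + b'^T z : A^T y + A'^T z \ge c,\ z \ge 0\}$. The dual feasible set does not depend on $b$. Its minimum is attained at one of finitely many vertices, so in each case we may restrict attention to those finitely many candidates.

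There is a subtlety: the dual region may be unbounded, so the optimum could move along extreme rays. Extreme rays $r$ with $b^T r_y + b'^T r_z < 0$ would make the dual unbounded and the primal infeasible, and that case is excluded by the feasibility hypothesis. Hence, for both $b$ and $\hat b$, $V(\cdot) = \min_{j} (\cdot)^T y_j + b'^T z_j$ over the finite vertex set $\{(y_j, z_j)\}$. A minimum of finitely many affine functions is Lipschitz with constant $\max_j \|y_j\|$. This gives $C$ depending only on $A$, $A'$, $c$ and $b'$.

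Applying this to both the max and the min problems gives the stated ${\cal O}_p$ rates, and hence convergence in probability.

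I expect the main obstacle to be the argument in the previous paragraph. Primal feasibility at $\hat b$ only rules out the dual being unbounded at $\hat b$ itself. To make the vertex-min representation hold simultaneously at $b$ and $\hat b$, I would use that the rays' cone is fixed and that both points are feasible by assumption. If this turns out delicate, the fallback is Hoffman's lemma: feasible polyhedra move Lipschitzly in Hausdorff distance with respect to the right-hand side, and the objective is Lipschitz on $[0,1]^n$.
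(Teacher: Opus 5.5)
Your proposal is correct and follows essentially the paper's proof: rewrite both LPs so that only the right-hand side $b$ depends on the data, note that the dual feasible region is then independent of $b$ and that dual optima can be taken from a fixed finite set of points, deduce $|\hat{\overline{\lambda}}({\cal P})-\overline{\lambda}({\cal P})|\le M\Vert\hat b-b\Vert$ (and likewise for the lower bound), and inherit the ${\cal O}_p$ rates of the empirical frequencies. The differences are minor. The paper picks one representative point per face of the dual polyhedron rather than its vertices, which avoids your tacit assumption that the dual is pointed; that assumption is harmless here, since redundant rows of $A$ were removed, or one can quotient out the lineality space, along which feasible $b$'s give a constant objective. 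Your worry in the last paragraph is unnecessary, because the vertex-min representation only needs to hold at $b$ and at $\hat b$ separately, and each case follows from its own feasibility.
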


{\bf Estimating Joint POs/OVs Probabilities and Their Linear Combinations.}
By plugging the empirical probabilities in Eqs.~\eqref{emp1} and \eqref{emp2} into the identification results in Theorems \ref{thm: identification_of_joint_PO} and \ref{thm: identification_jointPO_full}, we can estimate the joint POs/OVs probabilities and their linear combinations from finite samples. 
Explicitly, from Theorem \ref{thm: identification_of_joint_PO},
the estimates of $\mathbb{P}(Y_0,\dots ,Y_{d_X-1})$ is 
\begin{equation}
\begin{aligned}
&\hat{\mathbb{P}}(Y_0=\cdots =Y_{d_X-1}=y_0)= \sum_{j=0}^{y_0} \hat{\mathbb{P}}(Y_{d_X-1}=j)-\sum_{j=0}^{y_0 -1}\hat{\mathbb{P}}(Y_0=j)
\end{aligned}
\end{equation}
for $y_0\in [d_Y]$, and 
\begin{equation}
\begin{aligned}
&\hat{\mathbb{P}}(Y_0=\cdots =Y_k=y_0,\ Y_{k+1}=\cdots =Y_{d_X-1}=y_0 +1)=\sum_{j=0}^{y_0} \hat{\mathbb{P}}(Y_k=j)-\sum_{j=0}^{y_0} \hat{\mathbb{P}}(Y_{k+1}=j)
\end{aligned}
\end{equation}
for $k\in [d_X-1]$ and $y_0\in[d_Y-1]$; otherwise,
\begin{equation}
\hat{\mathbb{P}}(Y_0=y_0,\dots ,Y_{d_X-1}=y_{d_X-1})=0.
\end{equation}
The estimate of $\lambda'({\cal P}')$ is given by the linear combination of them.
From Theorem \ref{thm: identification_jointPO_full}, the estimates of $\mathbb{P}(Y_0,\dots ,Y_{d_X-1},X,Y)$ are 
\begin{equation}
\begin{aligned}
&\hat{\mathbb{P}}(Y_0=\cdots =Y_{d_X-1}=y_0,X=x,Y=y)=\left(\sum_{m=0}^{y_0} \hat{\mathbb{P}}(Y=m|X=d_X-1)-\sum_{m=0}^{y_0 -1}\hat{\mathbb{P}}(Y=m|X=0)\right)\\
&\hspace{6cm}\times\hat{\mathbb{P}}(X=x)
\end{aligned}
\end{equation}
for $y_0\in [d_Y]$, $x\in [d_X]$, and $y=y_0$, and
\begin{equation}
\begin{aligned}
&\hat{\mathbb{P}}(Y_0=\cdots =Y_k=y_0,Y_{k+1}=\cdots =Y_{d_X-1}=y_0+1,X=x,Y=y)\\
&=\left(\sum_{l=0}^{y_0} \hat{\mathbb{P}}(Y=l|X=k)-\sum_{l=0}^{y_0} \hat{\mathbb{P}}(Y=l|X=k+1)\right)\hat{\mathbb{P}}(X=x)  
\end{aligned}
\end{equation}
for $k\in [d_X-1]$, $y_0\in [d_Y-1]$, $y=y_0$ for $x\leq k$, and  $y=y_0+1$ for $x=k+1,\dots,d_X-1$; 
otherwise,
\begin{equation}
\mathbb{P}(Y_0=y_0,\dots ,Y_{d_X-1}=y_{d_X-1},X=x,Y=y)=0.
\end{equation}
The estimate of $\lambda({\cal P})$ is given by the linear combination of them.

These are consistent estimators.

\begin{theorem}[Consistency]
\label{theor5}
Under Assumption \ref{MITE_D}, $\lambda'(\hat{{\cal P}'})$ converges in probability to $\lambda'({\cal P}')$ at the rate ${\cal O}_p(\sum_{k=0}^{d_X-1}{N^k}^{-1/2})$. 
\end{theorem}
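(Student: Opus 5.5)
The argument rests on the explicit identification formulas of Theorem~\ref{thm: identification_of_joint_PO}. Under Assumption~\ref{MITE_D}, every element of ${\cal P}'$ is either identically zero or a fixed linear combination, with coefficients in $\{-1,0,1\}$, of the marginal probabilities $\mathbb{P}(Y_k=j)$ for $k\in[d_X]$, $j\in[d_Y]$. Since $\lambda'$ is linear, there is a deterministic vector $c$, depending only on $\lambda'$, $d_X$ and $d_Y$, such that
\[
\lambda'({\cal P}')=\sum_{k=0}^{d_X-1}\sum_{j=0}^{d_Y-1} c_{kj}\,\mathbb{P}(Y_k=j),
\qquad
\lambda'(\hat{{\cal P}'})=\sum_{k=0}^{d_X-1}\sum_{j=0}^{d_Y-1} c_{kj}\,\hat{\mathbb{P}}(Y_k=j).
\]
The second identity holds because the plug-in estimator is defined by substituting the empirical probabilities from Eq.~\eqref{emp1} into exactly the same formulas. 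The estimation error is therefore
\[
\lambda'(\hat{{\cal P}'})-\lambda'({\cal P}')=\sum_{k,j} c_{kj}\bigl(\hat{\mathbb{P}}(Y_k=j)-\mathbb{P}(Y_k=j)\bigr).
\]

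Next, bound each term. For fixed $k$, the dataset ${\cal D}^k$ consists of $N^k$ i.i.d.\ draws from $\mathbb{P}(Y_k)$. Hence $\hat{\mathbb{P}}(Y_k=j)$ is an average of i.i.d.\ Bernoulli variables with mean $\mathbb{P}(Y_k=j)$ and variance at most $1/4$. By Chebyshev's inequality,
\[
\hat{\mathbb{P}}(Y_k=j)-\mathbb{P}(Y_k=j)={\cal O}_p\bigl({N^k}^{-1/2}\bigr);
\]
the weak law of large numbers alone already gives convergence in probability. Because the number of terms $d_Xd_Y$ is finite and the coefficients $c_{kj}$ are bounded constants, the triangle inequality together with the closure of ${\cal O}_p$ under finite sums gives
\[
\bigl|\lambda'(\hat{{\cal P}'})-\lambda'({\cal P}')\bigr|\le \sum_{k,j}|c_{kj}|\,\bigl|\hat{\mathbb{P}}(Y_k=j)-\mathbb{P}(Y_k=j)\bigr| ={\cal O}_p\Bigl(\sum_{k=0}^{d_X-1}{N^k}^{-1/2}\Bigr).
\]
This rate implies convergence in probability as all $N^k\to\infty$. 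Alternatively, the continuous mapping theorem yields consistency directly, since the map from marginals to $\lambda'$ is linear.

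There is no real obstacle in this argument. The one step needing care is confirming that the estimator $\lambda'(\hat{{\cal P}'})$ is exactly the linear map applied to the empirical marginals. The zero entries from Eq.~\eqref{eq: jointPO_others} are set to zero in both the estimand and the estimator, so they contribute no error. The estimated cell probabilities may be negative in finite samples, but this does not affect consistency, because the argument never uses their nonnegativity.
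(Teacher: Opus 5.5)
Your proof is correct and follows the same route as the paper. The paper also writes $\lambda'(\hat{{\cal P}'})$ as a linear function of the empirical marginals and then invokes the delta method; for a linear map, that reduces to the $\sqrt{N^k}$-rate of each empirical frequency, which you verify directly with Chebyshev's inequality (and you correctly phrase it in terms of the experimental marginals $\mathbb{P}(Y_k=j)$ rather than the paper's looser $\mathbb{P}(Y|X)$).
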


\begin{theorem}[Consistency]
\label{theor6}
Under Assumptions \ref{exo} and \ref{MITE_D}, $\lambda(\hat{{\cal P}})$ converges in probability to $\lambda({\cal P})$ at the rate ${\cal O}_p(N^{-1/2})$. 
\end{theorem}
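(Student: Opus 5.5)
The argument is a delta-method (continuous mapping) argument applied to the explicit closed forms of Theorem \ref{thm: identification_jointPO_full}. By that theorem, under Assumptions \ref{exo} and \ref{MITE_D}, every entry of ${\cal P}$ is either identically zero or a product of two factors. The first factor is a finite signed sum of conditional probabilities $\mathbb{P}(Y=m\mid X=x')$; the second is a marginal $\mathbb{P}(X=x)$. The plug-in estimator $\hat{{\cal P}}$ is obtained by applying the same map $G$ to the empirical cell probabilities $\hat{\mathbb{P}}(X=l,Y=m)$ of \eqref{emp2}. Since $\lambda$ is linear, $\lambda(\hat{{\cal P}})-\lambda({\cal P})=\lambda(G(\hat q)-G(q))$, where $q$ is the vector of true cell probabilities. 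It therefore suffices to show $\|G(\hat q)-G(q)\|={\cal O}_p(N^{-1/2})$.

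\textbf{Step 1 (rate for the cell probabilities).} The vector $N\hat q$ is multinomial. Each coordinate $\hat q_{lm}$ has variance $q_{lm}(1-q_{lm})/N\le 1/(4N)$, so Chebyshev's inequality gives $\hat q_{lm}-q_{lm}={\cal O}_p(N^{-1/2})$. There are finitely many coordinates, so the same holds for the whole vector.

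\textbf{Step 2 (smoothness of $G$).} Write $\hat{\mathbb{P}}(X=x)=\sum_m \hat q_{xm}$ and $\hat{\mathbb{P}}(Y=m\mid X=x')=\hat q_{x'm}/\sum_{m'}\hat q_{x'm'}$. Each entry of $G$ is then a rational function of $q$ whose denominators are the marginals $\mathbb{P}(X=x')$. I assume positivity, $\mathbb{P}(X=x')>0$ for all $x'$, which is implicit because conditioning on $X$ must be well defined. Under positivity, $G$ is continuously differentiable on a neighbourhood of $q$.

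\textbf{Step 3 (transferring the rate).} Because $\hat{\mathbb{P}}(X=x')\to\mathbb{P}(X=x')>0$ in probability, the event $\{\min_{x'}\hat{\mathbb{P}}(X=x')>\tfrac12\min_{x'}\mathbb{P}(X=x')\}$ has probability tending to one. On this event, $\hat q$ lies in a compact neighbourhood of $q$ on which $G$ is Lipschitz with some constant $C$, by the mean value theorem. Hence $\|G(\hat q)-G(q)\|\le C\|\hat q-q\|={\cal O}_p(N^{-1/2})$. Writing $\lambda(\cdot)=\sum_i c_i(\cdot)_i$ with finitely many fixed coefficients, we get $|\lambda(\hat{{\cal P}})-\lambda({\cal P})|\le \max_i|c_i|\cdot\|G(\hat q)-G(q)\|_1={\cal O}_p(N^{-1/2})$. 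Since ${\cal O}_p(N^{-1/2})$ quantities vanish in probability, convergence in probability follows as well.

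\textbf{Main obstacle.} The only delicate point is handling the denominators, i.e. controlling $G$ when some $\hat{\mathbb{P}}(X=x')$ could be zero or small. I resolve this by intersecting with the high-probability event from Step 3, where the denominators are bounded away from zero, and by assuming $\mathbb{P}(X=x')>0$ for all $x'$.
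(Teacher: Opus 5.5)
Your proposal is correct and takes the same route as the paper, which simply invokes the delta method on the plug-in closed forms of Theorem~\ref{thm: identification_jointPO_full}, viewed as smooth functions of $\hat{\mathbb{P}}(Y\mid X)$ and $\hat{\mathbb{P}}(X)$. Your version is more careful than the paper's one-line argument: it states the positivity condition $\mathbb{P}(X=x')>0$ explicitly and restricts to the high-probability event on which the estimated denominators stay away from zero, two points the paper leaves implicit.
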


{\bf Proof.}
We provide proofs of theorems in this subsection.

We first show the following lemma to prove Theorem \ref{thm: convergence}.
\begin{lemma}
\label{lem: finite_candidate}
    Let $V=\{x\in\mathbb{R}^d:Ax\geq c\}$ be a nonempty polyhedron.
    For any $h\in\mathbb{R}^d$ such that $\alpha:=\max_{x\in V} h^T x$ is finite and attained on $V$, there exists a finite set $S\subseteq V$, depending only on $V$ (not on $h$), such that $\max_{x\in V} h^T x=\max_{a\in S}h^T a$.
\end{lemma}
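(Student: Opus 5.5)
We will rely on the Minkowski--Weyl (resolution) theorem for polyhedra. It says that the nonempty polyhedron $V=\{x\in\mathbb{R}^d: Ax\geq c\}$ can be written as
\begin{equation*}
V=\mathrm{conv}(S)+\mathrm{cone}(R)+L,
\end{equation*}
where $L=\{x: Ax=0\}$ is the lineality space, $S$ is a finite set of points of $V$, and $R$ is a finite set of directions in the recession cone $\{x:Ax\geq 0\}$. Everything here is determined by $A$ and $c$ alone. We take this finite set $S$ as the candidate set; by construction it depends only on $V$.

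To see why the decomposition holds with $S\subseteq V$, decompose $\mathbb{R}^d=L\oplus L^\perp$. The set $V'=V\cap L^\perp$ is a pointed polyhedron, meaning it has no lines. For a pointed polyhedron, the standard result is that $V'$ equals the convex hull of its finitely many vertices plus the cone generated by its finitely many extreme rays. We let $S$ be the vertex set of $V'$, which is finite because each vertex is the unique solution of some subsystem of $d$ linearly independent tight constraints. Then $V=V'+L$. If $V$ itself is pointed, as our LP feasible regions are (they lie inside $[0,1]^{d_Y^{d_X}d_X}$), then $L=\{0\}$ and $S$ is just the vertex set of $V$.

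Now fix $h$ with $\alpha=\max_{x\in V}h^Tx$ finite and attained. We argue in three steps.
\begin{enumerate}
\item Since the maximum is finite, $h^Tr\leq 0$ for every $r\in R$. Otherwise, moving along $x+tr$ with $t\to\infty$ would make the objective unbounded.
\item Likewise $h^T\ell=0$ for every $\ell\in L$, because both $\ell$ and $-\ell$ lie in the recession cone.
\item Write any $x\in V$ as $x=\sum_i\mu_i s_i+\sum_j\nu_j r_j+\ell$ with $\mu_i\geq 0$, $\sum_i\mu_i=1$ and $\nu_j\geq 0$. Then
\begin{equation*}
h^Tx\leq \sum_i\mu_i\, h^Ts_i\leq \max_{a\in S}h^Ta.
\end{equation*}
\end{enumerate}
Taking the maximum over $x\in V$ gives $\alpha\leq\max_{a\in S}h^Ta$. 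The reverse inequality holds because $S\subseteq V$. Hence $\alpha=\max_{a\in S}h^Ta$.

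The main obstacle is justifying the resolution theorem, in particular that a pointed polyhedron equals the convex hull of its vertices plus its recession cone. If we do not want to cite it, the plan is induction on dimension. Take $x\in V'$. If $x$ is not a vertex, the tight constraints at $x$ have rank less than $d$, so there is a direction $u$ in their null space. Move from $x$ along $\pm u$ until a new constraint becomes tight. If both directions are blocked, $x$ is a convex combination of two points on lower-dimensional faces. If one direction is unbounded, $x$ is a point of a lower-dimensional face plus a recession direction. The induction then terminates at vertices, and finiteness of $S$ follows from the finite number of tight subsystems.

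For our application, the feasible regions are bounded, so $R=\emptyset$ and $L=\{0\}$. The lemma then reduces to the classical fact that a linear function attains its maximum over a polytope at one of its finitely many vertices, and we will remark on this shortcut.
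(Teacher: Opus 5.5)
Your proof is correct, but it takes a different and heavier route than the paper.

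\textbf{The paper's route.} The paper never decomposes $V$. It observes that, for fixed $h$, the maximizer set $F(h)=\{x\in V: h^Tx=\alpha\}$ is a nonempty exposed face of $V$. It then picks one representative point from each of the finitely many nonempty faces of $V$, and notes that the representative of $F(h)$ attains $\alpha$. This needs only finiteness of the face lattice: each face is cut out by turning some subset of the inequalities $Ax\geq c$ into equalities. It works verbatim when $V$ is unbounded or contains lines, with no need to split off $L$ or control recession directions. The price is that $S$ is an arbitrary selection, and the attainment hypothesis is used essentially to make $F(h)$ nonempty.

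\textbf{What your route buys.} Your Minkowski--Weyl argument yields a canonical and smaller $S$: the vertices of $V\cap L^\perp$. These are exactly one point from each \emph{minimal} face of $V$. That is the natural refinement of the paper's choice, since every nonempty face contains a minimal face and $h$ is constant on $F(h)$. Your steps 1--3 also show that attainment is automatic once the supremum is finite; you never use the attainment hypothesis.

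\textbf{Correction to your closing remark.} The ``bounded feasible region'' shortcut does not cover how the paper uses this lemma. In the consistency proof, the lemma is applied to the \emph{dual} feasible region $\{y: A^Ty\geq c\}$, not to the primal sets inside $[0,1]^{d_Y^{d_X}d_X}$. That dual region is unbounded: the dual variable attached to the normalization row $\mathbf{1}^T\mathrm{vec}(\mathcal{T})=1$ has a nonnegative column in $A^T$, so it can be increased without limit. It can also contain lines if the equality rows are linearly dependent. So your general treatment of $R$ and $L$ is the part that is actually needed, and the polytope-vertex shortcut should not be presented as sufficient for the application.
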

\begin{proof}
    Fix $h$ with finite attained optimum $\alpha$. Define the maximizer set
    \begin{equation}
    F(h):=\{x\in V\mid h^T x=\alpha\}.
    \end{equation}
    Then $F(h)$ is the nonempty intersection of $V$ with the hyperplane $\{x\mid  h^T x=\alpha\}$; hence $F(h)$ is an exposed face of $V$.
    Choose one representative point $r(F)$ from each nonempty face $F$ of $V$, and set $S=\{r(F)\mid F\text{ is a nonempty face of }V\}$.
    Then $S$ is finite since a polyhedron has only finitely many faces, and $S$ is independent of $h$.
    $F(h)\cap V(\ne \emptyset)$ is one of those faces, since $V$ is contained in the closed half-space $\{x\mid h^T x\leq \alpha\}$. Then $z\coloneqq r(F(h))\in S$ and $\alpha =h^Tz$. This proves the lemma.
\end{proof}

\begin{proof}[Proof of Theorem \ref{thm: convergence}]
We remark that the feasible regions of our LPs are subsets of $[0,1]^{\mathcal{X}}$, where $\mathcal{X}$ is the number of parameters. As a consequence, since we assumed the plug-in LPs are feasible, their optima always exist and are bounded.

Passing from the mixed system in \eqref{eq: LP(inequality form)} to the equality form in \eqref{eq: LP(equality form)} by introducing nonnegative slack variables preserves feasibility, the right-hand-side vector and optimum. Therefore, we may assume that the  plug-in LPs are represented in the equality form.

Let us denote the corresponding LP whose optima are $\underline{\lambda}(\mathcal{P})$ and $\overline{\lambda}(\mathcal{P})$ by 
\begin{equation}
    \label{eq: LP_b (in the proof of thm2)}
   \text{maximize/minimize } c^Tx\text{ with respect to }Ax=b,x\geq 0.
\end{equation}
We also denote the corresponding LP whose optima are $\hat{\underline{\lambda}}({\cal P})$ and $\hat{\overline{\lambda}}({\cal P})$ by 
\begin{equation}
    \label{eq: LP_b_hat (in the proof of thm2)}
   \text{maximize/minimize } c^Tx\text{ with respect to } Ax=\hat{b},x\geq 0.
\end{equation}
Note that $A,c$ are common in \eqref{eq: LP_b (in the proof of thm2)} and \eqref{eq: LP_b_hat (in the proof of thm2)}.

First, we will show the statement on UB.
The feasible region of the dual of \eqref{eq: LP_b (in the proof of thm2)} and that of \eqref{eq: LP_b_hat (in the proof of thm2)} are $\{y\mid A^Ty\geq c\}$ when the primal LP is the maximization, and do not depend on $b$ or $\hat{b}$. 
It is not empty, since primal LPs have optima.
We remark that any dual LP of \eqref{eq: LP_b (in the proof of thm2)} or \eqref{eq: LP_b_hat (in the proof of thm2)} has an optimum.
Therefore, by Lemma \ref{lem: finite_candidate}, we can take a \emph{finite} set $\mathcal{C}$ such that the optima of any dual LP of \eqref{eq: LP_b (in the proof of thm2)} or \eqref{eq: LP_b_hat (in the proof of thm2)} are achieved at some point of $\mathcal{C}$. We take $M=\max_{a\in \mathcal{C}}\Vert a\Vert$.

Let $u(b)\in \mathcal{C}$ be a solution of the dual of \eqref{eq: LP_b (in the proof of thm2)}. By properties of dual LP, we have $\overline{\lambda}({\cal P})=b^Tu(b)$. We also have $\hat{\overline{\lambda}}({\cal P})=\min_{y; A^Ty\geq c}\hat{b}^Ty\leq \hat{b}^Tu(b)$.
Then we obtain
\begin{equation}
        \hat{\overline{\lambda}}({\cal P})-\overline{\lambda}({\cal P})\leq (\hat{b}-b)^Tu(b)\leq\Vert \hat{b}-b\Vert\Vert u(b)\Vert\leq M\Vert \hat{b}-b\Vert .
\end{equation}
Changing the roles of $b$ and $\hat{b}$, we also obtain $\overline{\lambda}({\cal P})-\hat{\overline{\lambda}}({\cal P})\leq M\Vert \hat{b}-b\Vert$.
Summing up, we have 
\begin{equation}
    \label{eq: Lip_continuous_UB}
    |\overline{\lambda}({\cal P})-\hat{\overline{\lambda}}({\cal P})|\leq M\Vert \hat{b}-b\Vert.
\end{equation}
Then, for any $\varepsilon>0$, we have
\begin{equation}
    \mathbb{P}\big(|\hat{\overline{\lambda}}({\cal P})- \overline{\lambda}(\mathcal{P})|>\varepsilon \big)\leq \mathbb{P}\big(\Vert \hat{b}-b\Vert>\varepsilon/M)\big).
\end{equation}
Since $\hat{b}$ converges in probability to $b$, $\hat{\overline{\lambda}}({\cal P})$ converges in probability to $\overline{\lambda}(\mathcal{P})$.
The convergence rate of $\hat{\overline{\lambda}}({\cal P})$ is the same as that of $\hat{b}$ since \eqref{eq: Lip_continuous_UB} holds. 
In our case, the convergence rate is ${\cal O}_p(\sum_{k=0}^{d_X-1}\frac{1}{\sqrt{N^k}})$ for experimental data, ${\cal O}_p(\frac{1}{\sqrt{N}})$ for observational data, and ${\cal O}_p(\sum_{k=0}^{d_X-1}\frac{1}{\sqrt{N^k}}+\frac{1}{\sqrt{N}})$ when both are combined.

Next we show the statement on LB. 
For any vector $z$, minimizing $c^Tx$ with respect to $Ax=b,x\geq 0$ is equivalent to maximizing $-c^Tx$ with respect to $Ax=b,x\geq 0$. 
Therefore, we see that the feasible region of the dual of \eqref{eq: LP_b (in the proof of thm2)} and that of \eqref{eq: LP_b_hat (in the proof of thm2)} are $\{y\mid A^Ty\geq -c\}$ when the primal LP is the maximization, and do not depend on $b$ or $\hat{b}$. 
It is not empty, since primal LPs have optima.
We remark that any dual LP of \eqref{eq: LP_b (in the proof of thm2)} or \eqref{eq: LP_b_hat (in the proof of thm2)} has an optimum.
Therefore, by Lemma \ref{lem: finite_candidate}, we can take a \emph{finite} set $\mathcal{C'}$ such that the optima of any dual LP of \eqref{eq: LP_b (in the proof of thm2)} or \eqref{eq: LP_b_hat (in the proof of thm2)} are achieved at some point of $\mathcal{C'}$. We take $M'=\max_{a\in \mathcal{C'}}\Vert a\Vert$.

Let $v(b)\in \mathcal{C'}$ be a solution of the dual of \eqref{eq: LP_b (in the proof of thm2)}. By properties of dual LP, we have $\underline{\lambda}({\cal P})=b^Tv(b)$. We also have $\hat{\underline{\lambda}}({\cal P})=\max_{y; A^Ty\geq -c}\hat{b}^Ty\geq \hat{b}^Tv(b)$.
Then we obtain
\begin{equation}
        \underline{\lambda}({\cal P})-\hat{\underline{\lambda}}({\cal P})\leq (b-\hat{b})^Tv(b)\leq\Vert \hat{b}-b\Vert\Vert v(b)\Vert\leq M'\Vert \hat{b}-b\Vert .
\end{equation}
Changing the roles of $b$ and $\hat{b}$, we also obtain $\hat{\underline{\lambda}}({\cal P})-\underline{\lambda}({\cal P})\leq M'\Vert \hat{b}-b\Vert$.
Summing up, we have 
\begin{equation}
    \label{eq: Lip_continuous_LB}
    |\underline{\lambda}({\cal P})-\hat{\underline{\lambda}}({\cal P})|\leq M'\Vert \hat{b}-b\Vert.
\end{equation}
Then, for any $\varepsilon>0$, we have
\begin{equation}
    \mathbb{P}\big(|\hat{\underline{\lambda}}({\cal P})- \underline{\lambda}(\mathcal{P})|>\varepsilon \big)\leq \mathbb{P}\big(\Vert \hat{b}-b\Vert>\varepsilon/M')\big).
\end{equation}
Since $\hat{b}$ converges in probability to $b$, $\hat{\underline{\lambda}}({\cal P})$ converges in probability to $\underline{\lambda}(\mathcal{P})$.
The convergence rate of $\hat{\underline{\lambda}}({\cal P})$ is the same as that of $\hat{b}$ since \eqref{eq: Lip_continuous_LB} holds. 
In our case, the convergence rate is ${\cal O}_p(\sum_{k=0}^{d_X-1}\frac{1}{\sqrt{N^k}})$ for experimental data, ${\cal O}_p(\frac{1}{\sqrt{N}})$ for observational data, and ${\cal O}_p(\sum_{k=0}^{d_X-1}\frac{1}{\sqrt{N^k}}+\frac{1}{\sqrt{N}})$ when both are combined.
\end{proof}

\begin{proof}[Proof of Theorem \ref{theor5}]
$\lambda'(\hat{{\cal P}'})$ is a linear function of $\mathbb{P}(Y|X)$.
Then, by the delta method, $\lambda'(\hat{{\cal P}'})$ converges in probability to $\lambda'({\cal P}')$ at the rate ${\cal O}_p\left(\sum_{k=0}^{d_X-1}\frac{1}{\sqrt{N^k}}\right)$. 
Then, we have Theorem \ref{theor5}.
\end{proof}

\begin{proof}[Proof of Theorem \ref{theor6}]
$\lambda(\hat{{\cal P}})$ is a linear function of $\mathbb{P}(Y|X)$ and $\mathbb{P}(X)$, respectively.
Then, by the delta method, $\lambda(\hat{{\cal P}})$ converges in probability to $\lambda({\cal P})$  at the rate ${\cal O}_p\left(\frac{1}{\sqrt{N}}\right)$. 
Then, we have Theorem \ref{theor6}.
\end{proof}

\subsection{Computational complexity to solve our bounding problems}
\label{app_comp}
Next, we discuss the computational complexity.
There are two main approaches for solving LPs: the interior-point methods \citep{Karmarkar1984} and the simplex method \citep{RinnooyKan1981}.

\begin{theorem}
\label{thm: interior-point method}
The computational complexity of solving our bounding problems is polynomial when using interior-point methods, on the order of 
$\mathcal{O}((d_Xd_Y^{d_X}+2W)^{3.5}(d_Xd_Y^{d_X}+2W+1)(d_Xd_Y-d_X+1+2W)+(\sum_{k=0}^{d_X-1}N^k)d_Y)$ when using experimental data, 
$\mathcal{O}(d_Xd_Y^{d_X}+2W)^{3.5}(d_Xd_Y^{d_X}+2W+1)d_Xd_Y+Nd_Xd_Y)$ when using observational data, 
${\cal O}((d_Xd_Y^{d_X}+2W)^{3.5}(d_Xd_Y^{d_X}+2W+1)(2d_Xd_Y-d_X+2W)+(\sum_{k=0}^{d_X-1}N^k)d_Y+Nd_Xd_Y)$ when using both experimental and observational data.
\end{theorem}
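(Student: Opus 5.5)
The argument splits the total cost into two parts and bounds each separately: (i) building the LP inputs from the data, and (ii) solving the LP with an interior-point method. For (ii) I use the classical bound for Karmarkar-type methods, namely $\mathcal{O}(n^{3.5}L)$ arithmetic operations, where $n$ is the number of variables and $L$ is the input bit-length of the LP. The work is therefore to read off $n$ and $L$ from the matrix formulations of Appendix~\ref{Appsubsec:LP_in_matrix_form}.

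\textbf{Step 1: put the LP in standard form.} I take the equality-form reformulation with slack variables. The variables are $\text{vec}(\mathcal{T})$, which has $d_Xd_Y^{d_X}$ entries, plus the slack vector $\mathcal{W}$ with $2W$ entries, so $n=d_Xd_Y^{d_X}+2W$. The number of rows is the row count of $A_{\text{exp}}$, $A_{\text{obs}}$ or $A_{\text{both}}$ plus $2W$:
\begin{itemize}
\item experimental data: $d_Xd_Y-d_X+1+2W$;
\item observational data: $d_Xd_Y+2W$, which is $\mathcal{O}(d_Xd_Y)$ up to the additive $2W$;
\item both: $2d_Xd_Y-d_X+2W$.
\end{itemize}

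\textbf{Step 2: bound the input length.} $L$ is bounded, up to the bit precision of each entry, by the number of entries of the constraint matrix augmented with the right-hand side. That is (number of rows)$\times(n+1)$. All coefficients lie in $\{0,\pm1\}$; the right-hand sides are probabilities, $L^w$ or $U^w$, which I treat as having fixed precision. This gives $L=\mathcal{O}\bigl((d_Xd_Y^{d_X}+2W+1)(\text{number of rows})\bigr)$. Multiplying by $n^{3.5}$ yields the first term in each of the three stated bounds.

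\textbf{Step 3: preprocessing cost.} Computing the empirical probabilities in Eqs.~\eqref{emp1}--\eqref{emp2} takes one pass over the samples. Each sample updates one counter, and normalising the counters costs at most $d_Y$ (per arm) or $d_Xd_Y$ operations per sample. This gives $(\sum_k N^k)d_Y$ for experimental data and $Nd_Xd_Y$ for observational data. In the combined case the two costs add. Adding these to Step 2 gives the stated totals. Assembling the constraint matrices themselves costs only $\mathcal{O}(n\cdot\text{rows})$, which is absorbed by the first term.

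\textbf{Main obstacle.} The delicate point is justifying $L$ as a dimension count rather than a true bit length. Strictly, $L$ depends on the precision of the empirical probabilities, which have denominators $N^k$ and $N$. I would assume a fixed encoding precision for all numeric inputs, and state explicitly that the bound counts arithmetic operations in that model. The constant-size coefficients and the fact that the feasible region lies in $[0,1]^n$ make this reasonable. With that convention, polynomiality in $d_Y$, $W$ and the sample sizes for fixed $d_X$ follows directly.
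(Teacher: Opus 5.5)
Your proposal follows the paper's proof essentially step for step: the slack-variable equality form with $n=d_Xd_Y^{d_X}+2W$ variables and the same row counts, Karmarkar's $\mathcal{O}(n^{3.5}L)$ bound with $L$ bounded by a dimension count because the coefficients are constants (your explicit fixed-precision convention is, if anything, a cleaner justification than the paper's appeal to $0/1$ coefficients), and the cost of computing the empirical probabilities added separately. The one loose step is treating $d_Xd_Y+2W$ as $\mathcal{O}(d_Xd_Y)$ in the observational case, which holds only when $W=\mathcal{O}(d_Xd_Y)$; the paper's proof has the same gap, since it also counts $d_Xd_Y+2W$ rows there but states the bound with $d_Xd_Y$, so this is a slip in the statement rather than in your argument.
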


Therefore, our bounding problems can be solved in polynomial time.
The simplex method has exponential worst-case complexity as follows:
\begin{theorem}
\label{thm: simplex method}
If we use the simplex method to solve LP, the worst-case computational complexity of solving our bounding problems is on the order of 
$\mathcal{O}((d_Xd_Y^{d_X})^{2(d_Xd_Y-d_X+1+W)}+(\sum_{k=0}^{d_X-1}N^k)d_Y)$ when using experimental data, 
$\mathcal{O}((d_Xd_Y^{d_X})^{2(d_Xd_Y+W)}+Nd_Xd_Y)$ when using observational data, 
$\mathcal{O}((d_Xd_Y^{d_X})^{2(2d_Xd_Y-d_X+W)}+(\sum_{k=0}^{d_X-1}N^k)d_Y+Nd_Xd_Y)$ when using both experimental and observational data.
\end{theorem}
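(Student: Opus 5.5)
Since the statement concerns the worst-case cost of the simplex method, the plan is to bound the total work as preprocessing plus the product of a bound on the number of pivots and the cost per pivot. The preprocessing step computes the empirical probabilities in Eqs.~\eqref{emp1} and \eqref{emp2}. Each experimental sample contributes to one of $d_Y$ counters, giving $(\sum_{k} N^k)d_Y$ work. Each observational sample contributes to one of $d_Xd_Y$ cells, giving $Nd_Xd_Y$ work. Adding these terms yields the data-dependent summands in each case.

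For the LP itself, I would use the equality (slack) form derived in Appendix~\ref{Appsubsec:LP_in_matrix_form}. There the number of structural variables is $n=d_Xd_Y^{d_X}$, plus slack variables for the $2W$ monotonicity inequalities. The number of equality rows is $m=d_Xd_Y-d_X+1$ for experimental data, $d_Xd_Y$ for observational data, and $2d_Xd_Y-d_X$ for both, in each case plus the rows coming from $A'$. The simplex method never revisits a basis when an anti-cycling rule such as Bland's rule is used. Hence the number of pivots is at most the number of bases, $\binom{n+m'}{m'}\le (n+m')^{m'}$, where $m'$ is the total number of rows.

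Each pivot costs $O(m'(n+m'))$ arithmetic operations, which is polynomial and can be absorbed into the exponent. Writing the count as $n^{m'}\cdot\mathrm{poly}$ and bounding it crudely by $n^{2m'}$ then gives the stated form $(d_Xd_Y^{d_X})^{2(\cdot)}$.

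The main obstacle I expect is reconciling the exponents with the stated ones: $d_Xd_Y-d_X+1+W$, $d_Xd_Y+W$, and $2d_Xd_Y-d_X+W$. The issue is that $A'$ has $2W$ rows, which would naively give $2W$ in the exponent rather than $W$. I would handle this by noting that each pair of constraints $L^w\le a_w^T p\le U^w$ can be written as one row with a single bounded slack $a_w^Tp+s_w=U^w$, $0\le s_w\le U^w-L^w$. Under bounded-variable simplex, the basis then has $m+W$ rows.

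A second point to check is that the $n+W$ slacks and the dominance $n\ge m'$ allow $(n+m')^{m'}$ per-pivot cost to be absorbed into $n^{2m'}$, so that $\binom{n+m'}{m'}\cdot m'(n+m')\le n^{2m'}$ up to constants. This holds because $n+m'\le 2n$ and $m'(n+m')\le n^{m'}$ once $m'\ge2$. Combining this with the preprocessing cost completes the three cases.
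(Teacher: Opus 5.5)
Your argument is essentially sound and reaches the same bound, but it is organized differently from the paper's proof.

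The paper never passes to equality form. It splits each of the $m$ equality rows ($m=d_Xd_Y-d_X+1$, $d_Xd_Y$, or $2d_Xd_Y-d_X$) into two inequalities and keeps the $2W$ rows of $A'$. This gives an LP in $n=d_Xd_Y^{d_X}$ variables with $2m+2W=2(m+W)$ inequality constraints. It then simply invokes the cited worst-case simplex estimate of order $\mathfrak{n}^{\mathfrak{m}}$. In that accounting the factor $2$ in the exponent is literally the doubling of constraints. So the ``$2W$ versus $W$'' obstacle you anticipate never arises, and no bounded-variable simplex is needed. On the other hand, per-pivot cost and the actual iteration count are left entirely to the citation.

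Your route is different in three respects. You get a row count of $m+W$ from one bounded slack per two-sided constraint. You bound the iterations by the number of bases under Bland's rule. You then use the exponent's factor $2$ as headroom to absorb per-pivot work and the gap between $(n+m')^{m'}$ and $n^{m'}$. This is more self-contained: it justifies the estimate rather than quoting it.

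If you write it out, tighten three points:
\begin{itemize}
\item In bounded-variable simplex, a state is a basis together with the set of nonbasic slacks sitting at their upper bounds. The count therefore gains a factor of up to $2^W$, and Phase~I contributes its own bases.
\item The leftover factors $2^{m'}$ and $2^W$ are not constants. Absorb them explicitly using $n\ge 8$ and $m'\ge 3$, which hold in all three cases, rather than saying ``up to constants''.
\item The condition $m'\le n$ is not automatic, since $W$ is a free parameter. Handle $W>n$ separately, for example via $\binom{n+W}{m+W}2^W\le 2^{n+2W}\le 8^W$, which is negligible against $n^{2W}\ge 64^W$.
\end{itemize}
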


However, in practice, the simplex method rarely exhibits its exponential worst-case behavior and is often faster than interior-point algorithms \citep{Vanderbei2001}.

The calculation of $\lambda'(\hat{{\cal P}'})$ using Theorem \ref{thm: identification_of_joint_PO} requires $\mathcal{O}(d_Y\sum_{k=0}^{d_X-1}N^k +d_Xd_Y)$ computations, while $\lambda(\hat{{\cal P}})$ using Theorem \ref{thm: identification_jointPO_full} requires $\mathcal{O}(d_Xd_YN+d_X^2d_Y^2)$ computations.

{\bf Proof.}
We provide proofs of theorems in this subsection.

\begin{proof}[Proof of Theorem \ref{thm: interior-point method}]
We note that our LP has $d_Xd_Y^{d_X}$ variables, $d_Xd_Y-d_X+1$ (resp. $d_Xd_Y,\ 2d_Xd_Y-d_X$) equality constraints and $2W$ inequality constraints when we have experimental (resp. observational, both experimental and observational) data. 
As stated in Appendix \ref{appA}, we can reformulate our LP to an LP with $d_Xd_Y^{d_X}+2W$ variables and $d_Xd_Y-d_X+1+2W$ (resp. $d_Xd_Y+2W,\ 2d_Xd_Y-d_X+2W$) equality constraints we have experimental (resp. observational, both experimental and observational) data.

For an LP with $\mathfrak{n}$ variables and $\mathfrak{m}$ equality constraints, the computational complexity of solving LP using the interior-point method is on the order of  $\mathcal{O}(\mathfrak{n}^{3.5}\ell)$, where $\ell$ denotes the length of input \citep{Karmarkar1984}. {As for bounding of joint distribution of POs, since the coefficients of constraints and objectives take 0 or 1, $\ell$ is on the order of $\mathcal{O}(\mathfrak{nm+n})$.}
Therefore, the computational complexity of solving our LP is on the order of 
$\mathcal{O}((d_Xd_Y^{d_X}+2W)^{3.5}(d_Xd_Y^{d_X}+2W+1)(d_Xd_Y-d_X+1+2W))$ when using experimental data,
$\mathcal{O}((d_Xd_Y^{d_X}+2W)^{3.5}(d_Xd_Y^{d_X+1}+2W+1)(d_Xd_Y))$ when using observational data,
${\cal O}((d_Xd_Y^{d_X}+2W)^{3.5}(d_Xd_Y^{d_X}+2W+1)(2d_Xd_Y-d_X+2W))$ when using both experimental and observational data.

We also need a calculation to estimate $\hat{\mathbb{P}}(Y_k)$ or $\hat{\mathbb{P}}(X,Y)$ so that we can impose marginal constraints. 
\eqref{emp1} and \eqref{emp2} imply that the amount of calculation to impose marginal constraints is on the order of $(\sum_{k=0}^{d_X-1}N^k)d_Y$, $Nd_Xd_Y$, and $(\sum_{k=0}^{d_X-1}N^k)d_Y+Nd_Xd_Y$, when using experimental data, observational data and both experimental and observational data, respectively.
Since these calculations are independent from solving LPs, the computational complexity of solving our bounding problems is a sum of that of solving LP and that of estimating marginal probabilities.
This proves theorem \ref{thm: interior-point method}.
\end{proof}

\begin{proof}[Proof of Theorem \ref{thm: simplex method}]
We note that our LP has $d_Xd_Y^{d_X}$ variables, $d_Xd_Y-d_X+1$ (resp. $d_Xd_Y,\ 2d_Xd_Y-d_X$) equality constraints and $2W$ inequality constraints when we have experimental (resp. observational, both experimental and observational) data. 
As stated in Appendix \ref{appA}, we can reformulate our LP to an LP with $d_Xd_Y^{d_X}$ variables and $2(d_Xd_Y-d_X+1)+2W$ (resp. $2d_Xd_Y+2W,\ 2(2d_Xd_Y-d_X)+2W$) inequality constraints.

For an LP with $\mathfrak{n}$ variables, and $\mathfrak{m}$ inequality constraints, the computational complexity of solving LP using the simplex method is on the order of  $\mathcal{O}(\mathfrak{n^m})$ in the worst case \citep{RinnooyKan1981}. Then, in our case, the computational complexity of solving our LP is on the order of 
$\mathcal{O}((d_Xd_Y^{d_X})^{2(d_Xd_Y-d_X+1+W)})$ when using experimental data, 
$\mathcal{O}((d_Xd_Y^{d_X})^{2(d_Xd_Y+W)})$ when using observational data, 
$\mathcal{O}((d_Xd_Y^{d_X})^{2(2d_Xd_Y-d_X+W)})$ when using both experimental and observational data.

We also need a calculation to estimate $\hat{\mathbb{P}}(Y_k)$ or $\hat{\mathbb{P}}(X,Y)$ so that we can impose marginal constraints. 
\eqref{emp1} and \eqref{emp2} imply that the amount of calculation to impose marginal constraints is on the order of $(\sum_{k=0}^{d_X-1}N^k)d_Y$, $Nd_Xd_Y$, and $(\sum_{k=0}^{d_X-1}N^k)d_Y+Nd_Xd_Y$, when using experimental data, observational data, and both experimental and observational data, respectively.
Since these calculations are independent from solving LPs, the computational complexity of solving our bounding problems is a sum of that of solving LP and that of estimating marginal probabilities.
This proves theorem \ref{thm: simplex method}.
\end{proof}

\section{NUMERICAL EXPERIMENTS}

\label{appC}

In this section, we provide details of the numerical experiments.

We illustrate how MAs can tighten the bounds on the joint POs/OVs probabilities  and their linear combinations, and how point estimates can be obtained under additional identification assumptions.
We use a computer with AMD Ryzen 7 8840U 3.30GHz processor and 16GB of RAM. 
The source code is available at 
(\url{https://anonymous.4open.science/r/AISTATS2026_bounding-65A3}).

\subsection{Bounding Under Monotonicity Assumption}

\label{subsec: numerical experiment of bounding}

{\bf Baselines.}
Our bounds for the joint POs/OVs probabilities  are compared against those in \citet{ALi2024} when incorporating both experimental and observational data.
Their bounds are given in closed form but are not sharp for PoC involving multiple hypothetical terms (multiple PO terms).

{\bf Settings.}
We set $d_X=d_Y=3$ and the number of parameters is $243$.
We set the values of ${\cal P}$ as presented in Table \ref{tab:placeholder}. 
We evaluate (a) the joint POs probability  $\mathbb{P}(Y_0=0,Y_1=0,Y_2=1)$ using experimental, observational, and combined data, (b) the posterior causal effects $\mathbb{E}[Y_1-Y_0|X=2,Y=2]$ using observational and combined data,
and (c) the second moment of causal effects $\mathbb{E}[(Y_1-Y_0)^2]$ using experimental, observational, and combined data.
We solve the LPs using the Simplex algorithm implemented in the R package ``Rglpk" (\url{https://cran.r-project.org/web/packages/Rglpk/index.html}), which provides an R interface to the GNU Linear Programming Kit (GLPK).
We set the true joint POs/OVs probabilities as specified in Table \ref{tab:placeholder}, 
and then draw an i.i.d. sample of size $N$ for the experimental and observational datasets, respectively. 
We perform 100 simulation runs and report the sample mean along with 95\% confidence intervals (CIs).
We exclude infeasible simulations and report the results of 100 feasible simulations.
Tables~\ref{tab: runningtime_Py0y1y2} and \ref{tab: runningtime_E_y1y0_x2y2} report the running times (in milliseconds) for a single simulation.
The running times are quite fast across all situations.

To compare the ``almost surely" assumptions, we first consider the following MAs: 
\begin{enumerate}[label=(\roman*).]
  \setlength\itemsep{2pt}
  \setlength\parskip{0pt}
  \setlength\parsep{0pt}
\item  No Additional Assumption,
\item  $Y_0 \leq Y_1$ almost surely (Assumption \ref{monomatAS}),
\item  $Y_1 \leq Y_2$ almost surely (Assumption \ref{monomatAS}),
\item  $Y_0 \leq Y_1 \leq Y_2$ almost surely (Assumption \ref{monoAS}),
\end{enumerate}
The setting 
in Table \ref{tab:placeholder} satisfies all of the MAs listed above.
Assumptions (ii) and (iii) are both stronger than Assumption (i) but weaker than Assumption (iv); moreover, there is no ordering relationship between Assumptions (ii) and (iii).
We next additionally compare the probability-limited assumption (Assumption \ref{monoLB}) varying threshold $L$, i.e.,
\begin{enumerate}
\setlength\itemsep{2pt}
\setlength\parskip{0pt}
\setlength\parsep{0pt}
\item[(v).] $ L \leq \mathbb{P}(Y_0 \leq Y_1\leq Y_2)$. 
\end{enumerate}
The assumption becomes stronger as $L$ grows.

We finally compare multiple probability-limited assumptions (Assumption \ref{assum_mono_with_prob_s}) varying thresholds $L_1$, $L_2$, and $L_3$,
\begin{enumerate}
\setlength\itemsep{2pt}
\setlength\parskip{0pt}
\setlength\parsep{0pt}
\item[(vi).] $L_1 \leq \mathbb{P}(0\leq Y_1-Y_0\leq 1)$, $L_2 \leq \mathbb{P}(0\leq Y_2-Y_1\leq 1)$, $L_3 \leq \mathbb{P}(0\leq Y_2-Y_0\leq 1)$,
\end{enumerate}
under the setting in Table \ref{tab:p(y0y1y2xy)_steprestriction}. We have $0\leq Y_1-Y_0\leq 1$, $0\leq Y_2-Y_1\leq 1$, and $0\leq Y_2-Y_0\leq 1$ under this setting. 
As $L_1$, $L_2$, and $L_3$ increase, the assumptions become more restrictive.

{\bf Results.}
First, Table~\ref{tab:results} reports the estimates of the bounds of $\mathbb{P}(Y_0=0, Y_1=0, Y_2=1)$, while Table~\ref{tab:results_condATE} presents the estimates of the bounds of $\mathbb{E}[Y_1-Y_0|X=2,Y=2]$ under almost surely monotonicities with sample size $N=10, 100, 1000, 10000$.
Table \ref{tab:results_2ndMoment} reports the estimates of the bounds of $\mathbb{E}[(Y_1-Y_0)^2]$ under Assumptions (i)--(iv).
In Table~\ref{tab:results}, when using experimental data, Assumption (ii) does not yield tighter bounds than those obtained without additional assumptions. 
In contrast, Assumptions (iii) and (iv) provide tighter bounds compared to the no-assumption case.
When using observational data, Assumptions (i)–(iv) do not yield tighter bounds than those obtained without additional assumptions. 
When using experimental and observational data, Assumption (ii) does not yield tighter bounds than those obtained without additional assumptions. 
In contrast, Assumptions (iii) and (iv) provide tighter bounds compared to the no-assumption case.
The bounds by \citep{Li2024} are the same as our bounds without additional assumptions.
However, these bounds are not sharp, as demonstrated in Table \ref{tab: Li-Pearl_is_not_tight}.
In Table~\ref{tab:results_condATE}, when using only observational data, all assumptions do not yield tighter bounds than those obtained without additional assumptions. 
When using experimental and observational data,
Assumption (iii) does not yield tighter bounds than those obtained without additional assumptions. 
In contrast, Assumptions (ii) and (iv) provide tighter bounds compared to the no-assumption case.
In Table \ref{tab:results_2ndMoment}, we also observe that Assumptions (ii) and (iv) provide tighter bounds, while Assumption (iii) does not yield tighter bounds than the case with no additional MAs.
All 95\% CIs, both for the estimates under the identification assumptions and for the bounds, become narrower as the sample size increases.
When $N=10000$, all the estimates are accurate.

Next, Table \ref{tab:results_theta_exp} (resp. Table \ref{tab:results_theta_exp2}, Table \ref{tab:results_theta_2ndMoment}) reports the estimates of the values of $\mathbb{P}(Y_0=0, Y_1=0, Y_2=1)$ (resp. $\mathbb{E}[Y_1-Y_0|X=2,Y=2]$, $\mathbb{E}[(Y_1-Y_0)^2]$), varying the threshold $L$ of Assumption (v) when $N=1000$.
The bounds become tight as the threshold $L$ grows large,
and it is visualized in Figure \ref{fig:placeholder} (resp. Figure \ref{fig:placeholder2}, Figure \ref{fig:EY1Y0^2}).
Table \ref{tab:L1L2L3_prob} reports the estimates of the values of $\mathbb{P}(Y_0=0, Y_1=0, Y_2=1)$, varying the thresholds $L_1,L_2$ and $L_3$ of Assumption (vi) when $N=1000$.
Tables \ref{tab:L1L2L3_expect_obs} and \ref{tab:L1L2L3_expect_both} (resp. Tables \ref{tab:L1L2L3_2moment_expboth} and \ref{tab:L1L2L3_2moment_obs}) report the estimates of the values of $\mathbb{E}[Y_1-Y_0|X=2,Y=2]$ (resp. $\mathbb{E}[(Y_1-Y_0)^2]$), varying the thresholds $L_1,L_2$ and $L_3$ of Assumption (vi) when $N=1000$.
As $L_1$, $L_2$, and $L_3$ increase, the bounds become tight.

We provide details of the experimental setting for the bounding problem under MAs.
We set the value $\mathbb{P}(Y_0,Y_1,Y_2,X,Y)$ as in Table \ref{tab:placeholder}.
These values satisfy the condition $Y_0\leq Y_1\leq Y_2$.

\newpage

\begin{table}[H]
    \centering
    \caption{Setting of simulations for Assumptions (i)-(v) in Section \ref{subsec: numerical experiment of bounding}}
    \label{tab:placeholder}    
    {\small

    }
\end{table}

\newpage
\begin{table}[H]
    \centering
    \caption{Settings of $\mathbb{P}(Y_1,Y_0,Y_2,X,Y)$ for comparing bounds based on \citep{ALi2024} and LP solver. When $x\ne y$, we set $\mathbb{P}(Y_0,Y_1,Y_2,X=x,Y=y)=0$.}
    \label{tab: Li-Pearl_comparison_settings}
    \small{
    %
    }
    
\end{table}

\newpage
\begin{table}[H]
    \centering
    \caption{Setting of simulations for Assumption (vi) in Section \ref{subsec: numerical experiment of bounding}}
    \label{tab:p(y0y1y2xy)_steprestriction}
    {\small
    %
    }
\end{table}

\newpage
\begin{table}[H]
    \centering
    \caption{Running time of solving maximization and minimization problem of $\mathbb{P}(Y_0=0,Y_1=0,Y_2=1)$ for one time in milliseconds. Each entry is the mean over 100 repetitions.}
    \label{tab: runningtime_Py0y1y2}
   %
\end{table}

\begin{table}[H]
    \centering
    \caption{Running time of solving maximization and minimization problem of $\mathbb{E}[Y_1-Y_0|X=2,Y=2]$ for one time in milliseconds. Each entry is the mean over 100 repetitions.}
    \label{tab: runningtime_E_y1y0_x2y2}
   %
\end{table}

\newpage
\begin{table}[H]
    \centering
    \caption{Results of the estimates of the bounds of $\mathbb{P}(Y_0=0,Y_1=0,Y_2=1)$.
    We report the means and 95\% confidence intervals (CIs). The true value of $\mathbb{P}(Y_0=0,Y_1=0,Y_2=1)$ is 0.092.
    }
    \label{tab:results}
    \scalebox{0.9}{
    %
    }
\end{table}

\begin{table}[H]
    \centering
    \caption{Results of the estimates of the bounds of $\mathbb{E}[Y_1-Y_0|X=2,Y=2]$.
    We report the means and 95\% confidence intervals (CIs). The true value of $\mathbb{E}[Y_1-Y_0|X=2,Y=2]$ is 0.667.}
    \label{tab:results_condATE}
    \scalebox{0.9}{
    %
    }
    \\
     \vspace{1mm}
\end{table}

\newpage
\begin{table}[H]
    \centering
    \caption{Results of the estimates of the bounds of $\mathbb{E}[(Y_1-Y_0)^2]$.
    We report the means and 95\% confidence intervals (CIs). The true value of $\mathbb{E}[(Y_1-Y_0)^2]$ is 0.583.}
    \label{tab:results_2ndMoment}
    \scalebox{0.9}{
    %
    }
\end{table}

\begin{table}[H]
    \centering
    \caption{The bound of $\mathbb{P}(Y_0=0,Y_1=1,Y_2=2)$ based on Theorem 11 in \citep{ALi2024} and LP solver under the setting in Table \ref{tab: Li-Pearl_comparison_settings}. The true value of $\mathbb{P}(Y_0=0,Y_1=1,Y_2=2)$ is 0.214.
    }
    \label{tab: Li-Pearl_is_not_tight}
    \scalebox{0.9}{
    %
    }
\end{table}

\newpage
\begin{table}[H]
    \centering
    \caption{Results of the estimates of the bounds of $\mathbb{P}(Y_0=0,Y_1=0,Y_2=1)$ under  assumption $L \leq \mathbb{P}(Y_0\leq Y_1\leq Y_2)$ and $N=1000$.
    We report the means and 95\% confidence intervals (CIs).
    The true value of $\mathbb{P}(Y_0=0,Y_1=0,Y_2=1)$ is 0.092.
    }
    \label{tab:results_theta_exp}
    \scalebox{0.9}{
    %
    }
\end{table}

\begin{figure}[H]
\centering
\input{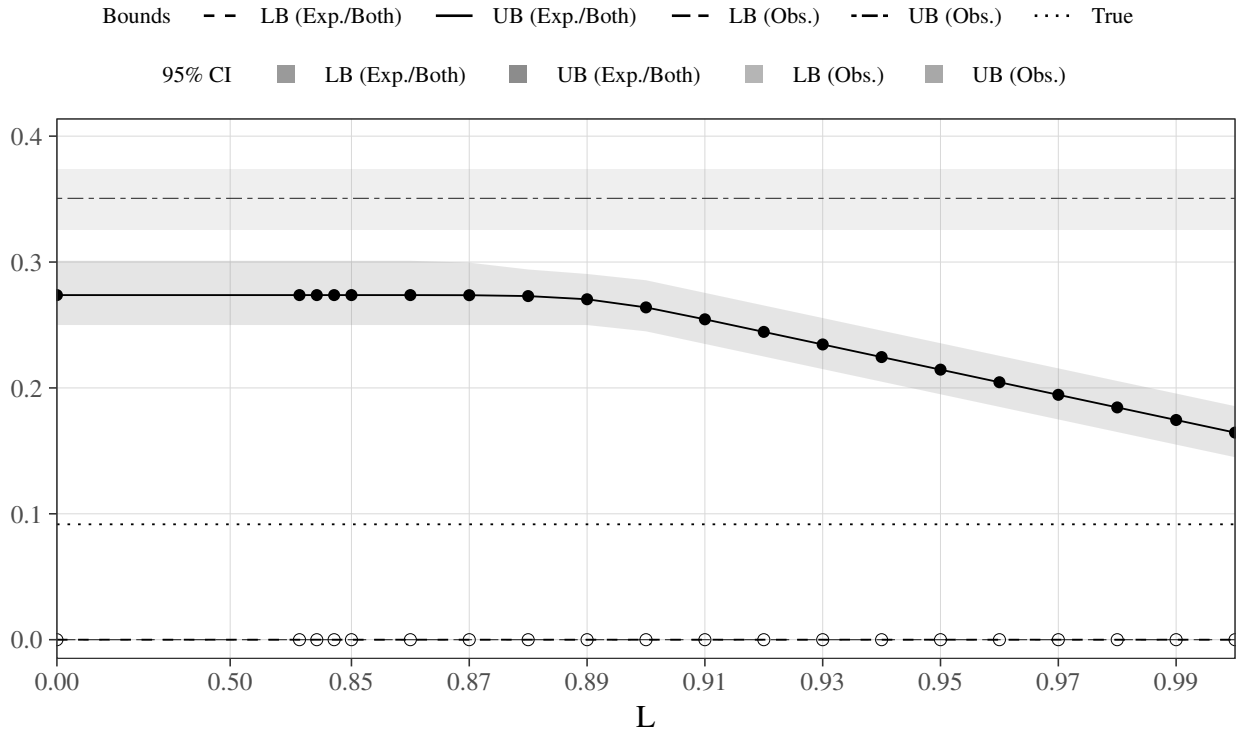}
\caption{Results of the estimates of the bounds of $\mathbb{P}(Y_0=0,Y_1=0,Y_2=1)$ as $L$ varies. The x-axis shows the value of $L$, and the y-axis shows the mean estimates of the bounds of $\mathbb{P}(Y_0=0,Y_1=0,Y_2=1)$.}
\label{fig:placeholder}
\end{figure}

\newpage
\begin{table}[H]
\centering
\caption{Results of the estimates of the bounds of $\mathbb{E}[Y_1-Y_0|X=2,Y=2]$ under  assumption $L \leq \mathbb{P}(Y_0\leq Y_1\leq Y_2)$ and $N=1000$.
We report the means and 95\% confidence intervals (CIs).
The true value of $\mathbb{E}[Y_1-Y_0|X=2,Y=2]$ is 0.667.
}
\label{tab:results_theta_exp2}
\scalebox{0.9}{
\begin{tabular}{cc|cc}
\hline
Data & $L$ & LB & UB \\
\hline\hline
Obs & (All) & -2.000 [-2.000, -2.000] & 2.000 [2.000, 2.000] \\
\hline
Both & 0.00 & -1.509 [-1.850, -1.267] & 2.000 [2.000, 2.000] \\
Both & 0.70 & -1.509 [-1.850, -1.267] & 2.000 [2.000, 2.000] \\
Both & 0.75 & -1.509 [-1.850, -1.267] & 2.000 [2.000, 2.000] \\
Both & 0.80 & -1.509 [-1.850, -1.267] & 2.000 [2.000, 2.000] \\
Both & 0.85 & -1.509 [-1.850, -1.267] & 2.000 [2.000, 2.000] \\
Both & 0.86 & -1.509 [-1.850, -1.267] & 2.000 [2.000, 2.000] \\
Both & 0.87 & -1.509 [-1.850, -1.267] & 2.000 [2.000, 2.000] \\
Both & 0.88 & -1.508 [-1.850, -1.255] & 2.000 [2.000, 2.000] \\
Both & 0.89 & -1.503 [-1.850, -1.216] & 2.000 [2.000, 2.000] \\
Both & 0.90 & -1.481 [-1.821, -1.174] & 2.000 [2.000, 2.000] \\
Both & 0.91 & -1.414 [-1.794, -1.112] & 2.000 [2.000, 2.000] \\
Both & 0.92 & -1.318 [-1.696, -1.020] & 2.000 [2.000, 2.000] \\
Both & 0.93 & -1.210 [-1.565, -0.922] & 2.000 [2.000, 2.000] \\
Both & 0.94 & -1.095 [-1.356, -0.825] & 2.000 [2.000, 2.000] \\
Both & 0.95 & -0.956 [-1.160, -0.726] & 2.000 [2.000, 2.000] \\
Both & 0.96 & -0.792 [-0.958, -0.631] & 1.999 [2.000, 2.000] \\
Both & 0.97 & -0.605 [-0.728, -0.515] & 1.997 [1.977, 2.000] \\
Both & 0.98 & -0.405 [-0.485, -0.351] & 1.995 [1.930, 2.000] \\
Both & 0.99 & -0.203 [-0.243, -0.175] & 1.988 [1.836, 2.000] \\
Both & 1.00 & 0.000 [0.000, 0.000] & 1.971 [1.743, 2.000] \\
\hline
\end{tabular}
    }
\end{table}

\newpage
\begin{figure}[H]
    \centering
    \input{E_Y1_Y0_X_2_Y_2__Nexp1000_Nobs1000_M100_brokenx_new2}
    \caption{Results of the estimates of the bounds of $\mathbb{E}[Y_1-Y_0|X=2,Y=2]$ as $L$ varies. The x-axis shows the value of $L$, and the y-axis shows the mean estimates of the bounds of $\mathbb{E}[Y_1-Y_0|X=2,Y=2]$.}
    \label{fig:placeholder2}
\end{figure}

\newpage
\begin{table}[H]
\centering
\caption{Results of the estimates of the bounds of $\mathbb{E}[(Y_1-Y_0)^2]$ under  assumption $L \leq \mathbb{P}(Y_0\leq Y_1\leq Y_2)$ and $N=1000$.
We report the means and 95\% confidence intervals (CIs).
The true value of $\mathbb{E}[(Y_1-Y_0)^2]$ is 0.583.
}
\label{tab:results_theta_2ndMoment}
\scalebox{0.9}{
\begin{tabular}{cc|cc}
\hline
Data & $L$ & LB & UB \\
\hline\hline
Exp. & 0.00 & 0.435 [0.392, 0.474] & 2.196 [2.073, 2.339] \\
Exp. & 0.70 & 0.435 [0.392, 0.474] & 2.195 [2.073, 2.331] \\
Exp. & 0.75 & 0.435 [0.392, 0.474] & 2.141 [2.003, 2.272] \\
Exp. & 0.80 & 0.435 [0.392, 0.474] & 2.041 [1.903, 2.172] \\
Exp. & 0.85 & 0.435 [0.392, 0.474] & 1.921 [1.803, 2.024] \\
Exp. & 0.86 & 0.435 [0.392, 0.474] & 1.863 [1.775, 1.952] \\
Exp. & 0.87 & 0.435 [0.392, 0.474] & 1.791 [1.698, 1.876] \\
Exp. & 0.88 & 0.435 [0.392, 0.474] & 1.713 [1.621, 1.796] \\
Exp. & 0.89 & 0.435 [0.392, 0.474] & 1.633 [1.541, 1.716] \\
Exp. & 0.90 & 0.435 [0.392, 0.474] & 1.553 [1.461, 1.636] \\
Exp. & 0.91 & 0.435 [0.392, 0.474] & 1.473 [1.381, 1.556] \\
Exp. & 0.92 & 0.435 [0.392, 0.474] & 1.393 [1.301, 1.476] \\
Exp. & 0.93 & 0.435 [0.392, 0.474] & 1.313 [1.221, 1.396] \\
Exp. & 0.94 & 0.435 [0.392, 0.474] & 1.233 [1.141, 1.316] \\
Exp. & 0.95 & 0.435 [0.392, 0.474] & 1.153 [1.061, 1.236] \\
Exp. & 0.96 & 0.435 [0.392, 0.474] & 1.073 [0.981, 1.156] \\
Exp. & 0.97 & 0.435 [0.392, 0.474] & 0.993 [0.901, 1.076] \\
Exp. & 0.98 & 0.435 [0.392, 0.474] & 0.913 [0.821, 0.996] \\
Exp. & 0.99 & 0.435 [0.392, 0.474] & 0.833 [0.741, 0.916] \\
Exp. & 1.00 & 0.435 [0.392, 0.474] & 0.753 [0.661, 0.836] \\
\hline
\end{tabular}
}
\hspace{5mm}
\scalebox{0.9}{
\begin{tabular}{cc|cc}
\hline
Data & $L$ & LB & UB \\
\hline\hline
Obs. & 0.00 & 0.000 [0.000, 0.000] & 3.104 [3.026, 3.190] \\
Obs. & 0.70 & 0.000 [0.000, 0.000] & 2.945 [2.864, 3.030] \\
Obs. & 0.75 & 0.000 [0.000, 0.000] & 2.788 [2.691, 2.880] \\
Obs. & 0.80 & 0.000 [0.000, 0.000] & 2.593 [2.491, 2.695] \\
Obs. & 0.85 & 0.000 [0.000, 0.000] & 2.393 [2.291, 2.495] \\
Obs. & 0.86 & 0.000 [0.000, 0.000] & 2.353 [2.251, 2.455] \\
Obs. & 0.87 & 0.000 [0.000, 0.000] & 2.313 [2.211, 2.415] \\
Obs. & 0.88 & 0.000 [0.000, 0.000] & 2.273 [2.171, 2.375] \\
Obs. & 0.89 & 0.000 [0.000, 0.000] & 2.233 [2.131, 2.335] \\
Obs. & 0.90 & 0.000 [0.000, 0.000] & 2.193 [2.091, 2.295] \\
Obs. & 0.91 & 0.000 [0.000, 0.000] & 2.153 [2.051, 2.255] \\
Obs. & 0.92 & 0.000 [0.000, 0.000] & 2.113 [2.011, 2.215] \\
Obs. & 0.93 & 0.000 [0.000, 0.000] & 2.073 [1.971, 2.175] \\
Obs. & 0.94 & 0.000 [0.000, 0.000] & 2.033 [1.931, 2.135] \\
Obs. & 0.95 & 0.000 [0.000, 0.000] & 1.993 [1.891, 2.095] \\
Obs. & 0.96 & 0.000 [0.000, 0.000] & 1.953 [1.851, 2.055] \\
Obs. & 0.97 & 0.000 [0.000, 0.000] & 1.913 [1.811, 2.015] \\
Obs. & 0.98 & 0.000 [0.000, 0.000] & 1.873 [1.771, 1.975] \\
Obs. & 0.99 & 0.000 [0.000, 0.000] & 1.833 [1.731, 1.935] \\
Obs. & 1.00 & 0.000 [0.000, 0.000] & 1.793 [1.691, 1.895] \\
\hline
\end{tabular}
}
\\
\vspace{5mm}
\scalebox{0.9}{
\begin{tabular}{cc|cc}
\hline
Data & $L$ & LB & UB \\
\hline\hline
Both & 0.00 & 0.435 [0.392, 0.474] & 2.196 [2.073, 2.339] \\
Both & 0.70 & 0.435 [0.392, 0.474] & 2.195 [2.073, 2.331] \\
Both & 0.75 & 0.435 [0.392, 0.474] & 2.139 [2.003, 2.265] \\
Both & 0.80 & 0.435 [0.392, 0.474] & 2.039 [1.903, 2.165] \\
Both & 0.85 & 0.435 [0.392, 0.474] & 1.918 [1.803, 2.003] \\
Both & 0.86 & 0.435 [0.392, 0.474] & 1.863 [1.775, 1.948] \\
Both & 0.87 & 0.435 [0.392, 0.474] & 1.791 [1.698, 1.876] \\
Both & 0.88 & 0.435 [0.392, 0.474] & 1.713 [1.621, 1.796] \\
Both & 0.89 & 0.435 [0.392, 0.474] & 1.633 [1.541, 1.716] \\
Both & 0.90 & 0.435 [0.392, 0.474] & 1.553 [1.461, 1.636] \\
Both & 0.91 & 0.435 [0.392, 0.474] & 1.473 [1.381, 1.556] \\
Both & 0.92 & 0.435 [0.392, 0.474] & 1.393 [1.301, 1.476] \\
Both & 0.93 & 0.435 [0.392, 0.474] & 1.313 [1.221, 1.396] \\
Both & 0.94 & 0.435 [0.392, 0.474] & 1.233 [1.141, 1.316] \\
Both & 0.95 & 0.435 [0.392, 0.474] & 1.153 [1.061, 1.236] \\
Both & 0.96 & 0.435 [0.392, 0.474] & 1.073 [0.981, 1.156] \\
Both & 0.97 & 0.435 [0.392, 0.474] & 0.993 [0.901, 1.076] \\
Both & 0.98 & 0.435 [0.392, 0.474] & 0.913 [0.821, 0.996] \\
Both & 0.99 & 0.435 [0.392, 0.474] & 0.833 [0.741, 0.916] \\
Both & 1.00 & 0.435 [0.392, 0.474] & 0.753 [0.661, 0.836] \\
\hline
\end{tabular}
    }
\end{table}

\newpage
\begin{figure}[H]
    \centering
    \input{E_Y1_Y0_2__Nexp1000_Nobs1000_M100_brokenx_ALLcases}
    \caption{Results of the estimates of the bounds of $\mathbb{E}[(Y_1-Y_0)^2]$ as $L$ varies. The x-axis shows the value of $L$, and the y-axis shows the mean estimates of the bounds of $\mathbb{E}[(Y_1-Y_0)^2]$.}
    \label{fig:EY1Y0^2}
\end{figure}

\newpage
\begin{table}[H]
    \centering
    \caption{Results of the estimates of bounds of $\mathbb{P}(Y_0=0,Y_1=0,Y_2=1)$ under Assumption (vi) and $N=1000$ with experimental data / with both experimental and observational data.
    The true value of $\mathbb{P}(Y_0=0,Y_1=0,Y_2=1)$ is 0.133.
    }
    \label{tab:L1L2L3_prob}
    \scalebox{0.85}{


    }
    \\
    \vspace{2mm}
    \scalebox{0.9}{
       Obs.:\quad  LB: 0.000 [0.000,0.000]\quad UB: 0.350 [0.322, 0.380]
       }
\end{table}

\newpage
\begin{table}[H]
    \centering
    \caption{Results of the estimates of bounds of $\mathbb{E}[Y_1-Y_0|X=2,Y=2]$ under Assumption (vi) and $N=1000$ with observational data.
    The true value of $\mathbb{E}[Y_1-Y_0|X=2,Y=2]$ is 0.333.
    }
    \label{tab:L1L2L3_expect_obs}
    \scalebox{0.85}{
    %
    }
\end{table}

\newpage
\begin{table}[H]
    \centering
    \caption{Results of the estimates of bounds of $\mathbb{E}[Y_1-Y_0|X=2,Y=2]$ under Assumption (vi) and $N=1000$ with both experimental and observational data.
    The true value of $\mathbb{E}[Y_1-Y_0|X=2,Y=2]$ is 0.333.
    }
    \label{tab:L1L2L3_expect_both}
    \scalebox{0.85}{
    %
    }
    
\end{table}

\newpage
\begin{table}[H]
    \centering
    \caption{Results of the estimates of bounds of $\mathbb{E}[(Y_1-Y_0)^2]$ under Assumption (vi) and $N=1000$ with experimental data / with both experimental and observational data.
    The true value of $\mathbb{E}[(Y_1-Y_0)^2]$ is 0.267.
    }
    \label{tab:L1L2L3_2moment_expboth}
    \scalebox{0.85}{
    %
    }
    
\end{table}

\newpage
\begin{table}[H]
    \centering
    \caption{Results of the estimates of bounds of $\mathbb{E}[(Y_1-Y_0)^2]$ under Assumption (vi) and $N=1000$ with observational data.
    The true value of $\mathbb{E}[(Y_1-Y_0)^2]$ is 0.267.
    }
    \label{tab:L1L2L3_2moment_obs}
    \scalebox{0.85}{
    %
    }
    
\end{table}

\newpage
\subsection{Under Identification Assumptions}
\label{appC2}

Next, we illustrate the properties of the point estimator under an identification assumption.
There are no baselines because no other identification assumptions exist for the joint POs/OVs probabilities  with discrete treatments and outcomes.

{\bf Settings.}
We set the values of ${\cal P}$ as presented in Table \ref{tab:p(y0y1y2xy)_exo}.
For this purpose, we impose the following identification assumptions.
\begin{enumerate}[label=(\roman*).]
  \setlength\itemsep{2pt}
  \setlength\parskip{0pt}
  \setlength\parsep{0pt}
\item[(I).]  $0\leq Y_s-Y_t\leq 1$ almost surely for any $s>t$ (Assumption \ref{MITE_D}) for experimental data,
\item[(II).]  $0\leq Y_s-Y_t\leq 1$ almost surely for any $s>t$ (Assumption \ref{MITE_D}) and the exogeneity (Assumption \ref{exo}) for observational data.
\end{enumerate}
The setting in Table \ref{tab:p(y0y1y2xy)_exo} satisfies the above identification assumptions.

{\bf Results.}
Table \ref{tab:iden_results} (resp. \ref{tab:iden_results2}, \ref{tab:iden_results_2ndmoment}) reports the estimates of the values of $\mathbb{P}(Y_0=0, Y_1=0, Y_2=1)$ (resp. $\mathbb{E}[Y_1-Y_0|X=2,Y=2]$, $\mathbb{E}[(Y_1-Y_0)^2]$) with sample size $N=10, 100, 1000, 10000$.
All estimators converge to the ground truth, and their 95\% CIs become tighter as the sample size increases.
When the sample size is relatively large (e.g., $N=1000$ or $N=10000$), the estimates are reliable.

\newpage
\begin{table}[H]
    \centering
    \caption{Setting of Simulations}
    \label{tab:p(y0y1y2xy)_exo}
    {\small
    \begin{tabular}{c|ccc|ccc|ccc}
    \hline
        $X=0,Y=0$ & \multicolumn{3}{c|}{$Y_0=0$} & \multicolumn{3}{c|}{$Y_0=1$} & \multicolumn{3}{c}{$Y_0=2$}\\
        \hline
          & $Y_1=0$ & $Y_1=1$ & $Y_1=2$ & $Y_1=0$ & $Y_1=1$ & $Y_1=2$ & $Y_1=0$ & $Y_1=1$ & $Y_1=2$ \\
         \hline
        $Y_2=0$ & 1/21 & 0 & 0 & 0 & 0 & 0 & 0 & 0 & 0  \\ 
        $Y_2=1$ & 1/21 & 1/21 & 0 & 0 & 0 & 0 & 0 & 0 & 0 \\
        $Y_2=2$ & 0 & 0 & 0 & 0 & 0 & 0 & 0 & 0 & 0  \\
        \hline
    \end{tabular}   
    \\
     \vspace{1mm}
    \begin{tabular}{c|ccc|ccc|ccc}
    \hline
        $X=0,Y=1$ & \multicolumn{3}{c|}{$Y_0=0$} & \multicolumn{3}{c|}{$Y_0=1$} & \multicolumn{3}{c}{$Y_0=2$}\\
        \hline
          & $Y_1=0$ & $Y_1=1$ & $Y_1=2$ & $Y_1=0$ & $Y_1=1$ & $Y_1=2$ & $Y_1=0$ & $Y_1=1$ & $Y_1=2$ \\
         \hline
        $Y_2=0$ & 0 & 0 & 0 & 0 & 0 & 0 & 0 & 0 & 0  \\ 
        $Y_2=1$ & 0 & 0 & 0 & 0 & 1/21 & 0 & 0 & 0 & 0 \\
        $Y_2=2$ & 0 & 0 & 0 & 0 & 1/21 & 1/21 & 0 & 0 & 0  \\
        \hline
    \end{tabular}
    \\
     \vspace{1mm}
    \begin{tabular}{c|ccc|ccc|ccc}
    \hline
        $X=0,Y=2$ & \multicolumn{3}{c|}{$Y_0=0$} & \multicolumn{3}{c|}{$Y_0=1$} & \multicolumn{3}{c}{$Y_0=2$}\\
        \hline
          & $Y_1=0$ & $Y_1=1$ & $Y_1=2$ & $Y_1=0$ & $Y_1=1$ & $Y_1=2$ & $Y_1=0$ & $Y_1=1$ & $Y_1=2$ \\
         \hline
        $Y_2=0$ & 0 & 0 & 0 & 0 & 0 & 0 & 0 & 0 & 0  \\ 
        $Y_2=1$ & 0 & 0 & 0 & 0 & 0 & 0 & 0 & 0 & 0 \\
        $Y_2=2$ & 0 & 0 & 0 & 0 & 0 & 0 & 0 & 0 & 1/21  \\
        \hline
    \end{tabular}
    \\
     \vspace{1mm}
    \begin{tabular}{c|ccc|ccc|ccc}
    \hline
        $X=1,Y=0$ & \multicolumn{3}{c|}{$Y_0=0$} & \multicolumn{3}{c|}{$Y_0=1$} & \multicolumn{3}{c}{$Y_0=2$}\\
        \hline
          & $Y_1=0$ & $Y_1=1$ & $Y_1=2$ & $Y_1=0$ & $Y_1=1$ & $Y_1=2$ & $Y_1=0$ & $Y_1=1$ & $Y_1=2$ \\
         \hline
        $Y_2=0$ & 1/21 & 0 & 0 & 0 & 0 & 0 & 0 & 0 & 0  \\ 
        $Y_2=1$ & 1/21 & 0 & 0 & 0 & 0 & 0 & 0 & 0 & 0 \\
        $Y_2=2$ & 0 & 0 & 0 & 0 & 0 & 0 & 0 & 0 & 0  \\
        \hline
    \end{tabular}
    \\
     \vspace{1mm}
    \begin{tabular}{c|ccc|ccc|ccc}
    \hline
        $X=1,Y=1$ & \multicolumn{3}{c|}{$Y_0=0$} & \multicolumn{3}{c|}{$Y_0=1$} & \multicolumn{3}{c}{$Y_0=2$}\\
        \hline
          & $Y_1=0$ & $Y_1=1$ & $Y_1=2$ & $Y_1=0$ & $Y_1=1$ & $Y_1=2$ & $Y_1=0$ & $Y_1=1$ & $Y_1=2$ \\
         \hline
        $Y_2=0$ & 0 & 0 & 0 & 0 & 0 & 0 & 0 & 0 & 0  \\ 
        $Y_2=1$ & 0 & 1/21 & 0 & 0 & 1/21 & 0 & 0 & 0 & 0 \\
        $Y_2=2$ & 0 & 0 & 0 & 0 & 1/21 & 0 & 0 & 0 & 0 \\
        \hline
    \end{tabular}
    \\
     \vspace{1mm}
    \begin{tabular}{c|ccc|ccc|ccc}
    \hline
        $X=1,Y=2$ & \multicolumn{3}{c|}{$Y_0=0$} & \multicolumn{3}{c|}{$Y_0=1$} & \multicolumn{3}{c}{$Y_0=2$}\\
        \hline
          & $Y_1=0$ & $Y_1=1$ & $Y_1=2$ & $Y_1=0$ & $Y_1=1$ & $Y_1=2$ & $Y_1=0$ & $Y_1=1$ & $Y_1=2$ \\
         \hline
        $Y_2=0$ & 0 & 0 & 0 & 0 & 0 & 0 & 0 & 0 & 0  \\ 
        $Y_2=1$ & 0 & 0 & 0 & 0 & 0 & 0 & 0 & 0 & 0 \\
        $Y_2=2$ & 0 & 0 & 0 & 0 & 0 & 1/21 & 0 & 0 & 1/21  \\
        \hline
    \end{tabular}
    \\
     \vspace{1mm}
    \begin{tabular}{c|ccc|ccc|ccc}
    \hline
        $X=2,Y=0$ & \multicolumn{3}{c|}{$Y_0=0$} & \multicolumn{3}{c|}{$Y_0=1$} & \multicolumn{3}{c}{$Y_0=2$}\\
        \hline
          & $Y_1=0$ & $Y_1=1$ & $Y_1=2$ & $Y_1=0$ & $Y_1=1$ & $Y_1=2$ & $Y_1=0$ & $Y_1=1$ & $Y_1=2$ \\
         \hline
        $Y_2=0$ & 1/21 & 0 & 0 & 0 & 0 & 0 & 0 & 0 & 0  \\ 
        $Y_2=1$ & 0 & 0 & 0 & 0 & 0 & 0 & 0 & 0 & 0 \\
        $Y_2=2$ & 0 & 0 & 0 & 0 & 0 & 0 & 0 & 0 & 0  \\
        \hline
    \end{tabular}
    \\
     \vspace{1mm}
    \begin{tabular}{c|ccc|ccc|ccc}
    \hline
        $X=2,Y=1$ & \multicolumn{3}{c|}{$Y_0=0$} & \multicolumn{3}{c|}{$Y_0=1$} & \multicolumn{3}{c}{$Y_0=2$}\\
        \hline
          & $Y_1=0$ & $Y_1=1$ & $Y_1=2$ & $Y_1=0$ & $Y_1=1$ & $Y_1=2$ & $Y_1=0$ & $Y_1=1$ & $Y_1=2$ \\
         \hline
        $Y_2=0$ & 0 & 0 & 0 & 0 & 0 & 0 & 0 & 0 & 0  \\ 
        $Y_2=1$ & 1/21 & 1/21 & 0 & 0 & 1/21 & 0 & 0 & 0 & 0 \\
        $Y_2=2$ & 0 & 0 & 0 & 0 & 0 & 0 & 0 & 0 & 0  \\
        \hline
    \end{tabular}
    \\
     \vspace{1mm}
    \begin{tabular}{c|ccc|ccc|ccc}
    \hline
        $X=2,Y=2$ & \multicolumn{3}{c|}{$Y_0=0$} & \multicolumn{3}{c|}{$Y_0=1$} & \multicolumn{3}{c}{$Y_0=2$}\\
        \hline
          & $Y_1=0$ & $Y_1=1$ & $Y_1=2$ & $Y_1=0$ & $Y_1=1$ & $Y_1=2$ & $Y_1=0$ & $Y_1=1$ & $Y_1=2$ \\
         \hline
        $Y_2=0$ & 0 & 0 & 0 & 0 & 0 & 0 & 0 & 0 & 0  \\ 
        $Y_2=1$ & 0 & 0 & 0 & 0 & 0 & 0 & 0 & 0 & 0 \\
        $Y_2=2$ & 0 & 0 & 0 & 0 & 1/21 & 1/21 & 0 & 0 & 1/21  \\
        \hline
    \end{tabular}
    }
\end{table}

\begin{table}[H]
    \centering
    \caption{Results of the estimates of $\mathbb{P}(Y_0=0,Y_1=0,Y_2=1)$ under identification assumptions.
    We report the means and 95\% confidence intervals (CIs).
    The ground truth of $\mathbb{P}(Y_0=0,Y_1=0,Y_2=1)$ is 0.143.
    }
    \label{tab:iden_results}
    \scalebox{0.9}{
    \begin{tabular}{cc|cccc}
    \hline 
    MAs & Data & $N=10$ & $N=100$
    & $N=1000$
    & $N=10000$ \\
     \hline \hline
    (I) &  Exp. & 0.142 [0.000,0.353] & 0.145 [0.085,0.225] & 0.142 [0.120,0.166] & 0.143 [0.135,0.150] \\
    \hline 
    (II) &  Obs.& 0.190 [0.000,0.500] & 0.143 [0.020,0.309] & 0.152 [0.085,0.226] & 0.144 [0.128,0.157] \\
    \hline 
    \end{tabular}
    }
\end{table}

\newpage
\begin{table}[H]
    \centering
    \caption{Results of the estimates of $\mathbb{E}[Y_1-Y_0|X=2,Y=2]$ under identification assumptions.
    We report the means and 95\% confidence intervals (CIs).
    The ground truth of $\mathbb{E}[Y_1-Y_0|X=2,Y=2]$ is 0.333.
    }
    \label{tab:iden_results2}
    \scalebox{0.9}{
    \begin{tabular}{cc|cccc}
    \hline 
    MAs & Data & $N=10$ & $N=100$
    & $N=1000$
    & $N=10000$ \\
     \hline \hline
    (II)  &  Obs. & 0.431 [0.000,1.000] & 0.332 [0.042,0.700] & 0.321 [0.202,0.465] & 0.334 [0.288,0.382] \\
    \hline 
    \end{tabular}
    }
\end{table}

\begin{table}[H]
    \centering
    \caption{Results of the estimates of $\mathbb{E}[(Y_1-Y_0)^2]$ under identification assumptions.
    We report the means and 95\% confidence intervals (CIs).
    The ground truth of $\mathbb{E}[(Y_1-Y_0)^2]$ is 0.286.
    }
    \label{tab:iden_results_2ndmoment}
    \scalebox{0.9}{
    \begin{tabular}{cc|cccc}
    \hline 
    MAs & Data & $N=10$ & $N=100$
    & $N=1000$
    & $N=10000$ \\
     \hline \hline
    (I) &  Exp. & 0.313 [0.000,0.600] & 0.279 [0.200,0.350] & 0.288 [0.264,0.314] & 0.285 [0.278,0.294] \\
    \hline 
    (II) &  Obs.& 0.395 [0.000,1.000] & 0.303 [0.077,0.561] & 0.285 [0.169,0.388] & 0.282 [0.238,0.319] \\
    \hline 
    \end{tabular}
    }
\end{table}

\end{document}